\def\negskip{\vspace{-0.5pt}}
\let\emptyset\varnothing
\newcommand{\myomit}[1]{{}}
\newcommand\numberthis{\addtocounter{equation}{1}\tag{\theequation}}
\def\red{\textcolor{red}}
\def\abs#1{\ensuremath{\lvert #1\rvert}} 
\newcommand{\ceil}[1]{\ensuremath{\big\lceil #1 \big\rceil}}
\newcommand{\nat}{\mathbb N} 
\newcommand{\rat}{{\mathbb Q}}
\newcommand{\real}{{\mathbb R}}
\newcommand{\zed}{{\mathbb Z}}
\newcommand{\tuple}[1]{\langle #1 \rangle}
\newcommand{\G}{{\mathcal G}}
\newcommand{\val}{\mathit{val}}
\newcommand{\true}{\mathsf{true}}
\newcommand{\false}{\mathsf{false}}
\newcommand{\starts}{\mathsf{start}}
\def\ends{\mathsf{end}}
\newcommand{\Supp}{\mathsf{Supp}}
\newcommand{\E}{{\mathbb E}}
\newcommand{\downpoint}{\mathsf{downpoint}}
\DeclareMathOperator{\argmin}{argmin}
\DeclareMathOperator{\argmax}{argmax}
\newcommand{\BestPaths}{\mathsf{BestPaths}}
\newcommand{\ExistsPositivePath}{\mathsf{ExistsPositivePath}}
\newenvironment{longversion}{}{}
\newenvironment{shortversion}{}{}
\newtheorem{theorem}{Theorem}
\newtheorem{lemma}{Lemma}
\newtheorem{example}{Example}
\newtheorem{remark}{Remark}
\newtheorem{mydefi}{Definition}
\begin{document}
%\excludecomment{longversion}
\excludecomment{shortversion}

\title{{\bf Graph Planning with Expected Finite Horizon}}

\author{
Krishnendu Chatterjee$^\dag$ \quad  Laurent Doyen$^{\S}$ \\ 
\normalsize
 $\strut^\dag$ IST Austria \quad $\strut^\S$ CNRS \& LSV, ENS Paris-Saclay, France      
}
%\institute{}

\date{}
\maketitle

\begin{abstract}
Graph planning gives rise to fundamental algorithmic questions such as shortest
path, traveling salesman problem, etc. 
A classical problem in discrete planning is to consider a weighted graph and 
construct a path that maximizes the  sum of weights for a given time horizon~$T$.
However, in many scenarios, the time horizon is not fixed, but the stopping 
time is chosen according to some distribution such that the expected stopping time 
is~$T$. 
If the stopping time distribution is not known, then to ensure robustness, the 
distribution is chosen by an adversary, to represent the worst-case scenario.
 
A stationary plan for every vertex always chooses the same outgoing edge.
For fixed horizon or fixed stopping-time distribution, stationary plans are 
not sufficient for optimality. 
Quite surprisingly we show that when an adversary chooses the stopping-time 
distribution with expected stopping time~$T$, then stationary plans are sufficient.
While computing optimal stationary plans for fixed horizon is NP-complete,
we show that computing optimal stationary plans under adversarial stopping-time 
distribution can be achieved in polynomial time.
Consequently, our polynomial-time algorithm for adversarial stopping time also computes 
an optimal plan among all possible plans.
\end{abstract}

%% comma separated list
%%\keywords{Graph planning, Finite horizon, Expected stopping time.}  %% \keywords are mandatory in final camera-ready submission

\maketitle

\section{Introduction}

\newcolumntype{C}[1]{>{\centering\let\newline\\\arraybackslash\hspace{0pt}}m{#1}}

\noindent{\em Graph search algorithms.}
Reasoning about graphs is a fundamental problem in computer science, 
which is studied widely in logic (such as to describe graph properties
with logic~\cite{GradelBook,CourcelleBook}) and artificial intelligence~\cite{AIBook,LaValle}.
Graph search/planning algorithms are at the heart of such analysis, and 
gives rise to some of the most important algorithmic problems in computer 
science, such as shortest path, travelling salesman problem (TSP), etc.

\smallskip\noindent{\em Finite-horizon planning.} 
A classical problem in graph planning is the {\em finite-horizon} 
planning problem~\cite{LaValle}, where the input is a directed graph with weights 
assigned to every edge and a time horizon $T$. 
The weight of an edge represents the reward/cost of the edge.
A {\em plan} is an infinite path, and for finite horizon $T$ the utility of the plan 
is the sum of the weights of the first $T$ edges. 
An {\em optimal} plan maximizes the utility.
The computational problem for finite-horizon planning is to compute the optimal 
utility and an optimal plan.
The finite-horizon planning problem has many applications: 
the qualitative version of the problem corresponds to finite-horizon reachability,
which plays an important role in logic and verification (e.g., bounded until in RTCTL, 
and bounded model-checking~\cite{EMSS92,BCCSZ03});
and the more general quantitative problem of optimizing the sum of rewards 
has applications in artificial intelligence and robotics~\cite[Chapter~10, Chapter~25]{AIBook},
and in control theory and game theory~\cite[Chapter~2.2]{FV97},~\cite[Chapter~6]{OR94}.

\smallskip\noindent{\em Solutions for finite-horizon planning.} 
For finite-horizon planning the classical solution approach is dynamic programming 
(or Bellman equations), which corresponds to backward induction~\cite{Howard,FV97}. 
This approach not only works for graphs, but also for other models 
(e.g., Markov decision processes~\cite{PT87}). 
A {\em stationary} plan is a path where for every vertex always the same 
choice of edge is made. 
For finite-horizon planning, stationary plans are not sufficient for 
optimality, and in general, optimal plans are quite involved, and 
represented as transducers optimal plans require storage proportional to at least $T$ (see Example~\ref{ex:memory-fixed-T}).
Since in general optimal plans are involved, a related computational question is to compute 
effective simple plans, i.e., plans that are optimal among stationary plans.

\smallskip\noindent{\em Expected finite-horizon planning.} 
A natural variant of the finite-horizon planning problem is to consider 
expected time horizon, instead of the fixed time horizon. 
In the finite-horizon problem the allowed stopping time of the planning problem 
is a Dirac distribution at time $T$. 
In expected finite-horizon problem the {\em expected} stopping time is $T$. 
A well-known example where the fixed finite-horizon and the expected 
finite-horizon problems are fundamentally different is playing Prisoner's 
Dilemma: if the time horizon is fixed, then defection is the only dominant 
strategy, whereas for expected finite-horizon problem cooperation is 
feasible~\cite[Chapter~5]{Nowak}.
Another classical example that is very well-studied is the notion of 
{\em discounting}, where at each time step the stopping probability is 
$\lambda$, and this corresponds to the case that the expected stopping time is 
$1/\lambda$~\cite{FV97}.

\smallskip\noindent{\em Specified vs. adversarial distribution.}
For the expected finite-horizon problem there are two variants: 
(a)~{\em specified distribution:} the stopping-time distribution is specified; 
and (b)~{\em adversarial distribution:} the stopping-time distribution is 
unknown and decided by an adversary. 
The expected finite-horizon problem with adversarial distribution represents 
the robust version of the planning problem, where the distribution is unknown 
and the adversary represents the worst-case scenario.
Thus this problem presents the robust extension of the classical finite-horizon 
planning that has a wide range of applications.

\smallskip\noindent{\em Results.} 
In this work we consider the expected finite-horizon planning problems in graphs. 
To the best of our knowledge this problem has not been studied in the literature.
\begin{itemize}
\item Our first simple result is that for the specified distribution problem, the 
optimal value can be computed in polynomial time (Theorem~\ref{thm:fixed-distribution}).
However, since the specified distribution generalizes the fixed finite-horizon problem,
the optimal plan description as an explicit transducer is of size $T$. 
Hence the output complexity is not polynomial in general.
Second, we consider the decision problem whether there is a stationary plan to ensure 
a given utility. We show that this problem is NP-complete (Theorem~\ref{thm:hard}). 
\end{itemize}
Our most interesting and surprising results are for the adversarial 
distribution problem, which we describe below:
\begin{compactitem}
\item We show that stationary plans suffice for optimality (Theorem~\ref{thm:plans}). 
This result is surprising and counter-intuitive. 
Both in the classical finite-horizon problem and the specified distribution 
problem the adversary does not have any choice, and in both cases stationary 
plans do not suffice for optimality. 
Surprisingly we show that in the presence of an adversary the simpler class of 
stationary plans suffices for optimality.

\item For the expected finite-horizon problem with adversarial distribution, 
the backward induction approach does not work, as there is no a-priori bound 
on the stopping time. 
We develop new algorithmic ideas to show that the optimal value can still be 
solved in polynomial time (Theorem~\ref{thm:algo}).
Moreover, our algorithm also computes and outputs an optimal stationary plan 
in polynomial time.
Note that our algorithm also computes stationary optimal plans (which are as 
well optimal among all plans) in polynomial time, whereas computing 
stationary optimal plans for fixed finite horizon is NP-complete.
\end{compactitem}
Our results are summarized in Table~\ref{tab:complexity} and are 
relevant for synthesis of robust plans for expected finite-horizon planning.
\begin{shortversion}
Detailed proofs are available in the fuller version attached.
\end{shortversion}

\begin{table*}[!t]
\begin{center}
\begin{tabular}{l|C{26mm}C{26mm}|C{18mm}C{18mm}}
     {\Large \strut}          & arbitrary & stationary        & arbitrary & stationary    \\
\hline   %\midrule
Fixed horizon {\Large \strut} & PTIME     & {\bf NP-complete} & $O(T)$    & $O(\abs{V})$  \\
\hline   %\midrule
Expected horizon {\Large \strut} &  \multicolumn{2}{c|}{{\bf PTIME}} & \multicolumn{2}{c}{$\mathbf{O(\abs{V})}$} \\
%\hline
\end{tabular}
\bigskip
%% \begin{tabular}{|l|}
%% \multicolumn{1}{l}{\phantom{horizon}} \\
%% \hline
%% Fixed horizon \\
%% \hline
%% Expected horizon \\
%% \hline
%% \end{tabular}
%% \begin{tabular}{|C{26mm}|C{26mm}|}
%% \hline
%% arbitrary & stationary \\
%% \hline
%% PTIME & {\bf NP-complete} \\
%% \hline
%%  \multicolumn{2}{|c|}{{\bf PTIME}} \\
%% \hline
%% \end{tabular}
%% \begin{tabular}{|C{18mm}|C{18mm}|}
%% \hline
%%  arbitrary & stationary \\
%% \hline
%% $O(T)$ & $O(\abs{V})$ \\
%% \hline
%% \multicolumn{2}{|c|}{$\mathbf{O(\abs{V})}$} \\
%% \hline
%% \end{tabular}
\caption{Computational complexity (left) and plan complexity (right).
New results in boldface.\label{tab:complexity}}
\end{center}
\end{table*}

\section{Preliminaries}\label{sec:prelim}

\noindent{\em Weighted graphs.}
A \emph{weighted graph} $G = \tuple{V,E,w}$ consists of a finite set $V$ 
of vertices, a set $E \subseteq V \times V$ of edges, and
a function $w\colon E \to \zed$ that assigns a weight to each edge
of the graph. 
%We require that for all $v \in V$, there exists $v' \in V$ such that $(v,v') \in E$.

\smallskip\noindent{\em Plans and utilities.}
A {\em plan} is an infinite \emph{path} in $G$ from a vertex $v_0$, that is a sequence $\rho = e_0 e_1 \dots$ of edges
$e_i = (v_i, v'_i) \in E$ such that $v'_i = v_{i+1}$ for all $i \geq 0$. 
A path induces the sequence of utilities $u_0, u_1, \dots$
where $u_i = \sum_{0\leq k \leq i} w(e_k)$ for all $i \geq 0$.
We denote by $U_G$ the set of all sequences of utilities induced by the paths
of $G$. For finite paths $\rho = e_0 e_1 \dots e_k$ (i.e., finite prefixes 
of paths), we denote by $\starts(\rho) = v_0$ and $\ends(\rho) = v'_{k}$
the initial and last vertex of $\rho$, and by $\abs{\rho} = k+1$ the length of $\rho$.

\smallskip\noindent{\em Plans as transducers.}
A plan is described by a {\em transducer} (Mealy machine or Moore machine~\cite{HU79}) 
that given a prefix of the path (i.e., a finite sequence of edges) chooses the next edge. 
A {\em stationary} plan is a path where for every vertex the same choice of edge is made
always. 
A stationary plan as a Mealy machine has one state, and as a Moore machine has at most
$\abs{V}$ states.
Given a graph $G$ we denote by $S_G$ the set of all sequences of utilities induced by stationary plans in $G$.

%We say that $\rho$ is a path from $v_0$ to $v_k$. 
%Given two vertices $s,t \in V$, we denote by $\Paths(s,t)$ the set of all paths
%from $s$ to $t$ in $G$ (we assume that the graph $G$ is clear from the context). 
%A \emph{prefix} of $\rho$ is a sequence $\rho = v_0 v_1 \dots v_j$ where $j \leq k$.
%We denote by $\Pref(\rho)$ the set of all prefixes of $\rho$.

\smallskip\noindent{\em Distributions and stopping times.}
A \emph{sub-distribution} is a function $\delta \colon \nat \to [0,1]$
such that $p_{\delta} = \sum_{t \in \nat} \delta(t) \in (0,1]$. The value $p_{\delta}$
is the probability mass of $\delta$. Note that $p_{\delta} \neq 0$.
The support of $\delta$ is $\Supp(\delta) = \{t \in \nat \mid \delta(t) \neq 0\}$,
and we say that $\delta$ is the sum of two sub-distributions $\delta_1$ and $\delta_2$,
written $\delta = \delta_1 + \delta_2$, if $\delta(t) = \delta_1(t) + \delta_2(t)$ for all $t \in \nat$.
A \emph{stopping-time distribution} (or simply, a distribution) is a sub-distribution 
with probability mass equal to~$1$. %that  $\sum_{t \in \nat} \delta(t) = 1$.
We denote by $\Delta$ the set of all stopping-time distributions, and by $\Delta^{\upuparrows}$
the set of all distributions $\delta$ with $\abs{\Supp(\delta)} \leq 2$,
called the bi-Dirac distributions.
%The expected time of a sub-distribution is $\E_{\delta} = \frac{1}{p_{\delta}} \cdot \sum_{t \in \nat} t \cdot \delta(t)$.

\smallskip\noindent{\em Expected utility and expected time.}
The \emph{expected utility} of a sequence $u = u_0, u_1, \dots$ of utilities under a sub-distribution $\delta$
is $\E_{\delta}(u) = \frac{1}{p_{\delta}} \cdot \sum_{t \in \nat} u_t \cdot \delta(t)$.
In particular, the expected utility of the identity sequence $0,1,2,\dots$
is called the \emph{expected time}, denoted by $\E_{\delta}$.

\section{Expected Finite-horizon: Specified Distribution}
Given a stopping-time distribution $\delta$ with finite support, we show that the optimal expected utility 
can be computed in polynomial time. 
This result is straightforward.

\begin{theorem}\label{thm:fixed-distribution}
Let $G$ be a weighted graph. Given a stopping-time distribution $\delta = \{(t_1,p_1),\dots,(t_k,p_k)\} \subseteq \nat \times \rat$,
%where $t_1 < t_2 < \dots < t_k$, $0 \leq p_i \leq 1$, and $\sum_{i} p_i = 1$,
with all numbers encoded in binary, the optimal expected utility $\sup_{u \in U_G} \E_{\delta}(u)$ 
can be computed in polynomial time.
\end{theorem}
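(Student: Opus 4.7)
The plan is to swap the order of summation in the objective so that the expected utility becomes a weighted sum of individual edge weights with piecewise-constant coefficients, and then solve the resulting constrained-length walk problems via tropical (max-plus) matrix exponentiation in order to cope with segment lengths that are exponential in the input.

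The first step rewrites
\[
\E_\delta(u) \;=\; \sum_{i=1}^k p_i \sum_{j=0}^{t_i} w(e_j) \;=\; \sum_{j \geq 0} Q_j \cdot w(e_j),
\]
where $Q_j = \sum_{i \colon t_i \geq j} p_i$ is the tail probability of the stopping time. Since $\Supp(\delta) = \{t_1 < \cdots < t_k\}$, the map $j \mapsto Q_j$ is a step function taking at most $k$ distinct nonzero values: on the $i$-th segment $(t_{i-1}, t_i]$ (with $t_0 = -1$) it equals $c_i = p_i + p_{i+1} + \cdots + p_k$, and it vanishes beyond $t_k$. Consequently, choosing an optimal plan amounts to choosing, for each segment $i$, a walk of prescribed length $L_i = t_i - t_{i-1}$ whose sum of edge weights is maximized; this sum is then scaled by $c_i$ and added across segments.

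The per-segment subproblem is the classical max-weight walk of exact length. Let $A$ be the $|V|\times|V|$ adjacency matrix of $G$ over the max-plus semiring, with $A(u,v) = w(u,v)$ if $(u,v)\in E$ and $-\infty$ otherwise. Then $(A^{L_i})(u,v)$ is exactly the maximum sum of edge weights over walks of length $L_i$ from $u$ to $v$. Using repeated squaring, the powers $A^{2^0}, A^{2^1}, \dots, A^{2^{\lceil \log T \rceil}}$ with $T = t_k$ are precomputed in $O(|V|^3 \log T)$ time, and each needed $A^{L_i}$ is then assembled from at most $\lceil \log T \rceil$ of these matrices by further max-plus multiplications. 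Because $T$ is provided in binary, $\log T$ is polynomial in the input, and since every matrix entry is bounded in absolute value by $T \cdot \max_{e}\abs{w(e)}$, all intermediate numbers have polynomial bit-length.

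Finally, the global optimum is obtained by a layered dynamic program with $k+1$ levels and $|V|$ states per level: the initial level is fixed to the start vertex $v_0$, and the arc weight from state $u$ at level $i-1$ to state $v$ at level $i$ is $c_i \cdot (A^{L_i})(u,v)$. A forward sweep computes the answer in $O(k |V|^2)$ additional time. The main obstacle the algorithm must overcome is precisely the exponentially large segment lengths $L_i$, which rule out naive backward induction over the horizon; tropical matrix exponentiation is the essential ingredient that circumvents this and yields total running time polynomial in $|V|$, $k$, and $\log T$.
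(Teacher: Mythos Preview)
Your proposal is correct and essentially identical to the paper's own proof: both swap the summation to expose the tail probabilities $c_i = p_i + \cdots + p_k$ as segment coefficients, compute the max-plus powers $A^{t_{i}-t_{i-1}}$ by repeated squaring to handle the binary-encoded segment lengths, and then solve a $k$-layer shortest-path problem whose arc weights are $c_i \cdot (A^{L_i})(u,v)$. The only cosmetic difference is that the paper packages the final step as ``optimal path of length $k$ in a layered graph $G'$'' rather than as an explicit forward DP sweep.
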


\begin{longversion}
A special case of the problem in Theorem~\ref{thm:fixed-distribution} is the 
fixed-length optimal path problem, which is to find an optimal path (that maximizes the total utility)
of fixed length $T$, corresponding to the distribution $\delta = \{(T,1)\}$.
A pseudo-polynomial time solution is known for this problem, 
based on a value-iteration algorithm~\cite[Section 2.3]{LaValle}. The algorithm
runs in time $O(T \cdot \abs{V}^2)$ (where $T$ is encoded in binary), and relies 
on the following recursive relation, where $A_t(v)$ is the optimal value among 
the paths of length $t$ that start in $v$: 
%The algorithm can be modified to obtain a polynomial-time algorithm in $O(\log(T) \cdot \abs{V}^2)$, 
%by the observation that if $A_T(v_1,v_2)$ is the total utility of an optimal paths of length $T$ from $v_1$ to $v_2$, then
$$A_t(v) = \max_{v' \in V} \  w(v,v') + A_{t-1}(v').$$

A polynomial algorithm running in $O(\log(T) \cdot \abs{V}^3)$ to obtain
$A_T(v)$ is to compute, in the max-plus algebra\footnote{In the max-plus algebra,
the matrix product $C = A \cdot B$ is defined by $C_{ij} = \max_k A_{ik} + B_{kj}$.}, 
the $T$-th power of the transition matrix $M$ of the weighted 
graph, where $M_{ij} = w(i,j)$ if $(i,j) \in E$, and $M_{ij} = -\infty$ otherwise. 
The power $M^T$ can be computed in time $O(\log(T) \cdot \abs{V}^3)$ 
by successive squaring of $M$ and summing up according to the binary representation
of $T$, which gives a polynomial algorithm to compute~$A_T(v)$ since it is the largest 
element in the column of~$M^T$ corresponding to~$v$ (note that the entries of the matrix
$M^T$ are bounded by $\abs{V} \cdot W$, where $W$ is the largest absolute weight in the graph).
We now present the proof of Theorem~\ref{thm:fixed-distribution}.

\begin{proof}[Proof of Theorem~\ref{thm:fixed-distribution}]
Given the weighted graph $G =  \tuple{V,E,w}$ and the distribution $\delta = \{(t_1,p_1),\dots,(t_k,p_k)\}$,
we reduce the problem to finding an optimal path of length $k$ in 
a layered graph $G'$ where the transitions between layer $i$ and layer $i+1$
mimic sequences of $t_{i+1} - t_i$ transitions in the original graph.
For $t \geq 2$, define the $t$-th power of $E$ recursively by 
$E^t = \{(v_0,v_2) \mid \exists v_1: (v_0,v_1) \in E \land (v_1,v_2) \in E^{t-1} \}$ 
where $E^1 = E$. Let $M$ be the transition matrix of the original weighted graph.
We construct the graph $G' =  \tuple{V',E',w'}$ where
\begin{itemize}
\item $V' = V \times \{0,\dots,k\}$,
\item $E' = \{ (\tuple{v,i},\tuple{v',i+1}) \mid (v,v') \in E^{t_{i+1} - t_i} \land 0 \leq i < k \}$ where $t_0 = -1$, and
\item $w'(\tuple{v,i},\tuple{v',i+1}) = (p_{i+1} + p_{i+2} + \dots + p_{k})\cdot (M^{t_{i+1} - t_i})_{v,v'}$.
\end{itemize}
The optimal expected utility $\sup_{u \in U_G} \E_{\delta}(u)$ is
the same as the optimal fixed-length path value for length $k$ in $G'$.
The correctness of this reduction relies on the fact that 
the probability of not stopping before time $t_{i+1}$ is $p_{i+1} + p_{i+2} + \dots + p_{k}$
and the largest utility of a path of length  $t_{i+1} - t_i$ from $v$ to $v'$
is $(M^{t_{i+1} - t_i})_{v,v'}$.
Given a path $(v_0,v_1)(v_1,v_2) \dots (v_{k-1},v_{k})$ 
of length $k$ in $G'$ (that induces a sequence $w'_0 \dots w'_{k-1}$ of weights),
we can construct a path of length $t_k + 1$ in $G$ (visiting $v_{i}$ at time $t_i$
and inducing a sequence $u$ of utilities), and we show that the value 
of the path of length $k$ in $G'$ is the same as the expected utility of 
the corresponding path in $G$ with stopping time distributed according to $\delta$,
as follows (where $u_{t_0} = 0$):
\begin{align*}
 \sum_{i=0}^{k-1}  w'_i & =
     \sum_{i=0}^{k-1}  \left(\sum_{j=i+1}^{k} p_j\right) \cdot (u_{t_{i+1}} - u_{t_i}) \\
 & = \sum_{j=1}^{k} p_j \cdot \sum_{i=0}^{j-1} (u_{t_{i+1}} - u_{t_{i}})\\
 & = \sum_{j=1}^{k} p_j \cdot u_{t_j}
\end{align*}
\begin{comment}
$$ \sum_{i=0}^{k-1}  \left(\sum_{j=i+1}^{k} p_j\right) \cdot (u_{t_{i+1}} - u_{t_i}) 
= \sum_{j=1}^{k} p_j \cdot \sum_{i=0}^{j-1} (u_{t_{i+1}} - u_{t_{i}})=  \sum_{j=1}^{k} p_j \cdot u_{t_j}$$
\end{comment}
Conversely, given an arbitrary path in $G$, let $v_{i}$ be the vertex visited at time $t_i$,
and consider the path $(\tuple{v_0,0},\tuple{v_1,1})(\tuple{v_1,1},\tuple{v_2,2}) \dots (\tuple{v_{k-1},k-1},\tuple{v_{k},k})$
in $G'$, which has a total utility at least the same as the expected utility of the given path in $G$. 

Therefore, the problem can be solved by finding the optimal fixed-length path 
value for length $k$ in $G'$, which can be done in polynomial time (see the
remark after Theorem~\ref{thm:fixed-distribution}).
%\qed
\end{proof}
\end{longversion}

In the fixed-horizon problem with $\delta=\{(T,1)\}$, the optimal plan need not be stationary.
The example below shows that in general the transducer for optimal plans 
require $O(T/\abs{V})$ states as Mealy machine, and $O(T)$ states as Moore machine. 
%%\begin{shortversion}
%%Also see fuller version for other examples related to plan complexity. 
%%\end{shortversion}

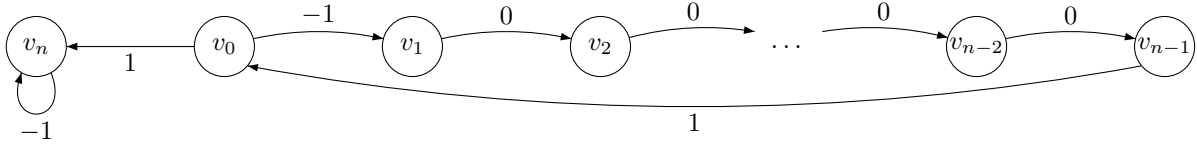
\begin{figure*}[!tb]
  \begin{center}
    \hrule
      %\renewcommand{\sb}[1]{\scalebox{0.75}[1]{#1}}

%{\scriptsize 
\begin{picture}(160,24)(0,0)
%\put(0,0){\framebox(160,24){}}

%\gasset{Nw=9,Nh=9,Nmr=4.5,rdist=1, loopdiam=6}
%\gasset{Nw=10,Nh=6,Nmr=3,rdist=1, loopdiam=5}
\gasset{Nw=8,Nh=8,Nmr=4, rdist=1, loopdiam=5}
%\gasset{Nw=5,Nh=5,Nmr=2.5,rdist=1, loopdiam=6, linewidth=0.12}

\node[Nmarks=n, Nframe=y](v0)(30,15){$v_0$}
\node[Nmarks=n, Nframe=y](v1)(55,15){$v_1$}
\node[Nmarks=n, Nframe=y](v2)(80,15){$v_2$}
\node[Nmarks=n, Nframe=n, Nw=10](v3)(105,15){$\dots$}
\node[Nmarks=n, Nframe=y](v4)(130,15){$v_{n-2}$}
\node[Nmarks=n, Nframe=y](v5)(155,15){$v_{n-1}$}

\node[Nmarks=n, Nframe=y](vn)(5,15){$v_n$}

\drawedge[ELpos=50, ELside=l, curvedepth=2](v0,v1){$-1$}
\drawedge[ELpos=50, ELside=l, curvedepth=2](v1,v2){$0$}
\drawedge[ELpos=50, ELside=l, curvedepth=2, eyo=1](v2,v3){$0$}
\drawedge[ELpos=50, ELside=l, curvedepth=2, syo=1](v3,v4){$0$}
\drawedge[ELpos=50, ELside=l, curvedepth=2](v4,v5){$0$}
\drawedge[ELpos=50, ELside=l, curvedepth=6, syo=-2, eyo=-2](v5,v0){$1$}

%\drawedge[ELpos=50, ELside=r, curvedepth=-4](v4,v0){$1$}

\drawedge[ELpos=50, ELside=l, curvedepth=0](v0,vn){$1$}
\drawloop[ELside=l,loopCW=y, loopdiam=5, loopangle=270](vn){$-1$}

%\node[Nmarks=n](n1)(30,60){$\ver_1$}
%\node[Nmarks=n, Nmr=0](n11)(15,45){$\ver_2$}
%\nodelabel[ExtNL=y, NLangle=270, NLdist=1](n111){{\small val=1}}

%\drawedge[ELpos=50, ELside=r, curvedepth=0, AHLength = 2, AHangle=25, AHlength = 1.81](n1,n11){$-1$}
%\drawedge[ELpos=50, ELside=r, curvedepth=0, AHnb=0, linewidth=.6](n1,n11dummy){}

%\drawedge[ELpos=50, ELside=l, curvedepth=-12](t3,n1){}
%\drawbpedge(t3,70,22,n1,110,22){}
%\drawline[AHnb=1,arcradius=1](113,17.5)(113,29)(5,29)(5,17.5)

%\drawloop[ELside=l,loopCW=y, loopdiam=6](n4){$1$}

%\drawloop[ELside=l,loopCW=y](nk){$0,1$}

%\drawedge[dash={1}0](n3bis,nkbis){$0,1$}

\end{picture}
%}
    \hrule
      \caption{A weighted graph (with $n+1$ vertices) where the optimal path (of length $T = k \cdot n + 1$)
is not simple: at $v_0$, the optimal plan chooses $k$ times the edge $(v_0,v_1)$, and then the edge $(v_0,v_n)$. \label{fig:cycle-memory}}
  \end{center}
\end{figure*}

\begin{example}\label{ex:memory-fixed-T}
Consider the graph of \figurename~\ref{fig:cycle-memory}
with $\abs{V} = n+1$ vertices, and time bound $T = k \cdot n + 1$ (for some constant $k$).
The optimal plan from $v_0$ is to repeat $k$ times the cycle $v_0,v_1,\dots,v_{n-1}$
and then switch to $v_n$. This path has value $1$, and all other paths have lower
value: if only the cycle $v_0,v_1,\dots,v_{n-1}$ is used, then the value is at most $0$,
and the same holds if the cycle on $v_n$ is ever used before time $T$.
The optimal plan can be represented by a Mealy machine of size $O(T / \abs{V})$
that counts the number of cycle repetitions before switching to $v_n$.
A Moore machine requires size $T$ as it needs a new memory state at every step
of the plan.
\end{example}

%%\begin{longversion}
\begin{example}\label{ex:memory-fixed-T-multiple-cycles}
In the example of \figurename~\ref{fig:mult-cycle-memory} the optimal plan needs 
to visit several different cycles, not just repeating a single cycle and possible 
switching only at the end. The graph consists of three loops on $v_0$ 
with weights~$0$ and respective length $6$, $10$, and $15$, and an edge to $v_1$
with weight $1$. For expected time $T = 6 + 10 + 15 + 1$, the optimal plan has value $1$ and
needs to stop exactly when reaching $v_1$ (to avoid the negative self-loop on $v_1$).
It is easy to show that the remaining length $T-1 = 31$ can only be obtained by visiting
each cycle once: as $31$ is not an even number, the path has to visit a cycle
of odd length, thus the cycle of length $15$; analogously, as $31$ is not a
multiple of $3$, the path has to visit the cycle of length $10$, etc.
This example can be easily generalized to an arbitrary number of cycles by using
more prime numbers.
\end{example}
%%\end{longversion}

%%\begin{longversion}
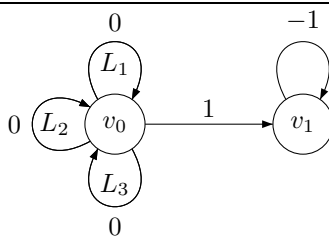
\begin{figure}[!tb]
  \begin{center}
    \hrule
      %\renewcommand{\sb}[1]{\scalebox{0.75}[1]{#1}}

%{\scriptsize 
\begin{picture}(40,32)(0,0)
%\put(0,0){\framebox(40,32){}}

%\gasset{Nw=9,Nh=9,Nmr=4.5,rdist=1, loopdiam=6}
%\gasset{Nw=10,Nh=6,Nmr=3,rdist=1, loopdiam=5}
\gasset{Nw=8,Nh=8,Nmr=4, rdist=1, loopdiam=5}
%\gasset{Nw=5,Nh=5,Nmr=2.5,rdist=1, loopdiam=6, linewidth=0.12}

\node[Nmarks=n, Nframe=y](v0)(11,16){$v_0$}
\node[Nmarks=n, Nframe=y](v1)(36,16){$v_1$}

\drawedge[ELpos=50, ELside=l, curvedepth=0](v0,v1){$1$}
%\drawedge[ELpos=50, ELside=r, curvedepth=-4](v4,v0){$1$}

\gasset{loopdiam=7, ELdist=1.5}
\drawloop[ELside=l,loopCW=y, loopangle=90](v0){$0$}
\drawloop[ELside=l,loopCW=y, loopangle=180](v0){$0$}
\drawloop[ELside=l,loopCW=y, loopangle=270](v0){$0$}
\drawloop[ELside=r,loopCW=y, loopangle=90](v0){$L_1$}
\drawloop[ELside=r,loopCW=y, ELdist=1, loopangle=180](v0){$L_2$}
\drawloop[ELside=r,loopCW=y, loopangle=270](v0){$L_3$}

\drawloop[ELside=l,loopCW=y, loopangle=90](v1){$-1$}

%\node[Nmarks=n](n1)(30,60){$\ver_1$}
%\node[Nmarks=n, Nmr=0](n11)(15,45){$\ver_2$}
%\nodelabel[ExtNL=y, NLangle=270, NLdist=1](n111){{\small val=1}}

%\drawedge[ELpos=50, ELside=r, curvedepth=0, AHLength = 2, AHangle=25, AHlength = 1.81](n1,n11){$-1$}
%\drawedge[ELpos=50, ELside=r, curvedepth=0, AHnb=0, linewidth=.6](n1,n11dummy){}

%\drawedge[ELpos=50, ELside=l, curvedepth=-12](t3,n1){}
%\drawbpedge(t3,70,22,n1,110,22){}
%\drawline[AHnb=1,arcradius=1](113,17.5)(113,29)(5,29)(5,17.5)

%\drawloop[ELside=l,loopCW=y, loopdiam=6](n4){$1$}

%\drawloop[ELside=l,loopCW=y](nk){$0,1$}

%\drawedge[dash={1}0](n3bis,nkbis){$0,1$}

\end{picture}
%}
    \hrule
      \caption{Three loops of respective length $L_1 = 6 = 2\cdot 3$, $L_2 = 10 = 2\cdot 5$, and $L_3 = 15 = 3 \cdot 5$.
For $T= 32 = 6 + 10 +15 +1$, the optimal plan needs to visit each cycle once. \label{fig:mult-cycle-memory}}
  \end{center}
\end{figure}
%%\end{longversion}

We now consider the complexity of computing optimal plans among stationary plans.

\begin{theorem}\label{thm:hard}
Let $G$ be a weighted graph and $\lambda$ be a rational utility threshold. 
Given a stopping-time distribution $\delta$, whether $\sup_{u \in S_G} \E_{\delta}(u) \geq \lambda$
(i.e., whether there is a stationary plan with utility at least $\lambda$) is NP-complete.
The NP-hardness holds for the fixed-horizon problem $\delta=\{(T,1)\}$, even when $T$ 
and all weights are in $O(\abs{V})$, and thus expressed in unary.
\end{theorem}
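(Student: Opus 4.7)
The plan is to split the argument into an NP membership check and an NP-hardness reduction for the fixed-horizon variant with unary-encoded weights.

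For NP membership I would use a stationary plan $\sigma\colon V \to V$ as a polynomial-size certificate, since $\sigma$ is fully described by one outgoing edge per vertex. Given $\sigma$, the trajectory from $v_0$ is a deterministic lasso whose tail and cycle each have length at most $|V|$, and it can be extracted in polynomial time. For each $t_i \in \Supp(\delta)$, the utility $u_{t_i}$ is computable in polynomial time even when $t_i$ is encoded in binary, by decomposing the first $t_i{+}1$ edges into the tail, an integer number of full cycle revolutions, and a partial cycle prefix. Summing $p_i \cdot u_{t_i}$ then yields $\E_\delta(u_\sigma)$, which can be compared with $\lambda$ in polynomial time.

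For NP-hardness I would reduce from Directed Hamiltonian Cycle through a distinguished vertex $v_0$, which remains strongly NP-complete. Given $H = (V_H, E_H)$ with $|V_H| = n$, and assuming $n \geq 3$ (else decide directly), build $G = \langle V, E, w\rangle$ with $V = V_H \cup \{v^*\}$: include each $H$-edge with weight $+1$, add a ``reward'' edge $(v, v^*)$ of weight $+n$ whenever $(v, v_0) \in E_H$, and give $v^*$ a self-loop of weight $0$. Set the horizon $T = n$ and the threshold $\lambda = 2n - 1$. All weights have magnitude at most $n$, and $T = O(|V|)$, so the instance meets the unary-size restriction stated in the theorem.

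The correctness claim is that $H$ admits a Hamiltonian cycle through $v_0$ iff some stationary plan in $G$ achieves $u_T \geq \lambda$. The easy direction is immediate: a Hamiltonian cycle $v_0 \to v_1 \to \dots \to v_{n-1} \to v_0$ induces the plan that follows the $H$-edges along the cycle for $n{-}1$ steps and then takes the reward edge at $v_{n-1}$, producing $u_T = (n{-}1)\cdot 1 + n + 0 = 2n - 1$. The main obstacle lies in the converse, which rests on what I would call a ``stationarity forces distinctness'' argument. Any stationary plan either (a) keeps the trajectory inside $V_H$ forever, in which case every traversed edge has weight $+1$ and $u_T = n + 1 < 2n - 1$ since $n \geq 3$; or (b) uses the reward edge for the first time at some vertex $v_k$ reached after $k$ consecutive $H$-edges, giving $u_T = k + n$. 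The key observation for case (b) is that $v_0, v_1, \dots, v_k$ must be pairwise distinct, because if $v_j = v_k$ with $j < k$ then stationarity would demand the plan at this shared vertex both to use its $H$-successor (witnessed at step $j$) and to use the reward edge (witnessed at step $k$), a contradiction. Hence $k \leq n-1$ and $u_T \leq 2n-1$, with equality only when $v_0, v_1, \dots, v_{n-1}$ enumerates $V_H$ along $H$-edges with $(v_{n-1}, v_0) \in E_H$, i.e., precisely when a Hamiltonian cycle through $v_0$ exists.
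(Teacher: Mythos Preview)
Your proof is correct. The NP membership argument matches the paper's (and you give more detail on handling binary-encoded support points via the lasso structure, which the paper leaves implicit). The NP-hardness argument, however, proceeds by a genuinely different reduction.

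The paper reduces from the problem of deciding whether a directed graph contains a \emph{simple cycle through two specified vertices} $v_1,v_2$ (itself obtained from the two-disjoint-paths problem of Fortune--Hopcroft--Wyllie). It adds a $\mathsf{start}$ and a $\mathsf{sink}$ vertex, puts weight~$1$ on the out-edges of $v_2$, weight~$n{+}1$ on the edge $(v_1,\mathsf{sink})$, and weight~$0$ elsewhere, with $T=n{+}1$ and $\lambda=n{+}2$. The key point is that a stationary plan of value $n{+}2$ must visit $v_2$ and then $v_1$ along a simple path and then cash in the $(v_1,\mathsf{sink})$ edge. Your reduction instead goes directly from Directed Hamiltonian Cycle: you add a single absorbing sink $v^*$, give each $H$-edge weight~$1$, give a weight-$n$ ``reward'' edge into $v^*$ from every $v$ with $(v,v_0)\in E_H$, and set $T=n$, $\lambda=2n{-}1$. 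Your ``stationarity forces distinctness'' argument plays the same role as the paper's simple-cycle reasoning: a stationary plan that eventually takes a reward edge must have traced out a simple path in $H$ up to that point. Your route is slightly more elementary in that it appeals to a textbook NP-complete problem rather than the disjoint-paths result, and the gadget is a bit leaner (one extra vertex instead of two); the paper's route has the mild advantage that the source problem already speaks of simple cycles, so the correctness argument is shorter.

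One small presentational point: your stated justification for pairwise distinctness in case~(b) literally only rules out $v_j=v_k$ with $j<k$. Full pairwise distinctness of $v_0,\dots,v_k$ follows immediately (e.g., any repetition among $v_0,\dots,v_{k-1}$ would, by stationarity, force the trajectory into a cycle inside $V_H$ and prevent it from ever reaching $v^*$), but you may want to say this explicitly.
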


%%\begin{longversion}
\begin{proof}
The NP upper bound is easily obtained by guessing a stationary plan (i.e., one 
edge for each vertex of the graph) and checking that the value of the induced
path is at least $\lambda$.

The NP hardness follows from a result of~\cite{FHW80} where, given a directed graph $\G$
and four vertices $w,x,y,z$, the problem of deciding the existence of two (vertex) disjoint 
simple paths (one from $w$ to $x$ and the other from $y$ to $z$) is shown to be NP-complete. 
It easily follows that given a directed graph, and two vertices $v_1, v_2$, the problem of 
deciding the existence of a simple cycle that contains $v_1$ and $v_2$ is NP-complete.
We present a reduction from the latter problem, illustrated in \figurename~\ref{fig:np-hardness}.
We construct a weighted graph from $\G$, by adding two vertices {\sf start} and {\sf sink},
and all edges have weight $0$ except those from $v_2$ with weight $1$, and
the edge $(v_1,{\sf sink})$ with weight $n+1$ where $n$ is the number of vertices in $\G$.
Let $T = n+1$ and the utility threshold $\lambda = n+2$.

If there exists a simple cycle containing $v_1$ and $v_2$ in $\G$, then
there exists a stationary plan from {\sf start} that visits $v_2$ then $v_1$
in at most $n$ steps. This plan can be prolonged to a plan of $n+1$
steps by going to {\sf sink} and using the self-loop. The total weight is $n+2 = \lambda$.

If there is no simple cycle containing $v_1$ and $v_2$ in $\G$, then
no stationary plan can visit first $v_2$ then $v_1$. We show that every stationary plan 
has value at most $n+1 < \lambda$. 
First if a stationary plan uses the edge $(v_1,{\sf sink})$, then
$v_2$ is not visited and all weights are $0$ except 
the weight $n+1$ from $v_1$ to {\sf sink}. 
Otherwise, if a stationary plan does not use the edge $(v_1,{\sf sink})$, then 
all weights are at most $1$, and the total utility is at most $n+1$.
In both cases, the utility is smaller than $\lambda$,
which establishes the correctness of the reduction. 
%\qed
\end{proof}

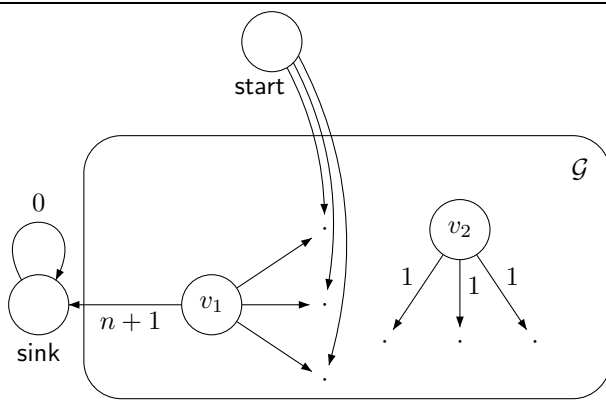
\begin{figure}[!tb]
  \begin{center}
    \hrule
      %\renewcommand{\sb}[1]{\scalebox{0.75}[1]{#1}}

%{\scriptsize 
\begin{picture}(80,55)(0,0)
%\put(0,0){\framebox(80,55){}}

%\gasset{Nw=9,Nh=9,Nmr=4.5,rdist=1, loopdiam=6}
%\gasset{Nw=10,Nh=6,Nmr=3,rdist=1, loopdiam=5}
\gasset{Nw=8,Nh=8,Nmr=4, rdist=1, loopdiam=5}
%\gasset{Nw=5,Nh=5,Nmr=2.5,rdist=1, loopdiam=6, linewidth=0.12}

\node[Nmarks=n, Nframe=y, Nh=35, Nw=70, Nmr=5](v0)(45,20){}
\node[Nmarks=n, Nframe=n](v1)(76,33){$\G$}

\node[Nmarks=n, Nframe=y](v1)(27,15){$v_1$}
\node[Nmarks=n, Nframe=y](v2)(60,25){$v_2$}
\node[Nmarks=n, Nframe=y, ExtNL=y, NLangle=255, NLdist=1](start)(35,50){{\sf start}}
\node[Nmarks=n, Nframe=y, ExtNL=y, NLangle=270, NLdist=1](sink)(4,15){{\sf sink}}

\gasset{Nw=4,Nh=4,Nmr=2}

\node[Nmarks=n, Nframe=n](v11)(42,25){$\cdot$}
\node[Nmarks=n, Nframe=n](v12)(42,15){$\cdot$}
\node[Nmarks=n, Nframe=n](v13)(42,5){$\cdot$}

\node[Nmarks=n, Nframe=n](v21)(50,10){$\cdot$}
\node[Nmarks=n, Nframe=n](v22)(60,10){$\cdot$}
\node[Nmarks=n, Nframe=n](v23)(70,10){$\cdot$}

\drawedge[ELpos=50, ELside=l, curvedepth=0](v1,v11){}
\drawedge[ELpos=50, ELside=l, curvedepth=0](v1,v12){}
\drawedge[ELpos=50, ELside=l, curvedepth=0](v1,v13){}

\drawedge[ELpos=50, ELside=r, curvedepth=0](v2,v21){$1$}
\drawedge[ELpos=50, ELside=l, curvedepth=0](v2,v22){$1$}
\drawedge[ELpos=50, ELside=l, curvedepth=0](v2,v23){$1$}

\drawedge[ELpos=47, ELside=l, curvedepth=0](v1,sink){$n+1$}

\drawedge[ELpos=50, ELside=l, curvedepth=2](start,v11){}
\drawedge[ELpos=50, ELside=l, syo=2, curvedepth=4](start,v12){}
\drawedge[ELpos=50, ELside=l, syo=4, curvedepth=6](start,v13){}

%\drawedge[ELpos=50, ELside=r, curvedepth=-4](v4,v0){$1$}

\gasset{loopdiam=7, ELdist=1.5}
\drawloop[ELside=l,loopCW=y, loopangle=90](sink){$0$}

%\node[Nmarks=n](n1)(30,60){$\ver_1$}
%\node[Nmarks=n, Nmr=0](n11)(15,45){$\ver_2$}
%\nodelabel[ExtNL=y, NLangle=270, NLdist=1](n111){{\small val=1}}

%\drawedge[ELpos=50, ELside=r, curvedepth=0, AHLength = 2, AHangle=25, AHlength = 1.81](n1,n11){$-1$}
%\drawedge[ELpos=50, ELside=r, curvedepth=0, AHnb=0, linewidth=.6](n1,n11dummy){}

%\drawedge[ELpos=50, ELside=l, curvedepth=-12](t3,n1){}
%\drawbpedge(t3,70,22,n1,110,22){}
%\drawline[AHnb=1,arcradius=1](113,17.5)(113,29)(5,29)(5,17.5)

%\drawloop[ELside=l,loopCW=y, loopdiam=6](n4){$1$}

%\drawloop[ELside=l,loopCW=y](nk){$0,1$}

%\drawedge[dash={1}0](n3bis,nkbis){$0,1$}

\end{picture}
%}
    \hrule
      \caption{The NP-hardness reduction of Theorem~\ref{thm:hard}. \label{fig:np-hardness}}
  \end{center}
\end{figure}
%%\end{longversion}

\section{Expected Finite-horizon: Adversarial Distribution}
We now consider the computation of the following \emph{optimal} values under adversarial distribution.
Given a weighted graph~$G$ and an expected stopping time $T \in \rat$, we define the following:
\begin{compactitem}
\item {\em Optimal values of plans.} For a plan $\rho$ that induces the sequence $u$ of utilities,
let 
$$\val(\rho,T)=\val(u,T) = \inf_{{\delta \in \Delta: \E_{\delta} = T}} \  \E_{\delta}(u).$$

\item {\em Optimal value.} The optimal value is the supremum value over all plans:
$$\val(G,T) = \sup_{u \in U_G} \val(u,T).$$
\end{compactitem}
Our two main results are related to the plan complexity and a polynomial-time algorithm.

\begin{theorem}\label{thm:plans}
For all weighted graphs $G$ and for all $T$ we have 
$$
\val(G,T) = \sup_{u \in U_G} \val(u,T) = \sup_{u \in S_G} \val(u,T),$$ 
i.e., optimal stationary plans exist for expected finite-horizon under adversarial distribution.
\end{theorem}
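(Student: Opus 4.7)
The plan is to reformulate $\val(u,T)$ as the supremum of affine minorants at $T$ (equivalently, the lower convex envelope of $\{(t,u_t)\}$ evaluated at $T$), swap suprema in the definition of $\val(G,T)$, and reduce the remaining feasibility condition to a one-player energy-safety problem on a re-weighted graph, for which stationary plans classically suffice.

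First I would establish the dual reformulation. The adversary's LP $\inf_{\delta\ge 0}\sum_{t\in\nat}u_t\,\delta(t)$ subject to $\sum_t\delta(t)=1$ and $\sum_t t\,\delta(t)=T$ reduces, by a standard basic-feasible-solution argument (only two equality constraints), to an infimum over bi-Dirac distributions $\delta\in\Delta^{\upuparrows}$ with support $\{t_1,t_2\}$ and $t_1\le T\le t_2$. Writing out this bi-Dirac objective yields the chord value $L(T)$, where $L$ is the line through $(t_1,u_{t_1})$ and $(t_2,u_{t_2})$; optimising over such chords produces the lower convex envelope of $\{(t,u_t)\}$ at $T$, which equals the supremum of affine minorants:
\[
\val(u,T)=\sup\bigl\{a+bT : a,b\in\real,\ a+bt\le u_t\text{ for all }t\in\nat\bigr\}.
\]

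Next I would plug this into $\val(G,T)=\sup_{u\in U_G}\val(u,T)$ and exchange the two suprema to get $\val(G,T)=\sup_{(a,b)\in\real^2}\{a+bT:\exists u\in U_G,\ u_t\ge a+bt\ \forall t\}$, with the analogous identity holding for $S_G$ in place of $U_G$. Since $S_G\subseteq U_G$, the theorem reduces to the claim that every pair $(a,b)$ witnessed by some plan in $U_G$ is already witnessed by a stationary plan. Re-weighting each edge by $w'(e)=w(e)-b$ converts ``$u_t\ge a+bt\ \forall t$'' into ``$\sum_{i<t}w'(e_i)\ge a$ for all $t\ge 0$'' along the plan (forcing $a\le 0$, since the empty sum equals $0$), which is precisely the one-player energy-safety question on $(V,E,w')$ from $v_0$ with floor $a$. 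Invoking the classical memoryless-suffices result for one-player energy safety then supplies the desired stationary witness, finishing the proof.

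The main obstacle is this last classical step. Although memoryless-suffices is folklore in energy games, for rigour I would give a short self-contained proof: compute the per-vertex maximum floor $E^\ast(v)=\sup_{\rho}\inf_{t\ge 0}\sum_{i<t}w'(e_i)$ as the greatest fixpoint of $E^\ast(v)=\max_{(v,v')\in E}\min\{0,\,w'(v,v')+E^\ast(v')\}$ over the finite vertex set, and let the stationary plan choose, at each vertex with $E^\ast(v)>-\infty$, any outgoing edge realising the maximum. The delicate point is verifying that this stationary plan attains $E^\ast(v_0)$ exactly rather than merely approaching it; this is clean because the Bellman iteration stabilises in at most $\abs{V}$ rounds on the finite state space, and the greedy edge choice is a correct certificate at each vertex.
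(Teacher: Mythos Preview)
Your approach is correct and takes a genuinely different route from the paper. The paper works path-by-path: given an arbitrary path $\rho$ with supporting line $f_u$ of slope $\nu$, it iteratively removes simple cycles of average weight $<\nu$ (``bad'' cycles) from the front of $\rho$'s cycle decomposition until the first remaining cycle $C$ has average weight $\geq \nu$, then replaces $\rho$ by the simple lasso $A\cdot C^\omega$; each operation keeps the utility sequence above $f_u$, hence does not decrease $\val(\cdot,T)$. Your route instead dualises first (the lower-envelope formula is exactly what the paper derives as its Lemmas on bi-Dirac sufficiency and geometric interpretation), swaps the two suprema to reduce the theorem to a per-slope feasibility question, and recognises that question as one-player energy safety on the $b$-shifted graph. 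This is cleaner and more modular, and as a bonus the outer supremum over $S_G$ is a finite maximum, so attainment comes for free. The price is that the ``memoryless suffices'' step for energy safety still has to be proved, and when unpacked via cycle manipulation it is essentially the paper's good-cycle/bad-cycle argument specialised to slope $b$.

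Your proposed justification of that last step contains one genuine error: the Bellman iteration $E^{(n+1)}(v)=\max_{(v,v')\in E}\min\{0,\,w'(v,v')+E^{(n)}(v')\}$ does \emph{not} stabilise in $\abs{V}$ rounds---a single vertex with a $(-1)$-self-loop gives $E^{(n)}=-n$ forever. What does work is to bypass the iteration entirely: the true value $E^*(v)=\sup_\rho\inf_t s_t$ satisfies the Bellman equation directly (one-step unfolding plus monotonicity of $x\mapsto\min\{0,c+x\}$ lets you pull the inner $\sup_{\rho'}$ through), and since the outer $\max$ ranges over finitely many successors, a greedy edge exists at every $v$ with $E^*(v)>-\infty$. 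A short induction then gives $s_n+E^*(v_n)\geq E^*(v_0)$ along the greedy stationary path, hence $s_n\geq E^*(v_0)$ using $E^*(v_n)\leq 0$, which is exactly the certificate you need.
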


\begin{remark}
Note that in contrast to fixed finite-horizon problem, where stationary plans do not
suffice, we show in the presence of an adversary, the simpler class of stationary plans 
are sufficient for optimality in expected finite-horizon.
Moreover, while optimal plans require $O(T/\abs{V})$-size Mealy (resp., $O(T)$-size Moore)
machines for fixed-length plans, our results show that under adversarial distribution 
optimal plans require $O(1)$-size Mealy (resp., $O(\abs{V})$-size Moore) machines.
\end{remark}

\begin{theorem}\label{thm:algo}
Given a weighted graph $G$ and expected finite-horizon $T$, 
whether $\val(G,T) \geq 0$ can be decided in $O(\abs{V}^{16} \cdot \log (T))$ time, 
and computing $\val(G,T)$ can be done in $O(\abs{V}^{16} \cdot \log(W \cdot \abs{V}) \cdot \log (T))$ time. 
\end{theorem}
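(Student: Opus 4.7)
The plan is to combine Theorem~\ref{thm:plans}, which lets me restrict attention to stationary plans, with a structural analysis of the adversary's best response. First, for any utility sequence $u$ the adversary solves the linear program $\inf \sum_t u_t\,\delta(t)$ subject to $\sum_t \delta(t)=1$ and $\sum_t t\cdot\delta(t)=T$; since this LP has only two equality constraints, an optimum is always attained at a bi-Dirac $\delta \in \Delta^{\upuparrows}$. Geometrically, $\val(u,T)$ is the value at $x=T$ of the lower convex hull of the planar point set $\{(t,u_t) : t \in \nat\}$, obtained as the linear interpolation between the two hull points bracketing $T$.

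Next, I would analyze the shape of this point set when $u$ is induced by a stationary plan $\sigma$. The trajectory from the start vertex consists of a tail of length at most $\abs{V}$ followed by a simple cycle $C$ of length $\ell \leq \abs{V}$ and total weight $W$, with mean slope $\mu = W/\ell$. Once the cycle is entered at time $t_0$, the points $(t_0+k\ell+r,\ u_{t_0}+kW+m_r)$ for $0 \leq r < \ell$ and $k \geq 0$ lie on $\ell$ parallel lines of slope $\mu$, so the lower convex hull consists of finitely many tail and first-period points together with an infinite ray of slope $\mu$ whose intercept is controlled by $\min_r(m_r-\mu r)$. This gives a closed-form expression for $\val(u,T)$ in terms of only a handful of parameters: the tail utilities, $\mu$, $\ell$, and the minimum intracycle offset.

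The algorithm will then decide $\val(G,T) \geq 0$ and recover $\val(G,T)$ by binary searching over rational thresholds of bit-length $O(\log(W\cdot\abs{V}))$, which accounts for the $\log(W\cdot\abs{V})$ factor. For a fixed threshold $\lambda$, the decision procedure asks whether some stationary plan attains the linear inequalities extracted above. Because there are exponentially many stationary plans, I would avoid enumerating them and instead enumerate short ``witness'' tuples of vertices: the locations realizing the left and right chord endpoints, the entry vertex of the cycle, and the cycle vertex attaining the minimum offset. For each witness tuple, the existence of a consistent stationary plan reduces to reachability and mean-payoff-style subproblems on the $\abs{V}\times\abs{V}$ transition matrix, each solvable in polynomial time; the $T$-dependence appears only in the chord's linear evaluation and in the cycle-repetition count, and is handled by the max-plus matrix squaring used in the proof of Theorem~\ref{thm:fixed-distribution}, which contributes the $\log T$ factor.

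The main obstacle lies in this third step: turning the combinatorial description of optimal stationary plans into a polynomial-time decision procedure. The relevant cycle is not a free cycle of $G$ but the cyclic part of a consistent functional map $\sigma \colon V \to V$ that the start vertex actually reaches, so cycle enumeration must be coupled with a reachability constraint. In addition, the adversary's optimal chord may fall into several qualitatively different cases (two tail points, a tail-and-cycle pair, two cycle points on the same parallel line or on different ones, or limiting configurations as the right endpoint tends to infinity), each imposing its own linear relation among the parameters and requiring a separate witness enumeration. The $\abs{V}^{16}$ bound arises from the product of these witness enumerations and the cost of the underlying subproblems, and verifying that the case analysis is exhaustive while still fitting within these bounds is the technical heart of the proof.
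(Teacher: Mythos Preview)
Your structural analysis is correct and matches the paper: bi-Dirac distributions suffice (the paper's Lemma~\ref{lem:Bi-Dirac}), the value of $u$ is the lower convex hull evaluated at $T$ (Lemma~\ref{lem:geometric-interpretation}), and for a simple lasso the hull is determined by the first $\abs{V}$ points together with an asymptotic ray of slope $M_C$. The binary search over rational thresholds for $\val(G,T)$ is also what the paper does.

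The gap is in your third step. You propose to enumerate witness tuples and, for each, check whether a \emph{consistent stationary plan} exists via ``reachability and mean-payoff-style subproblems.'' But a stationary plan is a functional graph $\sigma\colon V\to V$; requiring its trajectory from $v_0$ to hit a prescribed vertex at a prescribed time with a prescribed partial sum, and then enter a prescribed cycle of prescribed mean, amounts to asking for a simple path and a simple cycle satisfying joint weight constraints. This is exactly the flavor of constraint that makes the stationary-plan problem of Theorem~\ref{thm:hard} NP-hard, and you give no argument why your particular constraint combination escapes that hardness. Your case analysis may well be exhaustive, but the per-case subproblem is not obviously polynomial. Also, max-plus squaring is a red herring here: once simple lassos suffice, every relevant finite path has length at most $\abs{V}$, so no path of length $\Theta(T)$ is ever examined; the $\log T$ in the statement is just the bit-cost of arithmetic on the slope constraints, which involve $T$.

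The paper sidesteps the stationary-plan hardness entirely. Its decision procedure never enforces simplicity; instead it runs a dynamic program ($\BestPaths$) over \emph{all} paths of length up to $\abs{V}$, recording at each $(t,v)$ a set of pairs $\tuple{u,\psi}$ where $u$ is the running sum and $\psi$ is the interval constraint $\bigwedge_i u_i\geq M\cdot(i-T)$ on the unknown slope $M$. Only $\succeq$-maximal pairs (larger $u$, weaker $\psi$) are kept. The nontrivial complexity lemma (Lemma~\ref{lem:downpoints}) shows that any two $\succeq$-incomparable pairs must have distinct ``down-points'' $\tuple{t_L,v_L,t_R,v_R}$, so each table entry holds at most $\abs{V}^4$ pairs; this is the source of the $\abs{V}^{16}$ bound. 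Theorem~\ref{thm:plans} is used only to bound the search depth to $\abs{V}$, not to impose simplicity during the search, and correctness for the returned (possibly non-simple) lasso comes from Lemma~\ref{lem:constraint-cycle}.
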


\subsection{Theorem~\ref{thm:plans}: Plan Complexity}
In this section we prove Theorem~\ref{thm:plans}. We start with the notion of sub-distributions.
Two sub-distributions $\delta,\delta'$ are \emph{equivalent} if they have the
same probability mass, and the same expected time, that is $p_{\delta} = p_{\delta'}$
and $\E_{\delta} = \E_{\delta'}$. The following result is straightforward.

\begin{lemma}\label{lem:eq-sub-dist}
If $\delta_1,\delta_1'$ are equivalent sub-distributions, and
$\delta_1 + \delta_2$ is a sub-distribution, 
then $\delta_1 + \delta_2$ and $\delta_1' + \delta_2$ are equivalent sub-distributions.
\end{lemma}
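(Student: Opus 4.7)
The plan is to unfold the definitions of probability mass and expected time, exploit their linearity in the sub-distribution, and verify the two required equalities directly. Before doing so, I would briefly observe that $\delta_1' + \delta_2$ is a well-defined sub-distribution: its mass equals $p_{\delta_1'} + p_{\delta_2} = p_{\delta_1} + p_{\delta_2} = p_{\delta_1 + \delta_2} \in (0,1]$, so the sum lies in the required range.

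The first step is the mass equality. Since $p$ is additive in the summands, the computation above already yields $p_{\delta_1 + \delta_2} = p_{\delta_1' + \delta_2}$.

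The second step is the expected-time equality. For any sub-distribution $\delta$, write $S_\delta = \sum_{t \in \nat} t \cdot \delta(t)$, so that $\E_\delta = S_\delta / p_\delta$. The hypothesis $\E_{\delta_1} = \E_{\delta_1'}$ combined with $p_{\delta_1} = p_{\delta_1'}$ gives $S_{\delta_1} = S_{\delta_1'}$. Because $S$ is also additive in the summands (by linearity of the sum over $t$), I obtain
\[
S_{\delta_1 + \delta_2} = S_{\delta_1} + S_{\delta_2} = S_{\delta_1'} + S_{\delta_2} = S_{\delta_1' + \delta_2}.
\]
Dividing by the common mass $p_{\delta_1 + \delta_2} = p_{\delta_1' + \delta_2}$ yields $\E_{\delta_1 + \delta_2} = \E_{\delta_1' + \delta_2}$, which together with the mass equality establishes equivalence.

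There is no real obstacle here; the only mild subtlety is to keep in mind that $\E_\delta$ is defined for sub-distributions with a normalizing factor $1/p_\delta$, so one must pass through the unnormalized moment $S_\delta$ rather than directly manipulate $\E_\delta$ as if it were linear in $\delta$. Once that is set up, the lemma reduces to two one-line linearity arguments.
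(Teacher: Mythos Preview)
Your proof is correct and complete. The paper does not actually give a proof of this lemma; it simply introduces it with ``The following result is straightforward'' and moves on, so your argument---checking additivity of the mass and of the unnormalized first moment $S_\delta$, then dividing---is exactly the intended verification, just spelled out in full.
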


\subsubsection{Bi-Dirac distributions are sufficient}\label{sec:Bi-Dirac-sufficient}
By Lemma~\ref{lem:eq-sub-dist}, we can decompose distributions as the
sum of two sub-distributions, and we can replace one of the two sub-distributions 
by a simpler (yet equivalent) one to obtain an equivalent distribution.
We show that, given a sequence $u$ of utilities, 
for all sub-distributions with three points $t_1,t_2,t_3$ in their support
(see \figurename~\ref{fig:timeline}), 
there exists an equivalent sub-distribution with only two points in its support
that gives a lower expected value for $u$. Intuitively,
if one has to distribute a fixed probability mass (say $1$) 
among three points with a fixed expected time $T$, assigning probability $p_i$
at point $t_i$, then we have $p_3 = 1 - p_1 - p_2$ and 
$p_1 \cdot t_1 + p_2 \cdot t_2 + p_3 \cdot t_3 = T$, i.e., 
$$\underbrace{p_1 \cdot (t_1-t_3)}_{p'_1} + \underbrace{p_2 \cdot (t_2-t_3)}_{p'_2} = T - t_3.$$ 
The expected utility is
$$ p_1 \cdot u_{t_1} + p_2 \cdot u_{t_2} + p_3 \cdot u_{t_3} =
 p'_1 \cdot \frac{u_{t_1} - u_{t_3}}{t_1 - t_3} + p'_2 \cdot \frac{u_{t_2} - u_{t_3}}{t_2 - t_3} +  u_{t_3} 
$$
which is a linear expression in variables $\{p'_1,p'_2\}$ where the sum $p'_1 + p'_2$ is constant.
Hence the least expected utility is obtained for either $p'_1 = 0$, or $p'_2 = 0$.
This is the main argument\footnote{This argument works here because $T>t_2$,
which implies that $0 \leq p_2 \leq 1$ when $p_1 = 0$, and vice versa. A symmetric
argument can be used in the case $T < t_2$, to show that then either $p_2= 0$,
or $p_3 = 0$.}
to show that bi-Dirac distributions are sufficient 
to compute the optimal expected value.

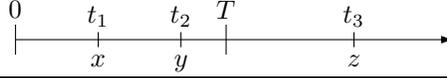
\begin{figure}[!tb]
  \begin{center}
    \hrule
      %\renewcommand{\sb}[1]{\scalebox{0.75}[1]{#1}}

%{\scriptsize 
\begin{picture}(62,12)(0,0)
%\put(0,0){\framebox(62,12){}}

%\gasset{Nw=9,Nh=9,Nmr=4.5,rdist=1, loopdiam=6}
%\gasset{Nw=10,Nh=6,Nmr=3,rdist=1, loopdiam=5}
\gasset{Nw=6,Nh=6,Nmr=3, rdist=1, loopdiam=5}
%\gasset{Nw=5,Nh=5,Nmr=2.5,rdist=1, loopdiam=6, linewidth=0.12}

\drawline[AHnb=1,arcradius=1](2,5)(60,5)

\drawline[AHnb=0,arcradius=1](2,3)(2,7)

\drawline[AHnb=0,arcradius=1](13,4)(13,6)
\drawline[AHnb=0,arcradius=1](24,4)(24,6)
\drawline[AHnb=0,arcradius=1](30,3)(30,7)
\drawline[AHnb=0,arcradius=1](47,4)(47,6)

\node[Nmarks=n, Nframe=n](n1)(2,9){$0$}
\node[Nmarks=n, Nframe=n](n1)(13,8){$t_1$}
\nodelabel[ExtNL=y, NLangle=270, NLdist=2](n1){$x$}
\node[Nmarks=n, Nframe=n](n1)(24,8){$t_2$}
\nodelabel[ExtNL=y, NLangle=270, NLdist=2](n1){$y$}
\node[Nmarks=n, Nframe=n](n1)(30,9){$T$}
\node[Nmarks=n, Nframe=n](n1)(47,8){$t_3$}
\nodelabel[ExtNL=y, NLangle=270, NLdist=2](n1){$z$}

%\node[Nmarks=n](n1)(30,60){$\ver_1$}
%\node[Nmarks=n, Nmr=0](n11)(15,45){$\ver_2$}
%\nodelabel[ExtNL=y, NLangle=270, NLdist=1](n111){{\small val=1}}

%\drawedge[ELpos=50, ELside=r, curvedepth=0, AHLength = 2, AHangle=25, AHlength = 1.81](n1,n11){$-1$}
%\drawedge[ELpos=50, ELside=r, curvedepth=0, AHnb=0, linewidth=.6](n1,n11dummy){}

%\drawedge[ELpos=50, ELside=l, curvedepth=-12](t3,n1){}
%\drawbpedge(t3,70,22,n1,110,22){}
%\drawline[AHnb=1,arcradius=1](113,17.5)(113,29)(5,29)(5,17.5)

%\drawloop[ELside=l,loopCW=y, loopdiam=6](n4){$1$}

%\drawloop[ELside=l,loopCW=y](nk){$0,1$}

%\drawedge[dash={1}0](n3bis,nkbis){$0,1$}

\end{picture}
%}
    \hrule
      \caption{Timeline. \label{fig:timeline}}
  \end{center}
\end{figure}

\begin{lemma}[Bi-Dirac distributions are sufficient]\label{lem:Bi-Dirac}
For all sequences $u$ of utilities, for all time bounds $T$, the following holds:
%$$\inf \{\E_{\delta}(u) \mid \delta \in \Delta \land \E_{\delta} = T\} = \inf \{\E_{\delta}(u) \mid \delta \in \Delta^{\upuparrows} \land \E_{\delta} = T\},$$
\begin{align*}
&\inf \{\E_{\delta}(u) \mid \delta \in \Delta \land \E_{\delta} = T\} = \\
&\inf \{\E_{\delta}(u) \mid \delta \in \Delta^{\upuparrows} \land \E_{\delta} = T\},
\end{align*}
i.e., the set $\Delta^{\upuparrows}$ of bi-Dirac distributions suffices for the adversary.
\end{lemma}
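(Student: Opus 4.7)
The $\geq$ inequality is immediate since $\Delta^{\upuparrows} \subseteq \Delta$. For the reverse, the plan is to show that for every $\delta \in \Delta$ with $\E_\delta = T$ and every $\epsilon > 0$, there is a bi-Dirac $\delta^* \in \Delta^{\upuparrows}$ with $\E_{\delta^*} = T$ and $\E_{\delta^*}(u) \leq \E_\delta(u) + \epsilon$. This decomposes into a support-reduction step for the finite-support case, followed by a truncation argument handling infinite support.

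For finite $\Supp(\delta)$ of size at least three, I would pick any three points $t_1 < t_2 < t_3$ in the support and write $\delta = \delta_1 + \delta_2$ where $\delta_1 = \delta|_{\{t_1,t_2,t_3\}}$. The sub-distributions $\delta_1'$ on $\{t_1,t_2,t_3\}$ sharing the mass and expected time of $\delta_1$ form an affine line (three unknowns, two linear constraints), whose intersection with the non-negative orthant is a bounded segment whose two endpoints each place zero mass on some one of the three points. Since the quantity $\sum_{i=1}^{3} \delta_1'(t_i) \cdot u_{t_i}$ is affine along the segment, at least one endpoint achieves value at most $\sum_i \delta_1(t_i) \cdot u_{t_i}$. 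Replacing $\delta_1$ by that endpoint and invoking Lemma~\ref{lem:eq-sub-dist}, the resulting distribution has expected time $T$, expected utility no larger than $\E_\delta(u)$, and strictly smaller support. Iterating at most $|\Supp(\delta)| - 2$ times yields the desired bi-Dirac exactly.

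For infinite $\Supp(\delta)$, I would truncate past a large threshold $N$ and replace the tail of mass $q_N = \delta([N{+}1,\infty))$ and first moment $M_N = \sum_{t > N} t \cdot \delta(t)$ by a bi-Dirac supported on $\lfloor M_N/q_N \rfloor$ and $\lceil M_N/q_N \rceil$, calibrated to preserve both $q_N$ and $M_N$ exactly. The resulting $\delta^{(N)}$ is a finite-support distribution with $\E_{\delta^{(N)}} = T$. Since $\E_\delta = T < \infty$ forces $q_N, M_N \to 0$, and since $u$ is induced by a graph path so $|u_t| \leq W(t{+}1)$ with $W$ the maximum weight magnitude, both the removed tail utility $\sum_{t > N} u_t \cdot \delta(t)$ and the inserted bi-Dirac contribution are bounded by $W(M_N + q_N)$, forcing $\E_{\delta^{(N)}}(u) \to \E_\delta(u)$. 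The principal technical subtlety lies in this approximation step; once it is in place, choosing $N$ large enough that the error is below $\epsilon$ and then applying the finite-support reduction produces the claimed bi-Dirac with expected utility at most $\E_\delta(u) + \epsilon$.
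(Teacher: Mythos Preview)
Your proof is correct and shares with the paper the same core three-point reduction: given three support points, the feasible sub-distributions matching a prescribed mass and first moment form a segment, the expected utility is affine along it, hence the minimum sits at an endpoint with two-point support, and Lemma~\ref{lem:eq-sub-dist} lets you splice this back into the full distribution without disturbing $\E_\delta$ or the total mass.

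Where you diverge is the infinite-support case. You truncate past a threshold $N$, replace the tail by a mass- and moment-matched bi-Dirac, and invoke the linear growth bound $|u_t| \leq W(t{+}1)$ (valid for any $u \in U_G$) to show $\E_{\delta^{(N)}}(u) \to \E_\delta(u)$; then you fall back on the finite-support reduction. The paper instead treats the two sides of $T$ asymmetrically: it first reduces the support in $[0,T{-}1]$ to a single point $t_0$ (a finite iteration of the three-point step, since only finitely many integers lie below $T$), and then, with one point on the left fixed, handles the possibly infinite right side directly via the slope $\nu = \inf_{t \geq T} (u_t - u_{t_0})/(t - t_0)$, exhibiting a bi-Dirac on $\{t_0,t_1\}$ with $t_1$ chosen so that $(u_{t_1}-u_{t_0})/(t_1-t_0)$ is within $\epsilon$ of $\nu$. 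Your route is more modular and symmetric; the paper's route is a bit more intricate but surfaces the slope $\nu$ and the formula $u_{t_0} + (T - t_0)\nu$ explicitly, which it immediately reuses in the geometric-interpretation Lemma~\ref{lem:geometric-interpretation}.
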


\begin{longversion}
\begin{proof}
First, we show that for all distributions $\delta \in \Delta$ with $\E_{\delta} = T$,

\begin{itemize}   %[label=(ii)]
\item[$(i)$] there exists an equivalent distribution $\delta' \in \Delta$ such that 
$\abs{\Supp(\delta') \cap [0,T-1]} \leq 1$ and $\E_{\delta'}(u) \leq \E_{\delta}(u)$,
i.e., only one point before $T$ in the support is sufficient, and 

\item[$(ii)$] there exists an equivalent distribution $\delta' \in \Delta$ such that 
$\abs{\Supp(\delta') \cap [T,\infty)} \leq 1$ and $\E_{\delta'}(u) \leq \E_{\delta}(u)$,
i.e., only one point after $T$ in the support is sufficient.
\end{itemize}

The result of the lemma follows from these two claims.

To prove claim $(i)$, first consider an arbitrary \emph{sub-distribution} $\delta$ with 
$\Supp(\delta) = \{t_1, t_2, t_3\}$ where $t_1 < t_2 < t_3$.  
Then $t_1 < \E_{\delta} < t_3$ and either $\E_{\delta} \leq t_2$,
or $t_2 \leq \E_{\delta}$.

We show that among the sub-distributions $\delta'$ equivalent to $\delta$ 
and with $\Supp(\delta') \subseteq \{t_1, t_2, t_3\}$, the smallest expected utility
of $u$ is obtained for $\Supp(\delta') \subsetneq \{t_1, t_2, t_3\}$. 
We present below the argument in the case $t_2 \leq \E_{\delta}$, and show that 
either $\delta'(t_1) = 0$, or $\delta'(t_2) = 0$. A symmetric argument in the
case $\E_{\delta} \leq t_2$ shows that either $\delta'(t_2) = 0$, or $\delta'(t_3) = 0$.

Let $x = \delta'(t_1)$, $y = \delta'(t_2)$, and $z = \delta'(t_3)$.  %, see \figurename~\ref{fig:timeline}.
Since $\delta'$ and $\delta$ are equivalent, we have
\begin{align*}
& x+y+z = p_{\delta} \\
& x \cdot t_1 + y \cdot t_2 + z \cdot t_3 = p_{\delta} \cdot \E_{\delta}
\end{align*}
Hence
\begin{align*}
& z = p_{\delta} - x - y   \\
& \underbrace{x \cdot (t_1 - t_3)}_{x'} + \underbrace{y \cdot (t_2 - t_3)}_{y'} =  p_{\delta} \cdot (\E_{\delta} - t_3)
\end{align*}
The expected utility of $u$ under $\delta'$ is
\begin{align*}
\E_{\delta'}(u) & = x \cdot u_{t_1} + y \cdot u_{t_2} + z \cdot u_{t_3}   \\
            & = x \cdot (u_{t_1} - u_{t_3}) + y \cdot (u_{t_2} - u_{t_3}) + u_{t_3} \cdot p_{\delta} \\
            & = x' \cdot \frac{u_{t_1} - u_{t_3}}{t_1 - t_3} + y' \cdot \frac{u_{t_2} - u_{t_3}}{t_2 - t_3} + u_{t_3} \cdot p_{\delta} \numberthis \label{eqn:utility}
\end{align*}
Since $x' + y'$ is constant and $x',y' \leq 0$, 
the least value of $\E_{\delta'}(u)$ is obtained either for $x' = 0$ 
(if $\frac{u_{t_1} - u_{t_3}}{t_1 - t_3} \leq \frac{u_{t_2} - u_{t_3}}{t_2 - t_3}$), 
or for $y' = 0$ (otherwise), thus either for $x = 0$, or for $y = 0$.
Note that for $x = 0$, we have $y = \frac{ p_{\delta} \cdot (\E_{\delta} - t_3)}{t_2 - t_3}$
and $z = \frac{ p_{\delta} \cdot (t_2 - \E_{\delta})}{t_2 - t_3}$, which is a feasible solution
as $0 \leq y \leq 1$ and $0 \leq z \leq 1$ since $t_2 \leq \E_{\delta} \leq t_3$, and $0 < p_{\delta} \leq 1$.
Symmetrically, for $y = 0$ we have a feasible solution.

As an intermediate remark, note that for $p_{\delta} = 1$ and $\E_{\delta} = T$, we get (for $y=y'=0$, and symmetrically for $x=x'=0$) 
\begin{equation}
\E_{\delta'}(u) = u_{t_3} + \frac{T - t_3}{t_1 - t_3} \cdot (u_{t_1} - u_{t_3}) %=u_{t_1} + \frac{T - t_1}{t_3 - t_1} \cdot (u_{t_3} - u_{t_1})
.\label{eqn:utility-two-points}
\end{equation}

To complete the proof of Claim~$(i)$, given an arbitrary distribution $\delta$
with $\E_{\delta} = T$, we use the above argument to construct 
a distribution equivalent\footnote{Equivalence 
follows from Lemma~\ref{lem:eq-sub-dist}.} to $\delta$ with smaller expected 
utility and one less point in the support. We repeat this argument until
we obtain a distribution $\delta'$ with support that contains at most two points in the 
interval $[0,k]$ where $k$ is such that $\sum_{i \leq k} \delta(i) \cdot i > T-1$.
Such a value of $k$ exists since $\E_{\delta} = \sum_{i \in \nat} \delta(i) \cdot i = T$. 
By the construction of $\delta'$, we have $\sum_{i \leq k} \delta'(i) \cdot i > T-1$
and therefore at most one point in the support of $\delta'$ lies in the
interval $[0,T-1]$, which completes the proof of Claim~$(i)$.

To prove claim~$(ii)$, consider a distribution $\delta$ with $\E_{\delta} = T$, and by claim~$(i)$
we assume that $\delta(t_0) \neq 0$ for some $t_0 < T$, and $\delta(t) = 0$ for all $t < T$ with $t \neq t_0$. 
Let $\nu = \inf_{t \geq T} \frac{u_{t} - u_{t_0}}{t - t_0}$, and we consider two
cases:

\begin{itemize}[leftmargin=*]
\item
%\noindent $\bullet$ 
  if for all $t \geq T$ such that $t \in \Supp(\delta)$, 
  we have $\frac{u_{t} - u_{t_0}}{t - t_0} = \nu$, then %consider such a value of $t$ and 
  by an analogous of Equation~(\ref{eqn:utility}), we get 
  %$$\E_{\delta}(u) = u_{t_0} + \sum_{t \geq T} \delta(t) \cdot (t-t_0) \cdot \frac{u_{t} - u_{t_0}}{t - t_0} = u_{t_0} + (1-\delta  (t_0)) \cdot \nu$$
  \begin{align*}
   \E_{\delta}(u) & = u_{t_0} + \sum_{t \geq T} \delta(t) \cdot (t-t_0) \cdot \frac{u_{t} - u_{t_0}}{t - t_0} \\
                & = u_{t_0} + \nu \cdot \sum_{t \geq 0} \delta(t) \cdot (t-t_0)  = u_{t_0} + \nu \cdot (T-t_0) 
  \end{align*}
  which is the expected utility of $u$ under a bi-Dirac distribution 
  with support $\{t_0,t\}$ where $t \geq T$ is any element of $\Supp(\delta)$
  (see Equation~(\ref{eqn:utility-two-points}));

\smallskip
\item 
%\noindent $\bullet$ 
  otherwise there exists $t \geq T$ such that $t \in \Supp(\delta)$ and 
  $\frac{u_{t} - u_{t_0}}{t - t_0} > \nu$.   % + \epsilon$ (for some $\epsilon > 0$).
  By an analogous of Equation~(\ref{eqn:utility}), we have
  \begin{align*}
   & \E_{\delta}(u) - u_{t_0} = \sum_{t \geq T} \delta(t) \cdot (t-t_0) \cdot \frac{u_{t} - u_{t_0}}{t - t_0} \\
   & \text{where }  \sum_{t \geq T} \delta(t) \cdot (t-t_0) = T - t_0,
  \end{align*}
  that is $\frac{\E_{\delta}(u) - u_{t_0}}{T - t_0}$ is a convex combination of elements 
  greater than or equal to $\nu$, among which one is greater than $\nu$.
  It follows that $\frac{\E_{\delta}(u) - u_{t_0}}{T - t_0}  > \nu$, and thus there exists
  $\epsilon > 0$ such that $\frac{\E_{\delta}(u) - u_{t_0}}{T - t_0}  > \nu + \epsilon$.

  Consider $t_1$ such that $\frac{u_{t_1} - u_{t_0}}{t_1 - t_0} < \nu + \epsilon$ (which exists
  by definition of~$\nu$),
  and let $\delta'$ be the bi-Dirac distribution $\delta'$ with support $\{t_0,t_1\}$
  and expected time $T$.
  By an analogous of Equation~(\ref{eqn:utility-two-points}), we have 
%  \begin{align*} 
%   & \E_{\delta'}(u) - u_{t_0} =  \frac{T - t_0}{t_1 - t_0} \cdot (u_{t_1} - u_{t_0}) < (T - t_0)\cdot(\nu + \epsilon) < \E_{\delta}(u) - u_{t_0}
%  \end{align*}
  \begin{align*} 
   \E_{\delta'}(u) - u_{t_0}  & =  \frac{T - t_0}{t_1 - t_0} \cdot (u_{t_1} - u_{t_0}) \\
                              & < (T - t_0)\cdot(\nu + \epsilon) < \E_{\delta}(u) - u_{t_0} 
  \end{align*}
  Therefore, $\E_{\delta'}(u) < \E_{\delta}(u)$ which concludes the proof since 
  $\delta'$ is a bi-Dirac distribution with $\E_{\delta'} = T$.
\end{itemize}
%\qed
\end{proof}
\end{longversion}

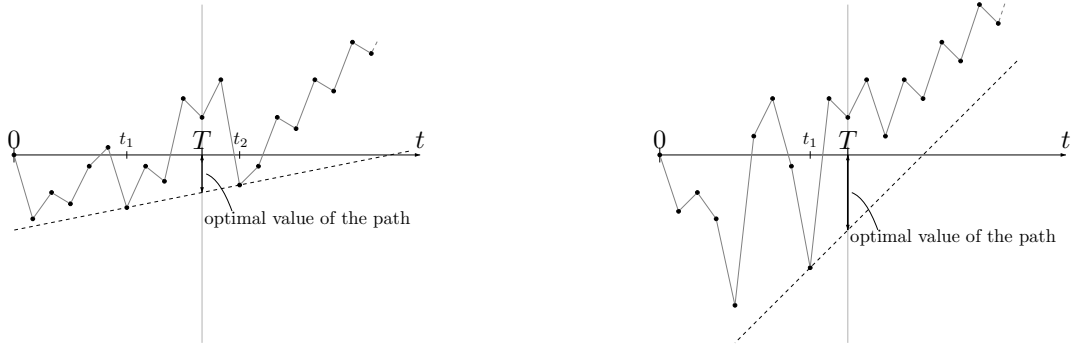
\begin{figure*}
  \hrule
  \begin{minipage}[b]{.48\linewidth}
     \centering %\renewcommand{\sb}[1]{\scalebox{0.75}[1]{#1}}

%{\scriptsize 
\scalebox{0.5}{
\begin{picture}(112,100)(0,0)
%\put(0,0){\framebox(62,12){}}

%\gasset{Nw=9,Nh=9,Nmr=4.5,rdist=1, loopdiam=6}
%\gasset{Nw=10,Nh=6,Nmr=3,rdist=1, loopdiam=5}
\gasset{Nw=6,Nh=6,Nmr=3, rdist=1, loopdiam=5}
%\gasset{Nw=5,Nh=5,Nmr=2.5,rdist=1, loopdiam=6, linewidth=0.12}

%% vertical T-line
\drawline[AHnb=0,arcradius=1, linegray=.7](52,5)(52,95)

%% Horizontal main line
\drawline[AHnb=1,arcradius=1](2,55)(110,55)

%% Tick at 0 and T
\drawline[AHnb=0,arcradius=1](2,53)(2,57)
\drawline[AHnb=0,arcradius=1](52,53)(52,57)

%% Tick at t1 and t2
\drawline[AHnb=0,arcradius=1](32,54)(32,56)
\drawline[AHnb=0,arcradius=1](62,54)(62,56)

%% Label at 0 and T
\node[Nmarks=n, Nframe=n](n1)(2,59){\scalebox{2}{$0$}}
\node[Nmarks=n, Nframe=n](n1)(52,59){\scalebox{2}{$T$}}
\node[Nmarks=n, Nframe=n](n1)(32,59){\scalebox{1.4}{$t_1$}}
\node[Nmarks=n, Nframe=n](n1)(62.3,59){\scalebox{1.4}{$t_2$}}
\node[Nmarks=n, Nframe=n](n1)(110,59){\scalebox{2}{$t$}}

%% bottom optimal line
\drawline[AHnb=0,arcradius=1, dash={1}0 ](2,35)(107,56)
\drawline[AHnb=1,ATnb=1,arcradius=1, linewidth=.35](52,45)(52,55)

\node[Nmarks=n, Nframe=n, Nh=0, Nw=0, Nmr=0](start)(53,50){}
\node[Nmarks=n, Nframe=n, Nh=0, Nw=0, Nmr=0](end)(60,40){}
\node[Nmarks=n, Nframe=n, Nh=0, Nw=0, Nmr=0](label)(80,38){\scalebox{1.4}{optimal value of the path}}

\drawqbpedge[AHnb=0](start,340,end,110){}
%\drawcurve[AHnb=0](53,50)(56,47)(60,40)

\node[Nframe=y, Nh=1,Nw=1,Nmr=.5, Nfill=y](n1)(2,55){}
\node[Nframe=y, Nh=1,Nw=1,Nmr=.5, Nfill=y](n2)(7,38){}
\node[Nframe=y, Nh=1,Nw=1,Nmr=.5, Nfill=y](n3)(12,45){}
\node[Nframe=y, Nh=1,Nw=1,Nmr=.5, Nfill=y](n4)(17,42){}
\node[Nframe=y, Nh=1,Nw=1,Nmr=.5, Nfill=y](n5)(22,52){}
\node[Nframe=y, Nh=1,Nw=1,Nmr=.5, Nfill=y](n6)(27,57){}
\node[Nframe=y, Nh=1,Nw=1,Nmr=.5, Nfill=y](n7)(32,41){}%%
\node[Nframe=y, Nh=1,Nw=1,Nmr=.5, Nfill=y](n8)(37,52){}
\node[Nframe=y, Nh=1,Nw=1,Nmr=.5, Nfill=y](n9)(42,48){}
\node[Nframe=y, Nh=1,Nw=1,Nmr=.5, Nfill=y](n10)(47,70){}
\node[Nframe=y, Nh=1,Nw=1,Nmr=.5, Nfill=y](n11)(52,65){} %% T
\node[Nframe=y, Nh=1,Nw=1,Nmr=.5, Nfill=y](n12)(57,75){}
\node[Nframe=y, Nh=1,Nw=1,Nmr=.5, Nfill=y](n13)(62,47){} %c
\node[Nframe=y, Nh=1,Nw=1,Nmr=.5, Nfill=y](n14)(67,52){}
\node[Nframe=y, Nh=1,Nw=1,Nmr=.5, Nfill=y](n15)(72,65){}
\node[Nframe=y, Nh=1,Nw=1,Nmr=.5, Nfill=y](n16)(77,62){}
\node[Nframe=y, Nh=1,Nw=1,Nmr=.5, Nfill=y](n17)(82,75){}
\node[Nframe=y, Nh=1,Nw=1,Nmr=.5, Nfill=y](n18)(87,72){}
\node[Nframe=y, Nh=1,Nw=1,Nmr=.5, Nfill=y](n19)(92,85){}
\node[Nframe=y, Nh=1,Nw=1,Nmr=.5, Nfill=y](n20)(97,82){}
\node[Nframe=n, Nh=1,Nw=1,Nmr=.5](n21)(99,86){}

%\node[Nframe=y, Nh=1,Nw=1,Nmr=.5, Nfill=y](n22)(107,95){}

\drawedge[ELpos=50, ELside=r, curvedepth=0, AHnb=0, linegray=.5](n1,n2){}
\drawedge[ELpos=50, ELside=r, curvedepth=0, AHnb=0, linegray=.5](n2,n3){}
\drawedge[ELpos=50, ELside=r, curvedepth=0, AHnb=0, linegray=.5](n3,n4){}
\drawedge[ELpos=50, ELside=r, curvedepth=0, AHnb=0, linegray=.5](n4,n5){}
\drawedge[ELpos=50, ELside=r, curvedepth=0, AHnb=0, linegray=.5](n5,n6){}
\drawedge[ELpos=50, ELside=r, curvedepth=0, AHnb=0, linegray=.5](n6,n7){}
\drawedge[ELpos=50, ELside=r, curvedepth=0, AHnb=0, linegray=.5](n7,n8){}
\drawedge[ELpos=50, ELside=r, curvedepth=0, AHnb=0, linegray=.5](n8,n9){}
\drawedge[ELpos=50, ELside=r, curvedepth=0, AHnb=0, linegray=.5](n9,n10){}
\drawedge[ELpos=50, ELside=r, curvedepth=0, AHnb=0, linegray=.5](n10,n11){}
\drawedge[ELpos=50, ELside=r, curvedepth=0, AHnb=0, linegray=.5](n11,n12){}
\drawedge[ELpos=50, ELside=r, curvedepth=0, AHnb=0, linegray=.5](n12,n13){}
\drawedge[ELpos=50, ELside=r, curvedepth=0, AHnb=0, linegray=.5](n13,n14){}
\drawedge[ELpos=50, ELside=r, curvedepth=0, AHnb=0, linegray=.5](n14,n15){}
\drawedge[ELpos=50, ELside=r, curvedepth=0, AHnb=0, linegray=.5](n15,n16){}
\drawedge[ELpos=50, ELside=r, curvedepth=0, AHnb=0, linegray=.5](n16,n17){}
\drawedge[ELpos=50, ELside=r, curvedepth=0, AHnb=0, linegray=.5](n17,n18){}
\drawedge[ELpos=50, ELside=r, curvedepth=0, AHnb=0, linegray=.5](n18,n19){}
\drawedge[ELpos=50, ELside=r, curvedepth=0, AHnb=0, linegray=.5](n19,n20){}
\drawedge[ELpos=50, ELside=r, curvedepth=0, AHnb=0, linegray=.5, dash={1}0](n20,n21){}
%\drawedge[ELpos=50, ELside=r, curvedepth=0, AHnb=0, linegray=.5](n21,n22){}

%% Copy/Paste of the nodes to cover the gray edges

%\node[Nmarks=n](n1)(30,60){$\ver_1$}
%\node[Nmarks=n, Nmr=0](n11)(15,45){$\ver_2$}
%\nodelabel[ExtNL=y, NLangle=270, NLdist=1](n111){{\small val=1}}

%\drawedge[ELpos=50, ELside=r, curvedepth=0, AHLength = 2, AHangle=25, AHlength = 1.71](n1,n11){$-1$}
%\drawedge[ELpos=50, ELside=r, curvedepth=0, AHnb=0, linewidth=.6](n1,n11dummy){}

%\drawedge[ELpos=50, ELside=l, curvedepth=-12](t3,n1){}
%\drawbpedge(t3,70,22,n1,110,22){}
%\drawline[AHnb=1,arcradius=1](113,17.5)(113,29)(5,29)(5,17.5)

%\drawloop[ELside=l,loopCW=y, loopdiam=6](n4){$1$}

%\drawloop[ELside=l,loopCW=y](nk){$0,1$}

%\drawedge[dash={1}0](n3bis,nkbis){$0,1$}

\end{picture}
}
     \subcaption{When an optimal distribution exists {\huge \strut}}\label{fig:geometry1}
  \end{minipage}%
  \hfill
  \begin{minipage}[b]{.48\linewidth}
     \centering %\renewcommand{\sb}[1]{\scalebox{0.75}[1]{#1}}

%{\scriptsize 
\scalebox{0.5}{
\begin{picture}(112,100)(0,0)
%\put(0,0){\framebox(62,12){}}

%\gasset{Nw=9,Nh=9,Nmr=4.5,rdist=1, loopdiam=6}
%\gasset{Nw=10,Nh=6,Nmr=3,rdist=1, loopdiam=5}
\gasset{Nw=6,Nh=6,Nmr=3, rdist=1, loopdiam=5}
%\gasset{Nw=5,Nh=5,Nmr=2.5,rdist=1, loopdiam=6, linewidth=0.12}

%% vertical T-line
\drawline[AHnb=0,arcradius=1, linegray=.7](52,5)(52,95)

%% Horizontalmain line
\drawline[AHnb=1,arcradius=1](2,55)(110,55)

%% Tick at 0 and T
\drawline[AHnb=0,arcradius=1](2,53)(2,57)
\drawline[AHnb=0,arcradius=1](52,53)(52,57)
\drawline[AHnb=0,arcradius=1](42,54)(42,56)

%% Label at 0 and T
\node[Nmarks=n, Nframe=n](n1)(2,59){\scalebox{2}{$0$}}
\node[Nmarks=n, Nframe=n](n1)(52,59){\scalebox{2}{$T$}}
\node[Nmarks=n, Nframe=n](n1)(42,59){\scalebox{1.4}{$t_1$}}
\node[Nmarks=n, Nframe=n](n1)(110,59){\scalebox{2}{$t$}}

%% bottom optimal line
\drawline[AHnb=0,arcradius=1, dash={1}0 ](22,5)(97,80)
\drawline[AHnb=1,ATnb=1,arcradius=1, linewidth=.45](52,35)(52,55)

\node[Nmarks=n, Nframe=n, Nh=0, Nw=0, Nmr=0](start)(53,45){}
\node[Nmarks=n, Nframe=n, Nh=0, Nw=0, Nmr=0](end)(60,35){}
\node[Nmarks=n, Nframe=n, Nh=0, Nw=0, Nmr=0](label)(80,33){\scalebox{1.4}{optimal value of the path}}

\drawqbpedge[AHnb=0](start,340,end,110){}
%\drawcurve[AHnb=0](53,50)(56,47)(60,40)

\node[Nframe=y, Nh=1,Nw=1,Nmr=.5, Nfill=y](n1)(2,55){}
\node[Nframe=y, Nh=1,Nw=1,Nmr=.5, Nfill=y](n2)(7,40){}
\node[Nframe=y, Nh=1,Nw=1,Nmr=.5, Nfill=y](n3)(12,45){}
\node[Nframe=y, Nh=1,Nw=1,Nmr=.5, Nfill=y](n4)(17,38){}
\node[Nframe=y, Nh=1,Nw=1,Nmr=.5, Nfill=y](n5)(22,15){}
\node[Nframe=y, Nh=1,Nw=1,Nmr=.5, Nfill=y](n6)(27,60){}
\node[Nframe=y, Nh=1,Nw=1,Nmr=.5, Nfill=y](n7)(32,70){}
\node[Nframe=y, Nh=1,Nw=1,Nmr=.5, Nfill=y](n8)(37,52){}
\node[Nframe=y, Nh=1,Nw=1,Nmr=.5, Nfill=y](n9)(42,25){} %%
\node[Nframe=y, Nh=1,Nw=1,Nmr=.5, Nfill=y](n10)(47,70){}
\node[Nframe=y, Nh=1,Nw=1,Nmr=.5, Nfill=y](n11)(52,65){} %% T
\node[Nframe=y, Nh=1,Nw=1,Nmr=.5, Nfill=y](n12)(57,75){}
\node[Nframe=y, Nh=1,Nw=1,Nmr=.5, Nfill=y](n13)(62,60){}
\node[Nframe=y, Nh=1,Nw=1,Nmr=.5, Nfill=y](n14)(67,75){}
\node[Nframe=y, Nh=1,Nw=1,Nmr=.5, Nfill=y](n15)(72,70){}
\node[Nframe=y, Nh=1,Nw=1,Nmr=.5, Nfill=y](n16)(77,85){}
\node[Nframe=y, Nh=1,Nw=1,Nmr=.5, Nfill=y](n17)(82,80){}
\node[Nframe=y, Nh=1,Nw=1,Nmr=.5, Nfill=y](n18)(87,95){}
\node[Nframe=y, Nh=1,Nw=1,Nmr=.5, Nfill=y](n19)(92,90){}
\node[Nframe=n, Nh=1,Nw=1,Nmr=.5](n20)(94,96){}
%\node[Nframe=y, Nh=1,Nw=1,Nmr=.5, Nfill=y](n21)(102,100){}
%\node[Nframe=y, Nh=1,Nw=1,Nmr=.5, Nfill=y](n22)(107,95){}

\drawedge[ELpos=50, ELside=r, curvedepth=0, AHnb=0, linegray=.5](n1,n2){}
\drawedge[ELpos=50, ELside=r, curvedepth=0, AHnb=0, linegray=.5](n2,n3){}
\drawedge[ELpos=50, ELside=r, curvedepth=0, AHnb=0, linegray=.5](n3,n4){}
\drawedge[ELpos=50, ELside=r, curvedepth=0, AHnb=0, linegray=.5](n4,n5){}
\drawedge[ELpos=50, ELside=r, curvedepth=0, AHnb=0, linegray=.5](n5,n6){}
\drawedge[ELpos=50, ELside=r, curvedepth=0, AHnb=0, linegray=.5](n6,n7){}
\drawedge[ELpos=50, ELside=r, curvedepth=0, AHnb=0, linegray=.5](n7,n8){}
\drawedge[ELpos=50, ELside=r, curvedepth=0, AHnb=0, linegray=.5](n8,n9){}
\drawedge[ELpos=50, ELside=r, curvedepth=0, AHnb=0, linegray=.5](n9,n10){}
\drawedge[ELpos=50, ELside=r, curvedepth=0, AHnb=0, linegray=.5](n10,n11){}
\drawedge[ELpos=50, ELside=r, curvedepth=0, AHnb=0, linegray=.5](n11,n12){}
\drawedge[ELpos=50, ELside=r, curvedepth=0, AHnb=0, linegray=.5](n12,n13){}
\drawedge[ELpos=50, ELside=r, curvedepth=0, AHnb=0, linegray=.5](n13,n14){}
\drawedge[ELpos=50, ELside=r, curvedepth=0, AHnb=0, linegray=.5](n14,n15){}
\drawedge[ELpos=50, ELside=r, curvedepth=0, AHnb=0, linegray=.5](n15,n16){}
\drawedge[ELpos=50, ELside=r, curvedepth=0, AHnb=0, linegray=.5](n16,n17){}
\drawedge[ELpos=50, ELside=r, curvedepth=0, AHnb=0, linegray=.5](n17,n18){}
\drawedge[ELpos=50, ELside=r, curvedepth=0, AHnb=0, linegray=.5](n18,n19){}
\drawedge[ELpos=50, ELside=r, curvedepth=0, AHnb=0, linegray=.5, dash={1}0](n19,n20){}
%\drawedge[ELpos=50, ELside=r, curvedepth=0, AHnb=0, linegray=.5](n20,n21){}
%\drawedge[ELpos=50, ELside=r, curvedepth=0, AHnb=0, linegray=.5](n21,n22){}

%% Copy/Paste of the nodes to cover the gray edges

%\node[Nmarks=n](n1)(30,60){$\ver_1$}
%\node[Nmarks=n, Nmr=0](n11)(15,45){$\ver_2$}
%\nodelabel[ExtNL=y, NLangle=270, NLdist=1](n111){{\small val=1}}

%\drawedge[ELpos=50, ELside=r, curvedepth=0, AHLength = 2, AHangle=25, AHlength = 1.71](n1,n11){$-1$}
%\drawedge[ELpos=50, ELside=r, curvedepth=0, AHnb=0, linewidth=.6](n1,n11dummy){}

%\drawedge[ELpos=50, ELside=l, curvedepth=-12](t3,n1){}
%\drawbpedge(t3,70,22,n1,110,22){}
%\drawline[AHnb=1,arcradius=1](113,17.5)(113,29)(5,29)(5,17.5)

%\drawloop[ELside=l,loopCW=y, loopdiam=6](n4){$1$}

%\drawloop[ELside=l,loopCW=y](nk){$0,1$}

%\drawedge[dash={1}0](n3bis,nkbis){$0,1$}

\end{picture}
}
     \subcaption{When no optimal distribution exists {\huge \strut}}\label{fig:geometry2}
  \end{minipage}
  \hrule
  \caption{Geometric interpretation of the value of a path.}\label{fig:geometry-all}
\end{figure*}

\subsubsection{Geometric interpretation}\label{sec:Geometric-Interpretation}
It follows from the proof of Lemma~\ref{lem:Bi-Dirac} 
\begin{longversion} 
(and Equation~(\ref{eqn:utility-two-points})) 
\end{longversion}
that the value of the expected utility of a sequence $u$ of utilities
under a bi-Dirac distribution with support $\{t_1,t_2\}$ (where $t_1 < T < t_2$)
and expected time $T$ is 
$$ u_{t_1} + \frac{T - t_1}{t_2 - t_1} \cdot (u_{t_2} - u_{t_1}). $$
In \figurename~\ref{fig:geometry1}, this value is obtained as the
intersection of the vertical axis at $T$ and the line that connects
the two points $(t_1, u_{t_1})$ and $(t_2, u_{t_2})$. Intuitively,
the optimal value of a path is obtained by choosing the two points $t_1$ and $t_2$
such that the connecting line intersects the vertical axis at $T$
as down as possible. 

\begin{lemma}\label{lem:lower-line}
For all sequences $u$ of utilities, if $u_t \geq a \cdot t + b$ for all $t \geq 0$,
then the value of the sequence $u$ is at least $a \cdot T + b$.
\end{lemma}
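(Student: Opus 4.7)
The strategy is to reduce the infimum over arbitrary distributions to an infimum over bi-Dirac distributions, where each expected utility becomes a straightforward convex combination of two values of $u$. Specifically, by Lemma~\ref{lem:Bi-Dirac},
\[
\val(u,T) \;=\; \inf\{\E_{\delta}(u) \mid \delta \in \Delta^{\upuparrows} \land \E_{\delta} = T\},
\]
so it suffices to lower-bound $\E_{\delta}(u)$ for each such $\delta$. If $\Supp(\delta) = \{T\}$ (only possible when $T \in \nat$), then $\E_{\delta}(u) = u_T \geq aT + b$ by hypothesis. Otherwise $\Supp(\delta) = \{t_1,t_2\}$ with $t_1 < T < t_2$, and as noted just before the statement of the lemma, the expected utility equals
\[
\E_{\delta}(u) \;=\; u_{t_1} + \frac{T - t_1}{t_2 - t_1}\cdot (u_{t_2} - u_{t_1}).
\]

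The key observation is that this expression is precisely the convex combination $\lambda\, u_{t_1} + (1-\lambda)\, u_{t_2}$ with $\lambda = \frac{t_2 - T}{t_2 - t_1} \in [0,1]$, and the corresponding convex combination of time points is $\lambda t_1 + (1-\lambda) t_2 = T$. Plugging the pointwise hypothesis $u_{t_i} \geq a t_i + b$ into this convex combination gives
\[
\E_{\delta}(u) \;\geq\; \lambda(a t_1 + b) + (1-\lambda)(a t_2 + b) \;=\; a\bigl(\lambda t_1 + (1-\lambda) t_2\bigr) + b \;=\; aT + b,
\]
which is exactly the geometric fact that the chord through $(t_1,u_{t_1})$ and $(t_2,u_{t_2})$ lies above the line $y = ax + b$ at $x = T$, since both endpoints lie above that line.

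Taking the infimum over all admissible bi-Dirac $\delta$ preserves the bound $aT + b$, and then invoking the reduction from Lemma~\ref{lem:Bi-Dirac} transfers this bound to $\val(u,T)$, completing the argument. The only real obstacle is making sure the bi-Dirac reduction legitimately applies and that the degenerate single-point case is handled; both are immediate, so I expect no substantive difficulty beyond correctly recognizing the convex-combination form of the bi-Dirac expected utility.
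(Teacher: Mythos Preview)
Your proof is correct and follows essentially the same approach as the paper: reduce to bi-Dirac distributions via Lemma~\ref{lem:Bi-Dirac}, then use the explicit formula for the expected utility under a bi-Dirac distribution together with the hypothesis $u_t \geq at+b$ to obtain the bound $aT+b$. Your presentation via the convex combination $\lambda u_{t_1} + (1-\lambda)u_{t_2}$ with $\lambda t_1 + (1-\lambda)t_2 = T$ is just a rephrasing of the paper's algebraic manipulation, and your explicit treatment of the degenerate single-point case is a small extra care that the paper leaves implicit.
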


\begin{proof}
By Lemma~\ref{lem:Bi-Dirac}, it is sufficient to consider bi-Dirac
distributions, and for all bi-Dirac distributions $\delta$ with 
arbitrary support $\{t_1,t_2\}$ the value of $u$ under $\delta$ is
\begin{align*} 
      & \, u_{t_1} + \frac{T - t_1}{t_2 - t_1} \cdot (u_{t_2} - u_{t_1}) \\
 =    & \, \frac{u_{t_1} \cdot (t_2 - T) + u_{t_2} \cdot (T-t_1)}{t_2 - t_1} \\
 \geq & \, \frac{(a \cdot t_1 + b) \cdot (t_2 - T) + (a \cdot t_2 + b) \cdot (T-t_1)}{t_2 - t_1} \\
 \geq & \,\, a \cdot T + b
\end{align*}
%\qed
\end{proof}

It is always possible to fix an optimal value of $t_1$ (because $t_1 \leq T$ is
to be chosen among a finite set of points), but the optimal value of $t_2$ may 
not exist, as in \figurename~\ref{fig:geometry2}. 
The value of the path is then obtained as $t_2 \to \infty$. 
In general, there exists $t_1 \leq T$ such that it is sufficient 
to consider bi-Dirac distributions with support containing $t_1$ to compute
the optimal value. We say that $t_1$ is a \emph{left-minimizer} of the expected value in the path.
Given such a value of $t_1$, let $\nu = \inf_{t_2 \geq T}  \frac{u_{t_2} - u_{t_1}}{t_2 - t_1}$, and we show in Lemma~\ref{lem:geometric-interpretation}
that $ u_t \geq  u_{t_1} + (t - t_1) \cdot \nu$, for all $t \geq 0$.
This motivates the following definition.

\smallskip\noindent{\em Line of equation $f_u(t)$.} 
Given a left-minimizer $t_1$, we define the line of equation $f_u(t)$ as 
follows:
$$ f_u(t) = u_{t_1} + (t - t_1) \cdot \nu. $$
Note that the optimal expected utility is
$$ \min_{0 \leq t_1 \leq T} \,\, \inf_{t_2 \geq T} \,\,
u_{t_1} + \frac{T - t_1}{t_2 - t_1} \cdot (u_{t_2} - u_{t_1}) = 
\min_{0 \leq t_1 \leq T} \,\, u_{t_1} + (T - t_1) \cdot \nu = 
f_u(T).
$$
\begin{comment}
\begin{align*}
& \min_{0 \leq t_1 \leq T} \,\, \inf_{t_2 \geq T} \,\,
u_{t_1} + \frac{T - t_1}{t_2 - t_1} \cdot (u_{t_2} - u_{t_1}) = \\ 
& \min_{0 \leq t_1 \leq T} \,\, u_{t_1} + (T - t_1) \cdot \nu = f_u(T).
\end{align*}
\end{comment}
In other words, $f_u(T)$ is the optimal value.

\begin{lemma}[Geometric interpretation]\label{lem:geometric-interpretation}
For all sequences $u$ of utilities, we have $u_t \geq f_u(t)$ for all $t \geq 0$, 
and the expected value of $u$ is $f_u(T)$.
\end{lemma}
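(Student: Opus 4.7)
The lemma has two parts, and I would handle them in order. For the second part (that the expected value of $u$ equals $f_u(T)$), by Lemma~\ref{lem:Bi-Dirac} the adversarial infimum is attained over bi-Dirac distributions, and the value of the bi-Dirac with support $\{t_1,t_2\}$ and expected time $T$ is $u_{t_1} + \frac{T-t_1}{t_2-t_1}(u_{t_2}-u_{t_1})$. Taking the infimum over $t_2 \geq T$ for the chosen left-minimizer $t_1$ yields $u_{t_1} + (T-t_1)\nu = f_u(T)$, and by definition of left-minimizer this is the overall adversarial infimum, so $\val(u,T) = f_u(T)$.

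For the pointwise inequality $u_t \geq f_u(t)$ I split on whether $t \geq T$ or $t < T$. When $t \geq T$ we have $t \geq T \geq t_1$; the case $t = t_1$ is trivial, and if $t > t_1$ then $\frac{u_t - u_{t_1}}{t - t_1}$ is one of the quantities over which $\nu$ is the infimum, so $\frac{u_t - u_{t_1}}{t - t_1} \geq \nu$, i.e., $u_t \geq u_{t_1} + (t - t_1)\nu = f_u(t)$.

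The substantive case is $0 \leq t < T$, which I would handle by contradiction: assume $u_t < f_u(t) = u_{t_1} + (t - t_1)\nu$, and produce a bi-Dirac distribution on $\{t, s\}$ with expected time $T$ whose value is strictly below $f_u(T)$, contradicting Part~2 together with Lemma~\ref{lem:Bi-Dirac}. Pick a sequence $s_n \geq T$ with $s_n > t_1$ and $\frac{u_{s_n} - u_{t_1}}{s_n - t_1} \to \nu$. A short computation (splitting $u_{s_n} - u_t = (u_{s_n} - u_{t_1}) + (u_{t_1} - u_t)$, dividing by $s_n - t$, and using $\frac{s_n - t_1}{s_n - t} \to 1$ and $\frac{u_{t_1} - u_t}{s_n - t} \to 0$ when $s_n \to \infty$, with the bounded subcase $s_n \to s^* > T$ handled by direct algebra) shows that the value of the bi-Dirac on $\{t, s_n\}$ tends to $u_t + (T - t)\nu$. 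The hypothesis then gives $u_t + (T - t)\nu < u_{t_1} + (T - t_1)\nu = f_u(T)$, producing the contradiction.

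The main obstacle I anticipate is the degenerate subcase of a bounded minimising sequence with $s^* = T$, in which the bi-Dirac on $\{t, T\}$ reduces to Dirac at $T$ with value $u_T$ and fails to witness a strictly smaller value. I would rule this out by observing that a genuine left-minimizer $t_1 < T$ (whose associated bi-Dirac has positive mass at $t_1$) already forces the infimum $\nu$ to be witnessed by some $s > T$, so the minimising sequence can be taken with $s_n > T$ throughout; for the special point $t = T$, the bound $\val(u,T) \leq u_T$ (Dirac at $T$ being a valid adversary choice) combined with Part~2 gives $u_T \geq f_u(T)$ directly.
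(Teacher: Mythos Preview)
Your proposal is correct and takes essentially the same approach as the paper: the case $t \geq T$ is immediate from the definition of $\nu$, and for $t < T$ both you and the paper argue by contradiction, exhibiting a bi-Dirac distribution on $\{t,t_2\}$ (for suitable $t_2 \geq T$) whose expected value falls strictly below $f_u(T)$. The only cosmetic difference is that you pass to a limit along a sequence $s_n$ with $\tfrac{u_{s_n}-u_{t_1}}{s_n-t_1}\to\nu$, whereas the paper carries out the algebra explicitly, reducing the desired strict inequality to $(T-t)\bigl(\tfrac{u_{t_2}-u_{t_1}}{t_2-t_1}-\nu\bigr)(t_2-t_1)\leq\varepsilon(t_2-T)$ and then splitting on whether $\nu$ is attained (take that $t_2$) or not (take $t_2$ large); the degenerate subcase $s^\ast=T$ you flag is precisely the borderline of the paper's case~(i), which the paper does not separate out either.
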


\begin{longversion}
\begin{proof}
The result holds by definition of $\nu$ for all $t \geq T$. 
For $t < T$, assume towards contradiction that $u_t < u_{t_1} + (t - t_1) \cdot \nu$.
Let $\varepsilon > 0$ such that $u_t = u_{t_1} + (t - t_1) \cdot \nu - \varepsilon$. 
We obtain a contradiction by showing that there exists a bi-Dirac distribution 
under which the expected value of $u$ is smaller than the optimal value of $u$.
Consider a bi-Dirac distribution with support $\{t,t_2\}$ where the value
$t_2$ is defined later.

We need to show that 
\begin{comment}
\begin{align*}
u_{t} + \frac{T - t}{t_2 - t} \cdot (u_{t_2} - u_{t}) & < 
    u_{t_1} + (T - t_1) \cdot \nu \\
\frac{u_{t} \cdot (t_2 - T) + u_{t_2} \cdot (T-t)}{t_2 - t}  & < 
    u_{t_1} + (T - t_1) \cdot \nu 
\end{align*}
\end{comment}
$$ u_{t} + \frac{T - t}{t_2 - t} \cdot (u_{t_2} - u_{t}) < 
    u_{t_1} + (T - t_1) \cdot \nu, $$
that is
$$ \frac{u_{t} \cdot (t_2 - T) + u_{t_2} \cdot (T-t)}{t_2 - t} < 
    u_{t_1} + (T - t_1) \cdot \nu $$
which, since $u_t = u_{t_1} + (t - t_1) \cdot \nu - \varepsilon$, holds if (successively)

$$
\begin{array}{l}
u_{t_1} \cdot (t_2 - T) + (t - t_1) \cdot (t_2 - T) \cdot\nu + u_{t_2} \cdot (T-t) \leq \\[1pt]
\multicolumn{1}{r}{\varepsilon \cdot(t_2 - T) + u_{t_1} \cdot (t_2 - t) + (t_2 - t) \cdot (T - t_1) \cdot \nu} \\[6pt]
u_{t_1} \cdot (t-T) + u_{t_2} \cdot (T-t) \leq \\[1pt]
\multicolumn{1}{r}{\varepsilon \cdot(t_2 - T) - \nu \cdot (t \cdot t_2 + t_1 \cdot T - t_2 \cdot T - t \cdot t_1)} \\[6pt]
(u_{t_2} - u_{t_1}) \cdot (T-t) + \nu \cdot (t_2 - t_1) \cdot (t-T) \leq \\[1pt]
\multicolumn{1}{r}{\varepsilon \cdot(t_2 - T)}   \\[6pt]
(T-t) \cdot \left(\frac{u_{t_2} - u_{t_1}}{t_2 - t_1} - \nu\right) \cdot (t_2 - t_1) \leq \varepsilon \cdot(t_2 - T)  
\end{array}
$$

% \begin{multline*}
% \!\!\!u_{t_1} \cdot (t_2 - T) + (t - t_1) \cdot (t_2 - T) \cdot\nu + u_{t_2} \cdot (T-t) - \varepsilon \cdot(t_2 - T) \leq \\
%   \hfill  u_{t_1} \cdot (t_2 - t) + (t_2 - t) \cdot (T - t_1) \cdot \nu \\
%   \hfill u_{t_1} \cdot (t-T) + u_{t_2} \cdot (T-t) + \nu \cdot (t \cdot t_2 + t_1 \cdot T - t_2 \cdot T - t \cdot t_1) - \varepsilon \cdot(t_2 - T)   \leq  0 \\
%   \hfill (u_{t_2} - u_{t_1}) \cdot (T-t) + \nu \cdot (t_2 - t_1) \cdot (t-T) - \varepsilon \cdot(t_2 - T)   \leq  0 \\
%   \hfill (T-t) \cdot \left(\frac{u_{t_2} - u_{t_1}}{t_2 - t_1} - \nu\right) \cdot (t_2 - t_1) - \varepsilon \cdot(t_2 - T)   \leq  0 \\
% \end{multline*}

%\begin{multline*}
% u_{t_1} \cdot (t - t_1) \cdot (t_2 - T) + (t_2 - t_1) \cdot (t - t_1) \cdot (t_2 - T) \cdot \nu \leq
%   \\  (t \cdot t_2 + t_1 \cdot T - t_1 \cdot t_2 - t \cdot T) \cdot u_{t_2} 
%\end{multline*}

We consider two cases:
$(i)$ if the infimum $\nu$ is attained, then we have $\nu = \frac{u_{t_2} - u_{t_1}}{t_2 - t_1}$
for some $t_2 \geq T$, and the inequality holds; $(ii)$ otherwise, 
we can choose $t_2$ arbitrarily, and large enough to ensure that 
$(T-t) \cdot \left(\frac{u_{t_2} - u_{t_1}}{t_2 - t_1} - \nu\right) $ is smaller than $\frac{\varepsilon}{2}$,
so that the inequality holds.
%\qed
\end{proof}
\end{longversion}

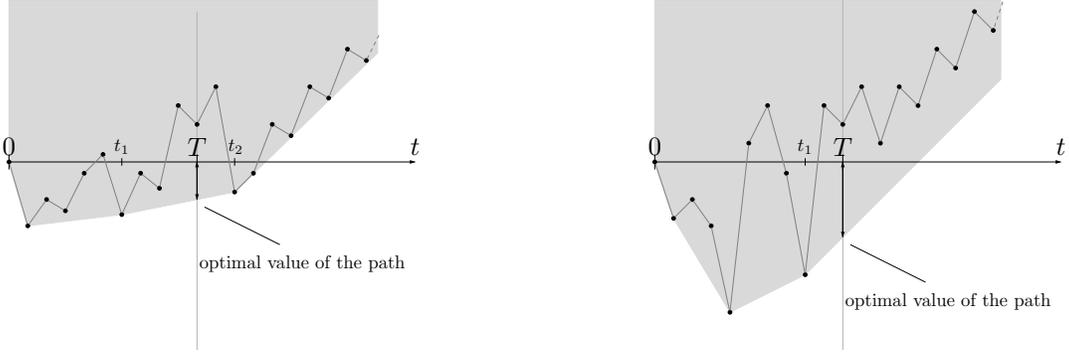
\begin{figure*}
  \hrule
  \begin{minipage}[b]{.48\linewidth}
     \centering %\renewcommand{\sb}[1]{\scalebox{0.75}[1]{#1}}

%{\scriptsize 
\scalebox{0.5}{
\begin{picture}(112,95)(0,5)
%\put(0,0){\framebox(112,95){}}

%\gasset{Nw=9,Nh=9,Nmr=4.5,rdist=1, loopdiam=6}
%\gasset{Nw=10,Nh=6,Nmr=3,rdist=1, loopdiam=5}
\gasset{Nw=6,Nh=6,Nmr=3, rdist=1, loopdiam=5}
%\gasset{Nw=5,Nh=5,Nmr=2.5,rdist=1, loopdiam=6, linewidth=0.12}

%% Conves hull
\drawpolygon[AHnb=0,arcradius=0, linewidth=0, linegray=.85, fillgray=.85](2,98)(2,55)(7,38)(32,41)(62,47)(100,84)(100,98)
%\drawpolygon[AHnb=0,arcradius=0, linewidth=0, linegray=.85, fillgray=.85](2,98)(2,55)(7,40)(22,15)(42,25)(94,77)(94,98)

%% vertical T-line
\drawline[AHnb=0,arcradius=1, linegray=.7](52,5)(52,95)

%% Horizontal main line
\drawline[AHnb=1,arcradius=1](2,55)(110,55)

%% Tick at 0 and T
\drawline[AHnb=0,arcradius=1](2,53)(2,57)
\drawline[AHnb=0,arcradius=1](52,53)(52,57)

%% Tick at t1 and t2
\drawline[AHnb=0,arcradius=1](32,54)(32,56)
\drawline[AHnb=0,arcradius=1](62,54)(62,56)

%% Label at 0 and T
\node[Nmarks=n, Nframe=n](n1)(2,59){\scalebox{2}{$0$}}
\node[Nmarks=n, Nframe=n](n1)(52,59){\scalebox{2}{$T$}}
\node[Nmarks=n, Nframe=n](n1)(32,59){\scalebox{1.4}{$t_1$}}
\node[Nmarks=n, Nframe=n](n1)(62.3,59){\scalebox{1.4}{$t_2$}}
\node[Nmarks=n, Nframe=n](n1)(110,59){\scalebox{2}{$t$}}

%% bottom optimal line
%\drawline[AHnb=0,arcradius=1](2,35)(107,56)
\drawline[AHnb=1,ATnb=1,arcradius=1, linewidth=.35](52,45)(52,55)

\drawline[AHnb=0,arcradius=1](54,43)(74,33)
%\node[Nmarks=n, Nframe=n, Nh=0, Nw=0, Nmr=0](start)(53,50){}
%\node[Nmarks=n, Nframe=n, Nh=0, Nw=0, Nmr=0](end)(60,40){}
\node[Nmarks=n, Nframe=n, Nh=0, Nw=0, Nmr=0](label)(80,28){\scalebox{1.4}{optimal value of the path}}
%\drawqbpedge[AHnb=0](start,340,end,110){}
%\drawcurve[AHnb=0](53,50)(56,47)(60,40)

\node[Nframe=y, Nh=1,Nw=1,Nmr=.5, Nfill=y](n1)(2,55){}
\node[Nframe=y, Nh=1,Nw=1,Nmr=.5, Nfill=y](n2)(7,38){}
\node[Nframe=y, Nh=1,Nw=1,Nmr=.5, Nfill=y](n3)(12,45){}
\node[Nframe=y, Nh=1,Nw=1,Nmr=.5, Nfill=y](n4)(17,42){}
\node[Nframe=y, Nh=1,Nw=1,Nmr=.5, Nfill=y](n5)(22,52){}
\node[Nframe=y, Nh=1,Nw=1,Nmr=.5, Nfill=y](n6)(27,57){}
\node[Nframe=y, Nh=1,Nw=1,Nmr=.5, Nfill=y](n7)(32,41){}%%
\node[Nframe=y, Nh=1,Nw=1,Nmr=.5, Nfill=y](n8)(37,52){}
\node[Nframe=y, Nh=1,Nw=1,Nmr=.5, Nfill=y](n9)(42,48){}
\node[Nframe=y, Nh=1,Nw=1,Nmr=.5, Nfill=y](n10)(47,70){}
\node[Nframe=y, Nh=1,Nw=1,Nmr=.5, Nfill=y](n11)(52,65){} %% T
\node[Nframe=y, Nh=1,Nw=1,Nmr=.5, Nfill=y](n12)(57,75){}
\node[Nframe=y, Nh=1,Nw=1,Nmr=.5, Nfill=y](n13)(62,47){} %c
\node[Nframe=y, Nh=1,Nw=1,Nmr=.5, Nfill=y](n14)(67,52){}
\node[Nframe=y, Nh=1,Nw=1,Nmr=.5, Nfill=y](n15)(72,65){}
\node[Nframe=y, Nh=1,Nw=1,Nmr=.5, Nfill=y](n16)(77,62){}
\node[Nframe=y, Nh=1,Nw=1,Nmr=.5, Nfill=y](n17)(82,75){}
\node[Nframe=y, Nh=1,Nw=1,Nmr=.5, Nfill=y](n18)(87,72){}
\node[Nframe=y, Nh=1,Nw=1,Nmr=.5, Nfill=y](n19)(92,85){}
\node[Nframe=y, Nh=1,Nw=1,Nmr=.5, Nfill=y](n20)(97,82){}
\node[Nframe=n, Nh=1,Nw=1,Nmr=.5](n21)(101,90){}

%\node[Nframe=y, Nh=1,Nw=1,Nmr=.5, Nfill=y](n22)(107,95){}

\drawedge[ELpos=50, ELside=r, curvedepth=0, AHnb=0, linegray=.5](n1,n2){}
\drawedge[ELpos=50, ELside=r, curvedepth=0, AHnb=0, linegray=.5](n2,n3){}
\drawedge[ELpos=50, ELside=r, curvedepth=0, AHnb=0, linegray=.5](n3,n4){}
\drawedge[ELpos=50, ELside=r, curvedepth=0, AHnb=0, linegray=.5](n4,n5){}
\drawedge[ELpos=50, ELside=r, curvedepth=0, AHnb=0, linegray=.5](n5,n6){}
\drawedge[ELpos=50, ELside=r, curvedepth=0, AHnb=0, linegray=.5](n6,n7){}
\drawedge[ELpos=50, ELside=r, curvedepth=0, AHnb=0, linegray=.5](n7,n8){}
\drawedge[ELpos=50, ELside=r, curvedepth=0, AHnb=0, linegray=.5](n8,n9){}
\drawedge[ELpos=50, ELside=r, curvedepth=0, AHnb=0, linegray=.5](n9,n10){}
\drawedge[ELpos=50, ELside=r, curvedepth=0, AHnb=0, linegray=.5](n10,n11){}
\drawedge[ELpos=50, ELside=r, curvedepth=0, AHnb=0, linegray=.5](n11,n12){}
\drawedge[ELpos=50, ELside=r, curvedepth=0, AHnb=0, linegray=.5](n12,n13){}
\drawedge[ELpos=50, ELside=r, curvedepth=0, AHnb=0, linegray=.5](n13,n14){}
\drawedge[ELpos=50, ELside=r, curvedepth=0, AHnb=0, linegray=.5](n14,n15){}
\drawedge[ELpos=50, ELside=r, curvedepth=0, AHnb=0, linegray=.5](n15,n16){}
\drawedge[ELpos=50, ELside=r, curvedepth=0, AHnb=0, linegray=.5](n16,n17){}
\drawedge[ELpos=50, ELside=r, curvedepth=0, AHnb=0, linegray=.5](n17,n18){}
\drawedge[ELpos=50, ELside=r, curvedepth=0, AHnb=0, linegray=.5](n18,n19){}
\drawedge[ELpos=50, ELside=r, curvedepth=0, AHnb=0, linegray=.5](n19,n20){}
\drawedge[ELpos=50, ELside=r, curvedepth=0, AHnb=0, linegray=.5, dash={1}0](n20,n21){}
%\drawedge[ELpos=50, ELside=r, curvedepth=0, AHnb=0, linegray=.5](n21,n22){}

%% Copy/Paste of the nodes to cover the gray edges

%\node[Nmarks=n](n1)(30,60){$\ver_1$}
%\node[Nmarks=n, Nmr=0](n11)(15,45){$\ver_2$}
%\nodelabel[ExtNL=y, NLangle=270, NLdist=1](n111){{\small val=1}}

%\drawedge[ELpos=50, ELside=r, curvedepth=0, AHLength = 2, AHangle=25, AHlength = 1.71](n1,n11){$-1$}
%\drawedge[ELpos=50, ELside=r, curvedepth=0, AHnb=0, linewidth=.6](n1,n11dummy){}

%\drawedge[ELpos=50, ELside=l, curvedepth=-12](t3,n1){}
%\drawbpedge(t3,70,22,n1,110,22){}
%\drawline[AHnb=1,arcradius=1](113,17.5)(113,29)(5,29)(5,17.5)

%\drawloop[ELside=l,loopCW=y, loopdiam=6](n4){$1$}

%\drawloop[ELside=l,loopCW=y](nk){$0,1$}

%\drawedge[dash={1}0](n3bis,nkbis){$0,1$}

\end{picture}
}
     \subcaption{For the example of \figurename~\ref{fig:geometry1}. {\huge \strut}}\label{fig:ch1}
  \end{minipage}%
  \hfill
  \begin{minipage}[b]{.48\linewidth}
     \centering %\renewcommand{\sb}[1]{\scalebox{0.75}[1]{#1}}

%{\scriptsize 
\scalebox{0.5}{
\begin{picture}(112,95)(0,5)
%\put(0,0){\framebox(112,95){}}

%\gasset{Nw=9,Nh=9,Nmr=4.5,rdist=1, loopdiam=6}
%\gasset{Nw=10,Nh=6,Nmr=3,rdist=1, loopdiam=5}
\gasset{Nw=6,Nh=6,Nmr=3, rdist=1, loopdiam=5}
%\gasset{Nw=5,Nh=5,Nmr=2.5,rdist=1, loopdiam=6, linewidth=0.12}

\drawpolygon[AHnb=0,arcradius=0, linewidth=0, linegray=.85, fillgray=.85](2,98)(2,55)(7,40)(22,15)(42,25)(94,77)(94,98)

%% vertical T-line
\drawline[AHnb=0,arcradius=1, linegray=.7](52,5)(52,98)

%% Horizontalmain line
\drawline[AHnb=1,arcradius=1](2,55)(110,55)

%% Tick at 0 and T
\drawline[AHnb=0,arcradius=1](2,53)(2,57)
\drawline[AHnb=0,arcradius=1](52,53)(52,57)
\drawline[AHnb=0,arcradius=1](42,54)(42,56)

%% Label at 0 and T
\node[Nmarks=n, Nframe=n](n1)(2,59){\scalebox{2}{$0$}}
\node[Nmarks=n, Nframe=n](n1)(52,59){\scalebox{2}{$T$}}
\node[Nmarks=n, Nframe=n](n1)(42,59){\scalebox{1.4}{$t_1$}}
\node[Nmarks=n, Nframe=n](n1)(110,59){\scalebox{2}{$t$}}

%% bottom optimal line
%\drawline[AHnb=0,arcradius=1](22,5)(97,80)
\drawline[AHnb=1,ATnb=1,arcradius=1, linewidth=.35](52,35)(52,55)

\drawline[AHnb=0,arcradius=1](54,33)(74,23)
%\node[Nmarks=n, Nframe=n, Nh=0, Nw=0, Nmr=0](start)(53,45){}
%\node[Nmarks=n, Nframe=n, Nh=0, Nw=0, Nmr=0](end)(60,35){}
\node[Nmarks=n, Nframe=n, Nh=0, Nw=0, Nmr=0](label)(80,18){\scalebox{1.4}{optimal value of the path}}
%\drawqbpedge[AHnb=0](start,340,end,110){}
%\drawcurve[AHnb=0](53,50)(56,47)(60,40)

\node[Nframe=y, Nh=1,Nw=1,Nmr=.5, Nfill=y](n1)(2,55){}
\node[Nframe=y, Nh=1,Nw=1,Nmr=.5, Nfill=y](n2)(7,40){}
\node[Nframe=y, Nh=1,Nw=1,Nmr=.5, Nfill=y](n3)(12,45){}
\node[Nframe=y, Nh=1,Nw=1,Nmr=.5, Nfill=y](n4)(17,38){}
\node[Nframe=y, Nh=1,Nw=1,Nmr=.5, Nfill=y](n5)(22,15){}
\node[Nframe=y, Nh=1,Nw=1,Nmr=.5, Nfill=y](n6)(27,60){}
\node[Nframe=y, Nh=1,Nw=1,Nmr=.5, Nfill=y](n7)(32,70){}
\node[Nframe=y, Nh=1,Nw=1,Nmr=.5, Nfill=y](n8)(37,52){}
\node[Nframe=y, Nh=1,Nw=1,Nmr=.5, Nfill=y](n9)(42,25){} %%
\node[Nframe=y, Nh=1,Nw=1,Nmr=.5, Nfill=y](n10)(47,70){}
\node[Nframe=y, Nh=1,Nw=1,Nmr=.5, Nfill=y](n11)(52,65){} %% T
\node[Nframe=y, Nh=1,Nw=1,Nmr=.5, Nfill=y](n12)(57,75){}
\node[Nframe=y, Nh=1,Nw=1,Nmr=.5, Nfill=y](n13)(62,60){}
\node[Nframe=y, Nh=1,Nw=1,Nmr=.5, Nfill=y](n14)(67,75){}
\node[Nframe=y, Nh=1,Nw=1,Nmr=.5, Nfill=y](n15)(72,70){}
\node[Nframe=y, Nh=1,Nw=1,Nmr=.5, Nfill=y](n16)(77,85){}
\node[Nframe=y, Nh=1,Nw=1,Nmr=.5, Nfill=y](n17)(82,80){}
\node[Nframe=y, Nh=1,Nw=1,Nmr=.5, Nfill=y](n18)(87,95){}
\node[Nframe=y, Nh=1,Nw=1,Nmr=.5, Nfill=y](n19)(92,90){}
\node[Nframe=n, Nh=1,Nw=1,Nmr=.5](n20)(95,99){}
%\node[Nframe=y, Nh=1,Nw=1,Nmr=.5, Nfill=y](n21)(102,100){}
%\node[Nframe=y, Nh=1,Nw=1,Nmr=.5, Nfill=y](n22)(107,95){}

\drawedge[ELpos=50, ELside=r, curvedepth=0, AHnb=0, linegray=.5](n1,n2){}
\drawedge[ELpos=50, ELside=r, curvedepth=0, AHnb=0, linegray=.5](n2,n3){}
\drawedge[ELpos=50, ELside=r, curvedepth=0, AHnb=0, linegray=.5](n3,n4){}
\drawedge[ELpos=50, ELside=r, curvedepth=0, AHnb=0, linegray=.5](n4,n5){}
\drawedge[ELpos=50, ELside=r, curvedepth=0, AHnb=0, linegray=.5](n5,n6){}
\drawedge[ELpos=50, ELside=r, curvedepth=0, AHnb=0, linegray=.5](n6,n7){}
\drawedge[ELpos=50, ELside=r, curvedepth=0, AHnb=0, linegray=.5](n7,n8){}
\drawedge[ELpos=50, ELside=r, curvedepth=0, AHnb=0, linegray=.5](n8,n9){}
\drawedge[ELpos=50, ELside=r, curvedepth=0, AHnb=0, linegray=.5](n9,n10){}
\drawedge[ELpos=50, ELside=r, curvedepth=0, AHnb=0, linegray=.5](n10,n11){}
\drawedge[ELpos=50, ELside=r, curvedepth=0, AHnb=0, linegray=.5](n11,n12){}
\drawedge[ELpos=50, ELside=r, curvedepth=0, AHnb=0, linegray=.5](n12,n13){}
\drawedge[ELpos=50, ELside=r, curvedepth=0, AHnb=0, linegray=.5](n13,n14){}
\drawedge[ELpos=50, ELside=r, curvedepth=0, AHnb=0, linegray=.5](n14,n15){}
\drawedge[ELpos=50, ELside=r, curvedepth=0, AHnb=0, linegray=.5](n15,n16){}
\drawedge[ELpos=50, ELside=r, curvedepth=0, AHnb=0, linegray=.5](n16,n17){}
\drawedge[ELpos=50, ELside=r, curvedepth=0, AHnb=0, linegray=.5](n17,n18){}
\drawedge[ELpos=50, ELside=r, curvedepth=0, AHnb=0, linegray=.5](n18,n19){}
\drawedge[ELpos=50, ELside=r, curvedepth=0, AHnb=0, linegray=.5, dash={1}{1.5}](n19,n20){}
%\drawedge[ELpos=50, ELside=r, curvedepth=0, AHnb=0, linegray=.5](n20,n21){}
%\drawedge[ELpos=50, ELside=r, curvedepth=0, AHnb=0, linegray=.5](n21,n22){}

%% Copy/Paste of the nodes to cover the gray edges

%\node[Nmarks=n](n1)(30,60){$\ver_1$}
%\node[Nmarks=n, Nmr=0](n11)(15,45){$\ver_2$}
%\nodelabel[ExtNL=y, NLangle=270, NLdist=1](n111){{\small val=1}}

%\drawedge[ELpos=50, ELside=r, curvedepth=0, AHLength = 2, AHangle=25, AHlength = 1.71](n1,n11){$-1$}
%\drawedge[ELpos=50, ELside=r, curvedepth=0, AHnb=0, linewidth=.6](n1,n11dummy){}

%\drawedge[ELpos=50, ELside=l, curvedepth=-12](t3,n1){}
%\drawbpedge(t3,70,22,n1,110,22){}
%\drawline[AHnb=1,arcradius=1](113,17.5)(113,29)(5,29)(5,17.5)

%\drawloop[ELside=l,loopCW=y, loopdiam=6](n4){$1$}

%\drawloop[ELside=l,loopCW=y](nk){$0,1$}

%\drawedge[dash={1}0](n3bis,nkbis){$0,1$}

\end{picture}
}
     \subcaption{For the example of \figurename~\ref{fig:geometry2}. {\huge \strut}}\label{fig:ch2}
  \end{minipage}
  \hrule
  \caption{Convex hull interpretation of the value of a path.}\label{fig:ch-all}
\end{figure*}

A corollary of the geometric interpretation lemma is that the value
of a path can be obtained as the intersection of the vertical line at
point $T$ with the boundary of the convex hull of the region above the sequence of
utilities, namely ${\mathrm convexHull}(\{(t,y) \in \nat \times \real \mid y \geq u_t \})$.
This result is illustrated in \figurename~\ref{fig:ch-all}.

\subsubsection{Simple lassos are sufficient}

A \emph{lasso} is a path of the form $A C^{\omega}$
where $A$ and $C$ are finite paths (with $C$ a nonempty cycle),
where $A C^{\omega}$ is $A$ followed by infinite repetition of the cycle $C$. 
A lasso % $AC^{\omega}$
is \emph{simple} if all strict prefixes of the finite path $AC$ are acyclic.
In other words, simple lassos correspond to stationary plans. 

We show that there is always a simple lasso with optimal value. 
Our proof has four steps. 
Given a path $\rho$ that gives the utility sequence $u$, let $\nu$ 
be the slope of $f_u(t)$. 
Given a cycle $C$ in the path $\rho$, let $S_C$ be the sum of the weights in $C$
and let $M_C = \frac{S_C}{\abs{C}}$ be the average weight of the cycle edges.
The cycle $C$ is {\em good} if $M_C \geq \nu$, i.e., the average 
weight of the cycle is at least $\nu$, and {\em bad} otherwise.
\begin{itemize}
\item First, we show (in Lemma~\ref{lem:good-cycle}) that every path  
contains a good cycle.

\item Second, we show (in Lemma~\ref{lem:lasso-exists}) that 
if the first cycle in a path is good, then repeating the cycle cannot 
decrease the value of the path. % (here we use the characterization of Lemma~\ref{lem:lasso-value}).

\item Third, we show (in Lemma~\ref{lem:remove-bad-cycle}) that removing 
a bad cycle from a path cannot decrease the value of the path. 

\item Finally, we show (in Lemma~\ref{lem:simple-lasso-suffice}) that
given any path, using the above two operations of removal of bad cycles 
and repetition of good cycles, we obtain a simple lasso that does not decrease the value of the original path.
%Note that whether a cycle is good or bad depends on the path, as $\nu$ is defined with respect to the path.
%Hence removal of bad cycles may change the status (good/bad) of other cycles. 
%The key is to show that the process converges.

\end{itemize}
Thus we establish that simple lassos (or stationary plans) are sufficient for optimality.
To formalize the ideas we consider the notion of cycle decomposition.
%The structure of the proof is to take an arbitrary path, and show that the cycles
%in the path can be of two kind: the bad cycles and the good cycles, 
%such that we always obtain a path with a greater or equal value by 
%removing the bad cycles from a path, and by repeating forever the good cycles.
%Moreover, we show that all paths have at least one good cycle, therefore we can 
%always construct a simple lasso with a better value than a given path. 

\smallskip\noindent{\em Cycle decomposition.}
The \emph{cycle decomposition} of a path $\rho = e_0 e_1 \dots$ is an infinite sequence of simple 
cycles $C_1, C_2, \dots$ obtained as follows: push successively $e_0, e_1, \dots$ onto a stack, and 
whenever we push an edge that closes a (simple) cycle, we remove the cycle from
the stack and append it to the cycle decomposition. Note that the stack content
is always a prefix of a path of length at most~$\abs{V}$. 
%For finite paths $\rho$,  the sum of the weights along $\rho$ is 

\begin{lemma}\label{lem:good-cycle}
Let $T \in \nat$. Given a path $\rho$ that induces a sequence $u$ of utilities, 
let $\nu = \min_{0 \leq t_1 \leq T} \,\, \inf_{t_2 \geq T} \,\, \frac{u_{t_2} - u_{t_1}}{t_2 - t_1}$. 
Then, in the cycle decomposition of $\rho$ there exists a simple cycle $C$ with $M_C \geq \nu$. 
\end{lemma}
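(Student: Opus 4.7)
\medskip

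\textbf{Proof plan.} The plan is to exploit the cycle decomposition to write the utility $u_t$ almost exactly as a sum of cycle weights plus a bounded remainder, and then argue by contradiction.

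\emph{Step 1: Decomposition identity.} At any time $t$, let $k(t)$ be the number of simple cycles $C_1,\dots,C_{k(t)}$ popped from the stack by step~$t$, and let $s(t)$ be the number of edges currently on the stack. Writing $R(t)$ for the sum of weights of edges currently on the stack, one has the identities
\begin{align*}
t &= \sum_{i=1}^{k(t)} \abs{C_i} + s(t), \qquad 0 \leq s(t) \leq \abs{V}, \\
u_t &= \sum_{i=1}^{k(t)} S_{C_i} + R(t), \qquad \abs{R(t)} \leq \abs{V}\cdot W,
\end{align*}
where $W$ is the largest absolute weight of $G$. These are immediate from the stack construction of the cycle decomposition.

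\emph{Step 2: Contradiction setup.} Assume towards a contradiction that every cycle $C_i$ in the decomposition satisfies $M_{C_i} < \nu$. Since the graph $G$ has only finitely many simple cycles, the set $\{M_C : C \text{ simple cycle in } G\}$ is finite, so there is a uniform gap: $\Lambda := \sup_i M_{C_i} < \nu$, and in fact $M_{C_i} \leq \Lambda$ for all~$i$. Combining this with Step~1 gives, for every $t$,
\[
u_t \;=\; \sum_{i=1}^{k(t)} M_{C_i}\cdot \abs{C_i} + R(t) \;\leq\; \Lambda\cdot(t - s(t)) + R(t) \;\leq\; \Lambda\cdot t + K,
\]
where $K = \abs{V}\cdot W + \abs{V}\cdot\abs{\Lambda}$ is a constant depending only on $G$.

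\emph{Step 3: Contradiction with the definition of $\nu$.} Fix any $t_1 \leq T$ (e.g.\ a minimizer in the definition of $\nu$). For every $t_2 \geq T$,
\[
\frac{u_{t_2} - u_{t_1}}{t_2 - t_1} \;\leq\; \frac{\Lambda\cdot t_2 + K - u_{t_1}}{t_2 - t_1} \;=\; \Lambda + \frac{\Lambda\cdot t_1 + K - u_{t_1}}{t_2 - t_1}.
\]
Letting $t_2 \to \infty$, the right-hand side tends to $\Lambda$, hence $\inf_{t_2 \geq T}\frac{u_{t_2} - u_{t_1}}{t_2 - t_1} \leq \Lambda$. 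Therefore $\nu \leq \Lambda$, contradicting $\Lambda < \nu$. Hence some $C_i$ in the decomposition satisfies $M_{C_i} \geq \nu$, as required.

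\emph{Main obstacle.} The only delicate point is recognising that the cycle decomposition "captures" the utility up to a remainder bounded independently of $t$: the stack always holds at most $\abs{V}$ edges, so both the time and the weight it still carries are uniformly bounded. Once this is stated cleanly, the long-term slope of $u_t$ is controlled by the worst (largest) cycle mean appearing in the decomposition, and the contradiction with the infimum in the definition of $\nu$ follows immediately by letting $t_2 \to \infty$.
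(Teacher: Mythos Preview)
Your proof is correct and follows essentially the same approach as the paper: argue by contradiction that if every cycle in the decomposition has mean below $\nu$, then the long-run slope of $u_t$ is strictly below $\nu$, forcing $\inf_{t_2 \geq T}\frac{u_{t_2}-u_{t_1}}{t_2-t_1} < \nu$ and contradicting the definition of $\nu$. The paper's proof is a terse sketch that simply asserts the $\liminf$ inequality; your version supplies the missing mechanics (the decomposition identity $u_t = \sum S_{C_i} + R(t)$ with uniformly bounded remainder, and the uniform gap $\Lambda < \nu$ via finiteness of simple cycles), which is exactly what is needed to make that assertion rigorous.
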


\begin{proof}
Towards contradiction, assume that all the (finitely many) cycles $C$ in the cycle decomposition 
of $\rho$ are such that $M_C < \nu$. Let $t_1$ be a left-minimizer of $\rho$. 
Since all cycles in $\rho$ have average weight smaller than $\nu$, we have: 
$$ \liminf_{t_2 \to \infty} \frac{u_{t_2} - u_{t_1}}{t_2 - t_1} < \nu $$
Since the infimum is bounded by the liminf, it follows that
$$ \min_{0 \leq t_1 \leq T} \, \inf_{t_2 \geq T} \frac{u_{t_2} - u_{t_1}}{t_2 - t_1} < \nu $$
which is in contradiction with the definition of $\nu$.
%\qed
\end{proof}

We show that repeating a good cycle, and removing a bad cycle from a path
cannot decrease the value of the path.

\begin{lemma}\label{lem:lasso-exists}
Let $T \in \nat$. If the first cycle $C$ in the cycle decomposition of a path $\rho$ is good,
i.e.,  $M_C \geq \nu$ where $\nu = \min_{0 \leq t_1 \leq T} \,\, \inf_{t_2 \geq T} \,\, \frac{u_{t_2} - u_{t_1}}{t_2 - t_1}$,
then there exists a lasso $\rho'$ such that $\val(\rho',T) \geq \val(\rho,T)$.
\end{lemma}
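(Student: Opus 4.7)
The plan is to exhibit the lasso explicitly. Write $\rho = A \cdot C \cdot D$, where $A$ is the (acyclic) prefix of $\rho$ consisting of all edges pushed onto the decomposition stack before the first cycle closes, $C$ is the first simple cycle (starting and ending at $\ends(A)$), and $D$ is the remainder. Then take $\rho' = A\,C^{\omega}$. Let $u$ and $u'$ be the utility sequences of $\rho$ and $\rho'$, respectively, and let $t_1^{*}$ be a left-minimizer of $u$, so that by Lemma~\ref{lem:geometric-interpretation} we have $u_t \geq f_u(t) = u_{t_1^{*}} + (t - t_1^{*})\cdot\nu$ for all $t \geq 0$, and $\val(\rho,T) = f_u(T)$.

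The heart of the proof is to establish the pointwise bound $u'_t \geq f_u(t)$ for every $t \geq 0$. For $t \leq \abs{A} + \abs{C}$, the paths $\rho$ and $\rho'$ coincide and the bound is inherited from $u$. For $t > \abs{A} + \abs{C}$, write $t = \abs{A} + j + m\cdot\abs{C}$ with $m \geq 1$ and $0 \leq j < \abs{C}$. Since $\rho'$ repeats $C$ indefinitely,
$$ u'_t \;=\; u_{\abs{A}+j} + m\cdot S_C. $$
Combining $u_{\abs{A}+j} \geq f_u(\abs{A}+j)$ with the goodness hypothesis $S_C \geq \abs{C}\cdot\nu$ yields
$$ u'_t \;\geq\; u_{t_1^{*}} + (\abs{A}+j-t_1^{*})\cdot\nu + m\cdot\abs{C}\cdot\nu \;=\; f_u(t). $$
Applying Lemma~\ref{lem:lower-line} to $u'$ with the line $t \mapsto \nu\cdot t + (u_{t_1^{*}} - t_1^{*}\cdot\nu)$ then gives $\val(\rho',T) \geq f_u(T) = \val(\rho,T)$, as required.

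The main subtlety is reading off, from the stack-based definition of cycle decomposition, that the first popped cycle is closed precisely the first time a vertex is revisited; consequently $\rho$ admits a clean split into an acyclic prefix $A$ followed by a simple cycle $C$, and $A\,C^{\omega}$ is a well-defined lasso (in fact a simple one, so the same construction will also be useful for Lemma~\ref{lem:simple-lasso-suffice}). Once this decomposition is in hand, the estimate above is a one-line algebraic consequence of goodness, and the geometric interpretation does the rest.
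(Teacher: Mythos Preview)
Your proof is correct and follows essentially the same route as the paper: form the lasso $\rho' = A\,C^{\omega}$, show $u'_t \geq f_u(t)$ pointwise (trivially on the common prefix, and via $S_C \geq \abs{C}\cdot\nu$ together with $u_{|A|+j} \geq f_u(|A|+j)$ on the periodic tail), then invoke Lemma~\ref{lem:lower-line}. The only cosmetic difference is that the paper parametrizes the tail by choosing $k$ with $\abs{A} \leq t - k\abs{C} \leq \abs{A}+\abs{C}$, whereas you write $t = \abs{A}+j+m\abs{C}$; the two are equivalent.
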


%\begin{figure}[!tb]
%  \begin{center}
%    \hrule
%      \input{figures/repeat-lasso.tex}
%    \hrule
%      \caption{Repeating a good cycle cannot decrease the value (Lemma~\ref{lem:lasso-exists}). \label{fig:repeat-lasso}}
%  \end{center}
%\end{figure}

%{\bf KRISH CHECK PROOF. $\rho$ to $u$ and $\rho'$ to $u'$.}

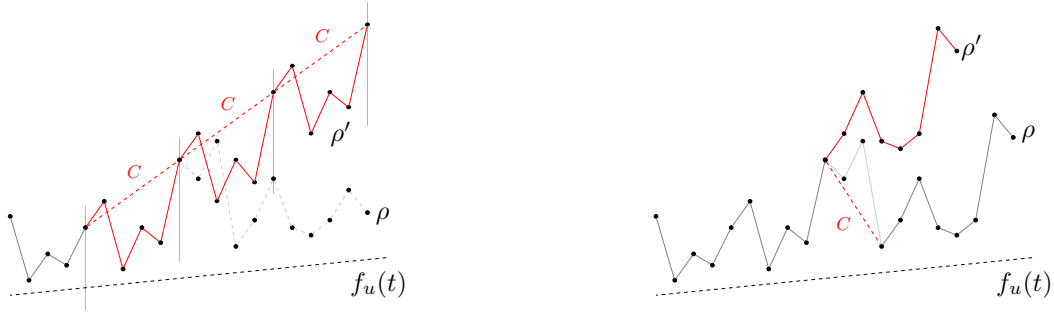
\begin{figure*}
  \hrule
  \begin{minipage}[b]{.48\linewidth}
     \centering %\renewcommand{\sb}[1]{\scalebox{0.75}[1]{#1}}

%{\scriptsize 
\scalebox{0.5}{
\begin{picture}(105,85)(0,5)
%\put(0,0){\framebox(105,95){}}

\gasset{Nw=9,Nh=9,Nmr=4.5,rdist=1, loopdiam=6}   %loopCW=y

%\node[Nmarks=i, Nmr=0](q0)(10,20){$q_0$}
%\node[Nmarks=n](q1)(30,30){$q_1$}
%\rpnode[Nmarks=n](r1)(50,30)(4,3.5){}

%\node[Nframe=n](label)(0,70){$\ov{G}$}

%% vertical lines
\drawline[AHnb=0,arcradius=1, linegray=.7](22,5)(22,33)
\drawline[AHnb=0,arcradius=1, linegray=.7](47,18)(47,51)
\drawline[AHnb=0,arcradius=1, linegray=.7](72,36)(72,69)
\drawline[AHnb=0,arcradius=1, linegray=.7](97,54)(97,87)

%% bottom optimal line
\drawline[AHnb=0,arcradius=1, dash={1}0](2,9)(102,19)
%\drawline[AHnb=1,ATnb=1,arcradius=1, linewidth=.35, linecolor=red](52,45)(52,55)
\node[Nmarks=n, Nframe=n](n1)(100,12){\scalebox{2}{$f_u(t)$}}

%\node[Nmarks=n, Nframe=n, Nh=0, Nw=0, Nmr=0](start)(53,50){}
%\node[Nmarks=n, Nframe=n, Nh=0, Nw=0, Nmr=0](end)(60,40){}
%\node[Nmarks=n, Nframe=n, Nh=0, Nw=0, Nmr=0](label)(75,38){\scalebox{.7}{value of the path}}

%\drawqbpedge[AHnb=0](start,340,end,110){}
%\drawcurve[AHnb=0](53,50)(56,47)(60,40)

%% label
\node[Nmarks=n, Nframe=n](n1)(101,30){\scalebox{2}{$\rho$}}
\node[Nmarks=n, Nframe=n](n1)(90,52){\scalebox{2}{$\rho'$}}

\node[Nmarks=n, Nframe=n](n1)(35,42){\scalebox{1.4}{\red{$C$}}}
\node[Nmarks=n, Nframe=n](n1)(60,60){\scalebox{1.4}{\red{$C$}}}
\node[Nmarks=n, Nframe=n](n1)(85,78){\scalebox{1.4}{\red{$C$}}}

\node[Nframe=y, Nh=1,Nw=1,Nmr=.5, Nfill=y](n1)(2,30){}
\node[Nframe=y, Nh=1,Nw=1,Nmr=.5, Nfill=y](n2)(7,13){}
\node[Nframe=y, Nh=1,Nw=1,Nmr=.5, Nfill=y](n3)(12,20){}
\node[Nframe=y, Nh=1,Nw=1,Nmr=.5, Nfill=y](n4)(17,17){}
\node[Nframe=y, Nh=1,Nw=1,Nmr=.5, Nfill=y](n5)(22,27){}
\node[Nframe=y, Nh=1,Nw=1,Nmr=.5, Nfill=y](n6)(27,34){}
\node[Nframe=y, Nh=1,Nw=1,Nmr=.5, Nfill=y](n7)(32,16){}%%
\node[Nframe=y, Nh=1,Nw=1,Nmr=.5, Nfill=y](n8)(37,27){}
\node[Nframe=y, Nh=1,Nw=1,Nmr=.5, Nfill=y](n9)(42,23){}
\node[Nframe=y, Nh=1,Nw=1,Nmr=.5, Nfill=y](n10)(47,45){}
\node[Nframe=y, Nh=1,Nw=1,Nmr=.5, Nfill=y](n11)(52,40){} %% T
\node[Nframe=y, Nh=1,Nw=1,Nmr=.5, Nfill=y](n12)(57,50){}
\node[Nframe=y, Nh=1,Nw=1,Nmr=.5, Nfill=y](n13)(62,22){} %c
\node[Nframe=y, Nh=1,Nw=1,Nmr=.5, Nfill=y](n14)(67,29){}
\node[Nframe=y, Nh=1,Nw=1,Nmr=.5, Nfill=y](n15)(72,40){}
\node[Nframe=y, Nh=1,Nw=1,Nmr=.5, Nfill=y](n16)(77,27){}
\node[Nframe=y, Nh=1,Nw=1,Nmr=.5, Nfill=y](n17)(82,25){}
\node[Nframe=y, Nh=1,Nw=1,Nmr=.5, Nfill=y](n18)(87,29){}
\node[Nframe=y, Nh=1,Nw=1,Nmr=.5, Nfill=y](n19)(92,37){}
\node[Nframe=y, Nh=1,Nw=1,Nmr=.5, Nfill=y](n20)(97,31){}
%\node[Nframe=n, Nh=1,Nw=1,Nmr=.5](n21)(99,62){}

%\node[Nframe=y, Nh=1,Nw=1,Nmr=.5, Nfill=y](n22)(107,95){}

\drawedge[ELpos=50, ELside=r, curvedepth=0, AHnb=0, linegray=.5](n1,n2){}
\drawedge[ELpos=50, ELside=r, curvedepth=0, AHnb=0, linegray=.5](n2,n3){}
\drawedge[ELpos=50, ELside=r, curvedepth=0, AHnb=0, linegray=.5](n3,n4){}
\drawedge[ELpos=50, ELside=r, curvedepth=0, AHnb=0, linegray=.5](n4,n5){}
\drawedge[ELpos=50, ELside=r, curvedepth=0, AHnb=0, linecolor=red](n5,n6){}
\drawedge[ELpos=50, ELside=r, curvedepth=0, AHnb=0, linecolor=red](n6,n7){}
\drawedge[ELpos=50, ELside=r, curvedepth=0, AHnb=0, linecolor=red](n7,n8){}
\drawedge[ELpos=50, ELside=r, curvedepth=0, AHnb=0, linecolor=red](n8,n9){}
\drawedge[ELpos=50, ELside=r, curvedepth=0, AHnb=0, linecolor=red](n9,n10){}
\gasset{dash={1}0}
\drawedge[ELpos=50, ELside=r, curvedepth=0, AHnb=0, linegray=.8](n10,n11){}
\drawedge[ELpos=50, ELside=r, curvedepth=0, AHnb=0, linegray=.8](n11,n12){}
\drawedge[ELpos=50, ELside=r, curvedepth=0, AHnb=0, linegray=.8](n12,n13){}
\drawedge[ELpos=50, ELside=r, curvedepth=0, AHnb=0, linegray=.8](n13,n14){}
\drawedge[ELpos=50, ELside=r, curvedepth=0, AHnb=0, linegray=.8](n14,n15){}
\drawedge[ELpos=50, ELside=r, curvedepth=0, AHnb=0, linegray=.8](n15,n16){}
\drawedge[ELpos=50, ELside=r, curvedepth=0, AHnb=0, linegray=.8](n16,n17){}
\drawedge[ELpos=50, ELside=r, curvedepth=0, AHnb=0, linegray=.8](n17,n18){}
\drawedge[ELpos=50, ELside=r, curvedepth=0, AHnb=0, linegray=.8](n18,n19){}
\drawedge[ELpos=50, ELside=r, curvedepth=0, AHnb=0, linegray=.8](n19,n20){}
%\drawedge[ELpos=50, ELside=r, curvedepth=0, AHnb=0, linegray=.5, dash={1}0](n20,n21){}
%\drawedge[ELpos=50, ELside=r, curvedepth=0, AHnb=0, linegray=.5](n21,n22){}

\node[Nframe=y, Nh=1,Nw=1,Nmr=.5, Nfill=y](n10)(47,45){}
\node[Nframe=y, Nh=1,Nw=1,Nmr=.5, Nfill=y](n11)(52,52){}
\node[Nframe=y, Nh=1,Nw=1,Nmr=.5, Nfill=y](n12)(57,34){}
\node[Nframe=y, Nh=1,Nw=1,Nmr=.5, Nfill=y](n13)(62,45){}
\node[Nframe=y, Nh=1,Nw=1,Nmr=.5, Nfill=y](n14)(67,39){}
\node[Nframe=y, Nh=1,Nw=1,Nmr=.5, Nfill=y](n15)(72,63){}
\node[Nframe=y, Nh=1,Nw=1,Nmr=.5, Nfill=y](n16)(77,70){}
\node[Nframe=y, Nh=1,Nw=1,Nmr=.5, Nfill=y](n17)(82,52){}
\node[Nframe=y, Nh=1,Nw=1,Nmr=.5, Nfill=y](n18)(87,63){}
\node[Nframe=y, Nh=1,Nw=1,Nmr=.5, Nfill=y](n19)(92,59){}
\node[Nframe=y, Nh=1,Nw=1,Nmr=.5, Nfill=y](n20)(97,81){}

\drawedge[ELpos=50, ELside=r, curvedepth=0, AHnb=0, linecolor=red](n5,n10){}
\drawedge[ELpos=50, ELside=r, curvedepth=0, AHnb=0, linecolor=red](n10,n15){}
\drawedge[ELpos=50, ELside=r, curvedepth=0, AHnb=0, linecolor=red](n15,n20){}

\gasset{linegray=.5,dash={}0}
\gasset{linecolor=red}
\drawedge[ELpos=50, ELside=r, curvedepth=0, AHnb=0](n10,n11){}
\drawedge[ELpos=50, ELside=r, curvedepth=0, AHnb=0](n11,n12){}
\drawedge[ELpos=50, ELside=r, curvedepth=0, AHnb=0](n12,n13){}
\drawedge[ELpos=50, ELside=r, curvedepth=0, AHnb=0](n13,n14){}
\drawedge[ELpos=50, ELside=r, curvedepth=0, AHnb=0](n14,n15){}
\drawedge[ELpos=50, ELside=r, curvedepth=0, AHnb=0](n15,n16){}
\drawedge[ELpos=50, ELside=r, curvedepth=0, AHnb=0](n16,n17){}
\drawedge[ELpos=50, ELside=r, curvedepth=0, AHnb=0](n17,n18){}
\drawedge[ELpos=50, ELside=r, curvedepth=0, AHnb=0](n18,n19){}
\drawedge[ELpos=50, ELside=r, curvedepth=0, AHnb=0](n19,n20){}

%\drawloop[ELpos=50, loopangle=90](A){{\scriptsize $-1$}}
%\drawloop[ELpos=50](D){{\scriptsize $-2$}}

%\drawedge[ELpos=40, ELside=r, curvedepth=0](s0,s3){$\bar r$}

\end{picture}

}
     \subcaption{Repeating a good cycle (Lemma~\ref{lem:lasso-exists}). {\huge \strut}}\label{fig:repeat-lasso}
  \end{minipage}%
  \hfill
  \begin{minipage}[b]{.48\linewidth}
     \centering %\renewcommand{\sb}[1]{\scalebox{0.75}[1]{#1}}

%{\scriptsize 
\scalebox{0.5}{
\begin{picture}(105,85)(0,5)
%\put(0,0){\framebox(105,95){}}

\gasset{Nw=9,Nh=9,Nmr=4.5,rdist=1, loopdiam=6}   %loopCW=y

%\node[Nmarks=i, Nmr=0](q0)(10,20){$q_0$}
%\node[Nmarks=n](q1)(30,30){$q_1$}
%\rpnode[Nmarks=n](r1)(50,30)(4,3.5){}

%\node[Nframe=n](label)(0,70){$\ov{G}$}

%% bottom optimal line
\drawline[AHnb=0,arcradius=1, dash={1}0](2,9)(102,19)
\node[Nmarks=n, Nframe=n](n1)(100,12){\scalebox{2}{$f_u(t)$}}

%% label
\node[Nmarks=n, Nframe=n](n1)(101,52){\scalebox{2}{$\rho$}}
\node[Nmarks=n, Nframe=n](n1)(86,75){\scalebox{2}{$\rho'$}}

\node[Nmarks=n, Nframe=n](n1)(52,28){\scalebox{1.4}{\red{$C$}}}

%\node[Nmarks=n, Nframe=n, Nh=0, Nw=0, Nmr=0](start)(53,50){}
%\node[Nmarks=n, Nframe=n, Nh=0, Nw=0, Nmr=0](end)(60,40){}
%\node[Nmarks=n, Nframe=n, Nh=0, Nw=0, Nmr=0](label)(75,38){\scalebox{.7}{value of the path}}

%\drawqbpedge[AHnb=0](start,340,end,110){}
%\drawcurve[AHnb=0](53,50)(56,47)(60,40)

\node[Nframe=y, Nh=1,Nw=1,Nmr=.5, Nfill=y](n1)(2,30){}
\node[Nframe=y, Nh=1,Nw=1,Nmr=.5, Nfill=y](n2)(7,13){}
\node[Nframe=y, Nh=1,Nw=1,Nmr=.5, Nfill=y](n3)(12,20){}
\node[Nframe=y, Nh=1,Nw=1,Nmr=.5, Nfill=y](n4)(17,17){}
\node[Nframe=y, Nh=1,Nw=1,Nmr=.5, Nfill=y](n5)(22,27){}
\node[Nframe=y, Nh=1,Nw=1,Nmr=.5, Nfill=y](n6)(27,34){}
\node[Nframe=y, Nh=1,Nw=1,Nmr=.5, Nfill=y](n7)(32,16){}%%
\node[Nframe=y, Nh=1,Nw=1,Nmr=.5, Nfill=y](n8)(37,27){}
\node[Nframe=y, Nh=1,Nw=1,Nmr=.5, Nfill=y](n9)(42,23){}
\node[Nframe=y, Nh=1,Nw=1,Nmr=.5, Nfill=y](n10)(47,45){}
\node[Nframe=y, Nh=1,Nw=1,Nmr=.5, Nfill=y](n11)(52,40){} %% T
\node[Nframe=y, Nh=1,Nw=1,Nmr=.5, Nfill=y](n12)(57,50){}
\node[Nframe=y, Nh=1,Nw=1,Nmr=.5, Nfill=y](n13)(62,22){} %c
\node[Nframe=y, Nh=1,Nw=1,Nmr=.5, Nfill=y](n14)(67,29){}
\node[Nframe=y, Nh=1,Nw=1,Nmr=.5, Nfill=y](n15)(72,40){}
\node[Nframe=y, Nh=1,Nw=1,Nmr=.5, Nfill=y](n16)(77,27){}
\node[Nframe=y, Nh=1,Nw=1,Nmr=.5, Nfill=y](n17)(82,25){}
\node[Nframe=y, Nh=1,Nw=1,Nmr=.5, Nfill=y](n18)(87,29){}
\node[Nframe=y, Nh=1,Nw=1,Nmr=.5, Nfill=y](n19)(92,57){}
\node[Nframe=y, Nh=1,Nw=1,Nmr=.5, Nfill=y](n20)(97,51){}
%\node[Nframe=n, Nh=1,Nw=1,Nmr=.5](n21)(99,62){}

%\node[Nframe=y, Nh=1,Nw=1,Nmr=.5, Nfill=y](n22)(107,95){}

\drawedge[ELpos=50, ELside=r, curvedepth=0, AHnb=0, linegray=.5](n1,n2){}
\drawedge[ELpos=50, ELside=r, curvedepth=0, AHnb=0, linegray=.5](n2,n3){}
\drawedge[ELpos=50, ELside=r, curvedepth=0, AHnb=0, linegray=.5](n3,n4){}
\drawedge[ELpos=50, ELside=r, curvedepth=0, AHnb=0, linegray=.5](n4,n5){}
\drawedge[ELpos=50, ELside=r, curvedepth=0, AHnb=0, linegray=.5](n5,n6){}
\drawedge[ELpos=50, ELside=r, curvedepth=0, AHnb=0, linegray=.5](n6,n7){}
\drawedge[ELpos=50, ELside=r, curvedepth=0, AHnb=0, linegray=.5](n7,n8){}
\drawedge[ELpos=50, ELside=r, curvedepth=0, AHnb=0, linegray=.5](n8,n9){}
\drawedge[ELpos=50, ELside=r, curvedepth=0, AHnb=0, linegray=.5](n9,n10){}
\drawedge[ELpos=50, ELside=r, curvedepth=0, AHnb=0, linegray=.8](n10,n11){}
\drawedge[ELpos=50, ELside=r, curvedepth=0, AHnb=0, linegray=.8](n11,n12){}
\drawedge[ELpos=50, ELside=r, curvedepth=0, AHnb=0, linegray=.8](n12,n13){}
\drawedge[ELpos=50, ELside=r, curvedepth=0, AHnb=0, linegray=.5](n13,n14){}
\drawedge[ELpos=50, ELside=r, curvedepth=0, AHnb=0, linegray=.5](n14,n15){}
\drawedge[ELpos=50, ELside=r, curvedepth=0, AHnb=0, linegray=.5](n15,n16){}
\drawedge[ELpos=50, ELside=r, curvedepth=0, AHnb=0, linegray=.5](n16,n17){}
\drawedge[ELpos=50, ELside=r, curvedepth=0, AHnb=0, linegray=.5](n17,n18){}
\drawedge[ELpos=50, ELside=r, curvedepth=0, AHnb=0, linegray=.5](n18,n19){}
\drawedge[ELpos=50, ELside=r, curvedepth=0, AHnb=0, linegray=.5](n19,n20){}
%\drawedge[ELpos=50, ELside=r, curvedepth=0, AHnb=0, linegray=.5, dash={1}0](n20,n21){}
%\drawedge[ELpos=50, ELside=r, curvedepth=0, AHnb=0, linegray=.5](n21,n22){}

\drawedge[ELpos=50, ELside=r, curvedepth=0, AHnb=0, linecolor=red, dash={1}0](n10,n13){}

\node[Nframe=y, Nh=1,Nw=1,Nmr=.5, Nfill=y](n10)(47,45){}
%\node[Nframe=y, Nh=1,Nw=1,Nmr=.5, Nfill=y](n11)(52,65){} %% T
%\node[Nframe=y, Nh=1,Nw=1,Nmr=.5, Nfill=y](n12)(57,75){}
%\node[Nframe=y, Nh=1,Nw=1,Nmr=.5, Nfill=y](n13)(62,47){} %c
\node[Nframe=y, Nh=1,Nw=1,Nmr=.5, Nfill=y](n11)(52,52){}
\node[Nframe=y, Nh=1,Nw=1,Nmr=.5, Nfill=y](n12)(57,63){}
\node[Nframe=y, Nh=1,Nw=1,Nmr=.5, Nfill=y](n13)(62,50){}
\node[Nframe=y, Nh=1,Nw=1,Nmr=.5, Nfill=y](n14)(67,48){}
\node[Nframe=y, Nh=1,Nw=1,Nmr=.5, Nfill=y](n15)(72,52){}
\node[Nframe=y, Nh=1,Nw=1,Nmr=.5, Nfill=y](n16)(77,80){}
\node[Nframe=y, Nh=1,Nw=1,Nmr=.5, Nfill=y](n17)(82,74){}

\gasset{linegray=.5}
\gasset{linecolor=red}
\drawedge[ELpos=50, ELside=r, curvedepth=0, AHnb=0](n10,n11){}
\drawedge[ELpos=50, ELside=r, curvedepth=0, AHnb=0](n11,n12){}
\drawedge[ELpos=50, ELside=r, curvedepth=0, AHnb=0](n12,n13){}
\drawedge[ELpos=50, ELside=r, curvedepth=0, AHnb=0](n13,n14){}
\drawedge[ELpos=50, ELside=r, curvedepth=0, AHnb=0](n14,n15){}
\drawedge[ELpos=50, ELside=r, curvedepth=0, AHnb=0](n15,n16){}
\drawedge[ELpos=50, ELside=r, curvedepth=0, AHnb=0](n16,n17){}

%\drawloop[ELpos=50, loopangle=90](A){{\scriptsize $-1$}}
%\drawloop[ELpos=50](D){{\scriptsize $-2$}}

%\drawedge[ELpos=40, ELside=r, curvedepth=0](s0,s3){$\bar r$}
\end{picture}

}
     \subcaption{Removing a bad cycle (Lemma~\ref{lem:remove-bad-cycle}). {\huge \strut}}\label{fig:remove-lasso}
  \end{minipage}
  \hrule
  \caption{Constructing a lasso without decreasing the value (Lemma~\ref{lem:lasso-exists} and Lemma~\ref{lem:remove-bad-cycle}).}\label{fig:lasso-all}
\end{figure*}

\begin{proof}
Let $u$ be the sequence of utilities induced by $\rho$.
Since $C$ is the first cycle in $\rho$, there is a prefix of $\rho$ of the form $A C$ where $A$ is a finite path. 
Consider the lasso $\rho' = A C^{\omega}$ and its induced sequence of utilities $u'$. 

We show that the value of $\rho'$ is at least the value of $\rho$. 
By Lemma~\ref{lem:geometric-interpretation}, the optimal value of $u$ is $f_u(T)$,
and the sequence $u$ is above the line $f_u(t)$ (which has slope $\nu$),
i.e., $u(t) \geq f_u(t)$ for all $t \geq 0$. By Lemma~\ref{lem:lower-line}
it is sufficient to show that $u'$ is above the line $f_u(t)$ to establish that 
the optimal value of $u'$ is at least $f_u(T)$, that is $\val(\rho',T) \geq \val(\rho,T)$, 
and conclude the proof (the argument is illustrated in \figurename~\ref{fig:repeat-lasso}).

%Let $f_u(t) = \nu \cdot t + b$. 
We show that $u'(t) \geq f_u(t)$ for all $t \geq 0$:
%\begin{itemize}
%\item 
either $t \leq \abs{A} + \abs{C}$, and then $u'(t) = u(t) \geq f_u(t)$,
%\item 
or $t > \abs{A} + \abs{C}$, and then let $k \in \nat$ such that $\abs{A} \leq t- k\cdot \abs{C} \leq \abs{A} + \abs{C}$,
and we have 
%\begin{comment}
\begin{align*}
  u'(t)         & = u(t - k\cdot \abs{C}) + k \cdot S_C & (\rho' = A C^{\omega}) \\[+4pt]
                & \geq f_u(t - k\cdot \abs{C}) + k \cdot M_C \cdot \abs{C} & (u \text{ is above } f_u(t) \text{ and } S_C = M_C \cdot \abs{C}) \\[+4pt]
                & \geq f_u(t) - \nu \cdot k \cdot \abs{C} + k \cdot M_C \cdot \abs{C} & (f_u(t) \text{ is linear with slope } \nu)  \\[+4pt]
                & \geq f_u(t) + k \cdot \abs{C} \cdot (M_C - \nu) \\[+4pt]
                & \geq f_u(t). & (M_C \geq \nu) \\[+4pt]
\end{align*}
%\end{comment}
\begin{comment}
$$
\begin{array}{rlr}
  u'(t)  & = u(t - k\cdot \abs{C}) + k \cdot S_C & (\rho' = A C^{\omega}) \\[+4pt]
         & \multicolumn{2}{l}{\geq f_u(t - k\cdot \abs{C}) + k \cdot M_C \cdot \abs{C}}     \\[+4pt]
  \multicolumn{3}{r}{(u \text{ is above } f_u(t) \text{ and } S_C = M_C \cdot \abs{C})} \\[+4pt]
         & \multicolumn{2}{l}{\geq f_u(t) - \nu \cdot k \cdot \abs{C} + k \cdot M_C \cdot \abs{C}}  \\[+4pt]
  \multicolumn{3}{r}{(f_u(t) \text{ is linear with slope } \nu)}  \\[+4pt]
         & \geq f_u(t) + k \cdot \abs{C} \cdot (M_C - \nu) & \\[+4pt]
         & \geq f_u(t). & (M_C \geq \nu) \\[+4pt]
\end{array}
$$
\end{comment}
%\end{itemize}
%\qed
\end{proof}

%To show that simple lassos are sufficient, we first show that if we remove a bad cycle from a path, then the value of the path increases. 

\begin{lemma}\label{lem:remove-bad-cycle}
Let $T \in \nat$. 
If a path $\rho$ contains a bad cycle $C$,
that is such that $M_C < \nu$ where $\nu = \min_{0 \leq t_1 \leq T} \,\, \inf_{t_2 \geq T} \,\, \frac{u_{t_2} - u_{t_1}}{t_2 - t_1}$,
then removing $C$ from $\rho$ gives a path $\rho'$ such that $\val(\rho',T) \geq \val(\rho,T)$. 
\end{lemma}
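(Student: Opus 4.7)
The plan is to reuse the geometric machinery already in place. Given the path $\rho$ with utility sequence $u$ and bad cycle $C$, I would let $f_u(t)$ be the line from the geometric interpretation (Lemma~\ref{lem:geometric-interpretation}), so that $\val(\rho,T)=f_u(T)$ and $u_t\ge f_u(t)$ for every $t\ge 0$, with slope $\nu$. Writing $\rho'$ for the infinite path obtained by deleting the edges of $C$ from $\rho$, it is enough to verify that the utility sequence $u'$ induced by $\rho'$ satisfies $u'_t\ge f_u(t)$ for all $t\ge 0$: Lemma~\ref{lem:lower-line} then yields $\val(\rho',T)\ge a\cdot T+b=f_u(T)=\val(\rho,T)$, which is the conclusion. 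The figure \figurename~\ref{fig:remove-lasso} already illustrates this idea.

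For the verification, I would index $\rho$ so that $C$ occupies the edge positions $s+1,\dots,s+|C|$. Since the cycle returns to the vertex it started from, the concatenation of the prefix of length $s{+}1$ with the suffix from position $s+|C|+1$ is still a valid infinite path in $G$, so $\rho'$ is well-defined. The utility sequence obeys the explicit shift
\[
u'_t=\begin{cases}u_t& \text{if }t\le s,\\ u_{t+|C|}-S_C & \text{if }t>s,\end{cases}
\]
where $S_C=M_C\cdot|C|$. The case $t\le s$ is immediate from $u_t\ge f_u(t)$.

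For $t>s$, I would combine the geometric bound with the linearity of $f_u$:
\[
u'_t=u_{t+|C|}-S_C\;\ge\; f_u(t+|C|)-M_C\cdot|C|\;=\;f_u(t)+(\nu-M_C)\cdot|C|\;\ge\;f_u(t),
\]
where the last step uses the bad-cycle hypothesis $M_C<\nu$. This completes the argument.

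There is no real obstacle; the only subtlety is bookkeeping. One must be careful that the deletion really produces a legal path in $G$ (which relies on the fact that a cycle in the cycle decomposition truly closes a loop, so the predecessor and successor vertices match) and that the indexing of $u'$ versus $u$ is handled consistently with the paper's convention $u_i=\sum_{0\le k\le i}w(e_k)$. Once those points are fixed, the two-line chain above does the work, and the lemma follows with the same flavour of argument as in Lemma~\ref{lem:lasso-exists}.
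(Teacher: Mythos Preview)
Your proposal is correct and follows essentially the same approach as the paper: both argue via Lemma~\ref{lem:geometric-interpretation} and Lemma~\ref{lem:lower-line} that it suffices to show $u'_t \ge f_u(t)$, split into the two cases $t \le s$ (trivial) and $t > s$, and in the second case run exactly the chain $u'_t = u_{t+\abs{C}} - S_C \ge f_u(t+\abs{C}) - M_C\cdot\abs{C} = f_u(t) + (\nu - M_C)\cdot\abs{C} \ge f_u(t)$. Your $s$ plays the role of the paper's $\abs{A}$, and your explicit attention to the indexing convention is, if anything, a small improvement in rigor.
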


%\begin{figure}[!tb]
%  \begin{center}
%    \hrule
%      \input{figures/remove-lasso.tex}
%    \hrule
%      \caption{Removing a bad cycle cannot decrease the value (Lemma~\ref{lem:remove-bad-cycle}). \label{fig:remove-lasso}}
%  \end{center}
%\end{figure}
\begin{proof}
Let $u, u'$ be the sequences of utilities induced by respectively $\rho$ and $\rho'$,
By the same argument as in the proof of Lemma~\ref{lem:lasso-exists} (using Lemma~\ref{lem:lower-line}
and Lemma~\ref{lem:geometric-interpretation}), it is sufficient to show that $u'$ is above the line $f_u(t)$.
Since $C$ is a cycle in $\rho$, there is a prefix of $\rho$ of the form $A C$ where $A$ is a finite path, and
for all $t \geq 0$ we have (the argument is illustrated in \figurename~\ref{fig:remove-lasso}):
%\begin{itemize}
%\item 
either $t \leq \abs{A}$, then $u'(t) = u(t) \geq f_u(t)$, or $t > \abs{A}$, and then
%\begin{comment}
\begin{align*}
  u'(t)         & = u(t + \abs{C}) - S_C & (C \text{ is removed from } \rho \text{ to get } \rho') \\[+4pt]
                & \geq f_u(t + \abs{C}) - M_C \cdot \abs{C} & (u \text{ is above } f_u(t) \text{ and } S_C = M_C \cdot \abs{C}) \\[+4pt]
                & \geq f_u(t) + \nu \cdot \abs{C} - M_C \cdot \abs{C} & (f_u(t) \text{ is linear with slope } \nu)  \\[+4pt]
                & \geq f_u(t) + \abs{C} \cdot (\nu - M_C) \\[+4pt]
                & \geq f_u(t). & (M_C < \nu) \\[+4pt]
\end{align*}
%\end{comment}
\begin{comment}
$$
\begin{array}{rlr}
  u'(t)  & = u(t + \abs{C}) - S_C &  \\[+4pt]
  \multicolumn{3}{r}{(C \text{ is removed from } \rho \text{ to get } \rho')} \\[+4pt]
         & \geq f_u(t + \abs{C}) - M_C \cdot \abs{C} &  \\[+4pt]
  \multicolumn{3}{r}{(u \text{ is above } f_u(t) \text{ and } S_C = M_C \cdot \abs{C})} \\[+4pt]
         & \geq f_u(t) + \nu \cdot \abs{C} - M_C \cdot \abs{C} & \\[+4pt]
  \multicolumn{3}{r}{(f_u(t) \text{ is linear with slope } \nu)}  \\[+4pt]
         & \geq f_u(t) + \abs{C} \cdot (\nu - M_C) & \\[+4pt]
         & \geq f_u(t). & (M_C < \nu)  \\[+4pt]
\end{array}
$$
\end{comment}
%\end{itemize}
%\qed
\end{proof}

Now we can show how to construct a simple lasso with value at least the value
of a given arbitrary path, and it follows that simple lassos are sufficient
for optimality.

\begin{lemma}\label{lem:simple-lasso-suffice}
Let $T \in \nat$. There exists a simple lasso $A C^{\omega}$ such that $\val(A C^{\omega},T)=\val(G,T)$.
\end{lemma}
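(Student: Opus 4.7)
The plan is to show that for every path $\rho$ in $G$, there is a simple lasso $\rho^{\ast} = A\, C^{\omega}$ with $\val(\rho^{\ast},T) \geq \val(\rho,T)$. Since simple lassos correspond to stationary plans and there are only finitely many of them (both $A$ and $C$ have length at most $\abs{V}$), the maximum of $\val(\cdot,T)$ over simple lassos is attained; combining this with the claim yields $\val(G,T) = \val(A\,C^{\omega}, T)$ for some simple lasso, which is exactly what the statement asserts.

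Given $\rho$ with utility sequence $u$, let $\nu$ be the slope of the line $f_{u}$, and let $C_{1}, C_{2}, \ldots$ be the cycle decomposition of $\rho$. By Lemma~\ref{lem:good-cycle} some $C_{i^{\ast}}$ is good; I pick $i^{\ast}$ minimal, so that $C_{1}, \ldots, C_{i^{\ast}-1}$ are all bad. Inspecting the stack algorithm, the prefix of $\rho$ up to (but not including) the first edge of $C_{i^{\ast}}$ has the form $A_{1} C_{1} A_{2} C_{2} \cdots A_{i^{\ast}-1} C_{i^{\ast}-1} A_{i^{\ast}}$, where each $A_{j}$ consists of the edges pushed onto the stack strictly between the pops of $C_{j-1}$ and $C_{j}$. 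I take $A = A_{1} A_{2} \cdots A_{i^{\ast}}$ (the stack content just before $C_{i^{\ast}}$ starts) and set $\rho^{\ast} = A\, C_{i^{\ast}}^{\omega}$. Because $A$ is a final stack content on a simple path, it is acyclic, and its only intersection with $C_{i^{\ast}}$ is the starting vertex of $C_{i^{\ast}}$; hence $\rho^{\ast}$ is genuinely a simple lasso.

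The main verification is the pointwise inequality $u^{\ast}(t) \geq f_{u}(t)$ for all $t \geq 0$, where $u^{\ast}$ is the utility sequence of $\rho^{\ast}$; by Lemma~\ref{lem:lower-line} this yields $\val(\rho^{\ast},T) \geq f_{u}(T) = \val(\rho,T)$. For $t \leq \abs{A}$ (the handle), write $t = \abs{A_{1}} + \cdots + \abs{A_{j}} + r$ with $0 \leq r \leq \abs{A_{j+1}}$; matching prefixes in $\rho$ gives $u^{\ast}(t) = u\bigl(t + \sum_{k<j} \abs{C_{k}}\bigr) - \sum_{k<j} S_{C_{k}}$, and the required inequality, after using $u \geq f_{u}$ at the shifted position, reduces to $\sum_{k<j} \abs{C_{k}}(M_{C_{k}} - \nu) \leq 0$, which holds since $k < j \leq i^{\ast}$ forces $C_{k}$ bad, i.e., $M_{C_{k}} < \nu$. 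For $t > \abs{A}$ (the loop), write $t = \abs{A} + m\abs{C_{i^{\ast}}} + r$; then $u^{\ast}(t) = u(P^{\ast} + r) - \sum_{k<i^{\ast}} S_{C_{k}} + m\, S_{C_{i^{\ast}}}$ where $P^{\ast} = \abs{A} + \sum_{k<i^{\ast}}\abs{C_{k}}$ is the position in $\rho$ at which $C_{i^{\ast}}$ starts, and the inequality reduces to $\sum_{k<i^{\ast}} \abs{C_{k}}(\nu - M_{C_{k}}) \geq m\, \abs{C_{i^{\ast}}}(\nu - M_{C_{i^{\ast}}})$; the left-hand side is non-negative (bad cycles for $k<i^{\ast}$) and the right-hand side is non-positive (goodness of $C_{i^{\ast}}$).

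The main obstacle is not conceptual but the bookkeeping needed to identify $u^{\ast}(t)$ with $u$ at the correct shifted position, via the interleaving of the $A_{j}$'s and the $C_{j}$'s inside $\rho$. Once this is in place, choosing $C_{i^{\ast}}$ as the \emph{first} good cycle makes the bad/good sign structure line up so that $u^{\ast} \geq f_{u}$ holds uniformly in $t$, and Lemma~\ref{lem:lower-line} then closes the argument; notice that this strengthens the intuition behind Lemmas~\ref{lem:lasso-exists} and~\ref{lem:remove-bad-cycle} by carrying out both operations simultaneously relative to the fixed slope $\nu$ of the original path, thereby bypassing any need to reason about how $\nu$ might change after iterated modifications.
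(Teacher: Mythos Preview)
Your overall strategy---pick the first good cycle $C_{i^\ast}$ in the cycle decomposition, discard the earlier bad cycles, form the simple lasso $A\,C_{i^\ast}^{\omega}$, and verify $u^\ast\ge f_u$ pointwise---is exactly the paper's (the paper phrases it as an iterative procedure invoking Lemmas~\ref{lem:lasso-exists} and~\ref{lem:remove-bad-cycle}, but the resulting lasso and the underlying inequality are the same). Your remark that everything should be carried out relative to the \emph{fixed} slope $\nu$ of the original path is the correct reading of those lemmas.

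The gap is the structural claim that the prefix of $\rho$ up to the first edge of $C_{i^\ast}$ is literally $A_1 C_1 A_2 C_2\cdots C_{i^\ast-1}A_{i^\ast}$, with each $C_k$ appearing as a contiguous block of $\rho$. This fails because cycles in the decomposition can be \emph{nested}: the first edge of $C_{i^\ast}$ may be pushed onto the stack \emph{before} some of $C_1,\dots,C_{i^\ast-1}$ are even encountered (those cycles then live entirely above it on the stack). Concretely, take $\rho=e_1e_2e_3e_4\cdots$ with $e_1\colon v_0{\to}v_1$, $e_2\colon v_1{\to}v_2$, $e_3\colon v_2{\to}v_1$, $e_4\colon v_1{\to}v_0$; then $C_1=e_2e_3$, $C_2=e_1e_4$, $A=\emptyset$. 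With $i^\ast=2$ your formula $u^\ast(t)=u(P^\ast+r)-\sum_{k<i^\ast}S_{C_k}+m\,S_{C_{i^\ast}}$ (with $P^\ast=\abs{A}+\sum_{k<i^\ast}\abs{C_k}=2$) already fails at $t=2$, $m=1$, $r=0$: it yields $u(2)-S_{C_1}+S_{C_2}=2w(e_1)-w(e_3)+w(e_4)$, whereas $u^\ast(2)=w(e_1)+w(e_4)$. The repair is small: at the instant the $t$-th edge of $A$ (resp.\ the $r$-th edge of $C_{i^\ast}$) is pushed in the processing of $\rho$, the stack is exactly the first $t$ edges of $A$ (resp.\ $A$ followed by the first $r$ edges of $C_{i^\ast}$), so $u^\ast$ at that point equals $u$ at the corresponding position of $\rho$ minus the weights of the cycles popped \emph{so far}, and these always form an initial segment $\{1,\dots,j\}\subseteq\{1,\dots,i^\ast-1\}$; your sign argument then goes through verbatim. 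The paper avoids this bookkeeping by removing one bad cycle at a time, using that the \emph{first} cycle of any decomposition is always a contiguous segment of the current path.
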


\begin{proof}
Given an arbitrary path $\rho$, we construct a simple lasso with at least 
the same value as $\rho$. It follows that the optimal value is obtained
by stationary plans. The construction repeats the following steps:

\begin{enumerate}
\item Let $C$ be the first cycle in the cycle decomposition of $\rho$;

\item if $C$ is a bad cycle for the original path $\rho$, then we remove it to obtain
a new path $\rho'$. We continue the procedure with $\rho'$ (go to step $1.$);

\item otherwise $C$ is a good cycle for the original path $\rho$. 
Let $A$ be the prefix of $\rho$ until $C$ starts, and we construct the lasso $A C^{\omega}$.
\end{enumerate}

First, note that if the above procedure terminates, then the constructed
lasso has a value at least the value of the original path $\rho$ (by 
Lemma~\ref{lem:lasso-exists} and Lemma~\ref{lem:remove-bad-cycle}),
and it is a simple lasso by definition of the cycle decomposition.

Now we show that the procedure always terminates. By Lemma~\ref{lem:good-cycle}, 
there always exists a good cycle in the cycle decomposition of $\rho$,
and thus eventually a good cycle becomes the first cycle in the path
constructed by the above procedure, which then terminates.
%\qed
\end{proof}

\noindent Theorem~\ref{thm:plans} follows from the above lemmas.

\begin{figure*}
  \hrule
  \begin{minipage}[b]{.45\linewidth}
     \centering %\renewcommand{\sb}[1]{\scalebox{0.75}[1]{#1}}

%{\scriptsize 
\scalebox{0.5}{
\begin{picture}(135,95)(0,8)
%\put(0,8){\framebox(135,95){}}

%\gasset{Nw=9,Nh=9,Nmr=4.5,rdist=1, loopdiam=6}
%\gasset{Nw=10,Nh=6,Nmr=3,rdist=1, loopdiam=5}
\gasset{Nw=6,Nh=6,Nmr=3, rdist=1, loopdiam=5}
%\gasset{Nw=5,Nh=5,Nmr=2.5,rdist=1, loopdiam=6, linewidth=0.12}

%% Horizontal main line
\drawline[AHnb=1,arcradius=1](2,45)(110,45)

%% Tick at 0 and T 
\drawline[AHnb=0,arcradius=1](2,43)(2,47)
\drawline[AHnb=0,arcradius=1](72,43)(72,47)

%% Label at 0 and T
\node[Nmarks=n, Nframe=n](n1)(2,40){\scalebox{2}{$0$}}
\node[Nmarks=n, Nframe=n](n1)(72,40){\scalebox{2}{$T$}}
\node[Nmarks=n, Nframe=n](n1)(110,49){\scalebox{2}{$t$}}

%% Slope lines
%\drawline[AHnb=0,arcradius=1](32,15)(92,75)     % slope 1
\drawline[AHnb=0,arcradius=1, dash={1}0 ](22,20)(92,55)      % slope 1/2
\drawline[AHnb=0,arcradius=1, dash={1}0 ](12,30)(92,50)      % slope 1/4

%% Label 
\node[Nmarks=n, Nframe=n](n1)(13,62){\scalebox{2}{$\rho'$}}
\node[Nmarks=n, Nframe=n](n1)(7,49){\scalebox{2}{$\rho$}}

%\node[Nmarks=n, Nframe=n](n1)(48,67){\scalebox{2}{$r_0$}}
%\node[Nmarks=n, Nframe=n](n1)(55,60){\scalebox{2}{$v_0$}}

%% Label on the right
\put(55,67){\makebox(0,0)[l]{\scalebox{2}{$\rho$ has greater sum}}}
\put(55,60){\makebox(0,0)[l]{\scalebox{2}{of weights}}}
\put(50,30){\makebox(0,0)[l]{\scalebox{2}{$\rho$ is less constraining on}}}
\put(50,23){\makebox(0,0)[l]{\scalebox{2}{the slope of the line}}}
\put(50,16){\makebox(0,0)[l]{\scalebox{2}{$f(t) = M \cdot (t-T)$}}}

%%   \rho' 

\node[Nframe=y, Nh=1,Nw=1,Nmr=.5, Nfill=y, fillgray=.5 ,linegray=.5](n1)(2,45){}
\node[Nframe=y, Nh=1,Nw=1,Nmr=.5, Nfill=y, fillgray=.5 ,linegray=.5](n2)(7,62){}
\node[Nframe=y, Nh=1,Nw=1,Nmr=.5, Nfill=y, fillgray=.5 ,linegray=.5](n3)(12,55){}
\node[Nframe=y, Nh=1,Nw=1,Nmr=.5, Nfill=y, fillgray=.5 ,linegray=.5](n4)(17,58){}
\node[Nframe=y, Nh=1,Nw=1,Nmr=.5, Nfill=y, fillgray=.5 ,linegray=.5](n5)(22,48){}
\node[Nframe=y, Nh=1,Nw=1,Nmr=.5, Nfill=y, fillgray=.5 ,linegray=.5](n6)(27,50){}
\node[Nframe=y, Nh=1,Nw=1,Nmr=.5, Nfill=y, fillgray=.5 ,linegray=.5](n7)(32,57){}
\node[Nframe=y, Nh=1,Nw=1,Nmr=.5, Nfill=y, fillgray=.5 ,linegray=.5](n8)(37,50){}
\node[Nframe=y, Nh=1,Nw=1,Nmr=.5, Nfill=y, fillgray=.5 ,linegray=.5](n9)(42,30){}%%
\node[Nframe=y, Nh=1,Nw=1,Nmr=.5, Nfill=y, fillgray=.5 ,linegray=.5](n10)(47,42){}
\node[Nframe=y, Nh=1,Nw=1,Nmr=.5, Nfill=y](n11)(52,50){} %% t0

\drawedge[ELpos=50, ELside=r, curvedepth=0, AHnb=0, linegray=.5](n1,n2){}
\drawedge[ELpos=50, ELside=r, curvedepth=0, AHnb=0, linegray=.5](n2,n3){}
\drawedge[ELpos=50, ELside=r, curvedepth=0, AHnb=0, linegray=.5](n3,n4){}
\drawedge[ELpos=50, ELside=r, curvedepth=0, AHnb=0, linegray=.5](n4,n5){}
\drawedge[ELpos=50, ELside=r, curvedepth=0, AHnb=0, linegray=.5](n5,n6){}
\drawedge[ELpos=50, ELside=r, curvedepth=0, AHnb=0, linegray=.5](n6,n7){}
\drawedge[ELpos=50, ELside=r, curvedepth=0, AHnb=0, linegray=.5](n7,n8){}
\drawedge[ELpos=50, ELside=r, curvedepth=0, AHnb=0, linegray=.5](n8,n9){}
\drawedge[ELpos=50, ELside=r, curvedepth=0, AHnb=0, linegray=.5](n9,n10){}
\drawedge[ELpos=50, ELside=r, curvedepth=0, AHnb=0, linegray=.5](n10,n11){}

%\drawedge[ELpos=50, ELside=r, curvedepth=0, AHnb=0, linegray=.5](n19,n20){}
%\drawedge[ELpos=50, ELside=r, curvedepth=0, AHnb=0, linegray=.5, dash={1}0](n20,n21){}
%\drawedge[ELpos=50, ELside=r, curvedepth=0, AHnb=0, linegray=.5](n21,n22){}

%%   \rho

\node[Nframe=y, Nh=1,Nw=1,Nmr=.5, Nfill=y, fillgray=.5 ,linegray=.5](n1)(2,45){}
\node[Nframe=y, Nh=1,Nw=1,Nmr=.5, Nfill=y, fillgray=.5 ,linegray=.5](n2)(7,57){}
\node[Nframe=y, Nh=1,Nw=1,Nmr=.5, Nfill=y, fillgray=.5 ,linegray=.5](n3)(12,48){}
\node[Nframe=y, Nh=1,Nw=1,Nmr=.5, Nfill=y, fillgray=.5 ,linegray=.5](n4)(17,52){}
\node[Nframe=y, Nh=1,Nw=1,Nmr=.5, Nfill=y, fillgray=.5 ,linegray=.5](n5)(22,40){}
\node[Nframe=y, Nh=1,Nw=1,Nmr=.5, Nfill=y, fillgray=.5 ,linegray=.5](n6)(27,33.75){}%%
\node[Nframe=y, Nh=1,Nw=1,Nmr=.5, Nfill=y, fillgray=.5 ,linegray=.5](n7)(32,42){}
\node[Nframe=y, Nh=1,Nw=1,Nmr=.5, Nfill=y, fillgray=.5 ,linegray=.5](n8)(37,40){}
\node[Nframe=y, Nh=1,Nw=1,Nmr=.5, Nfill=y, fillgray=.5 ,linegray=.5](n9)(42,52){}
\node[Nframe=y, Nh=1,Nw=1,Nmr=.5, Nfill=y, fillgray=.5 ,linegray=.5](n10)(47,57){}
\node[Nframe=y, Nh=1,Nw=1,Nmr=.5, Nfill=y](n11)(52,55){} %% t0

%\node[Nframe=y, Nh=1,Nw=1,Nmr=.5, Nfill=y](n19)(92,85){}  %% i
%\node[Nframe=y, Nh=1,Nw=1,Nmr=.5, Nfill=y](n20)(97,82){}
%\node[Nframe=n, Nh=1,Nw=1,Nmr=.5](n21)(99,86){}

%\node[Nframe=y, Nh=1,Nw=1,Nmr=.5, Nfill=y](n22)(107,95){}

\gasset{linecolor=red}
\drawedge[ELpos=50, ELside=r, curvedepth=0, AHnb=0](n1,n2){}
\drawedge[ELpos=50, ELside=r, curvedepth=0, AHnb=0](n2,n3){}
\drawedge[ELpos=50, ELside=r, curvedepth=0, AHnb=0](n3,n4){}
\drawedge[ELpos=50, ELside=r, curvedepth=0, AHnb=0](n4,n5){}
\drawedge[ELpos=50, ELside=r, curvedepth=0, AHnb=0](n5,n6){}
\drawedge[ELpos=50, ELside=r, curvedepth=0, AHnb=0](n6,n7){}
\drawedge[ELpos=50, ELside=r, curvedepth=0, AHnb=0](n7,n8){}
\drawedge[ELpos=50, ELside=r, curvedepth=0, AHnb=0](n8,n9){}
\drawedge[ELpos=50, ELside=r, curvedepth=0, AHnb=0](n9,n10){}
\drawedge[ELpos=50, ELside=r, curvedepth=0, AHnb=0](n10,n11){}

%\drawedge[ELpos=50, ELside=r, curvedepth=0, AHLength = 2, AHangle=25, AHlength = 1.71](n1,n11){$-1$}
%\drawedge[ELpos=50, ELside=r, curvedepth=0, AHnb=0, linewidth=.6](n1,n11dummy){}

%\drawedge[ELpos=50, ELside=l, curvedepth=-12](t3,n1){}
%\drawbpedge(t3,70,22,n1,110,22){}
%\drawline[AHnb=1,arcradius=1](113,17.5)(113,29)(5,29)(5,17.5)

%\drawloop[ELside=l,loopCW=y, loopdiam=6](n4){$1$}

%\drawloop[ELside=l,loopCW=y](nk){$0,1$}

%\drawedge[dash={1}0](n3bis,nkbis){$0,1$}

\end{picture}
}
     \subcaption{The path length is smaller than $T$. {\huge \strut}}\label{fig:preferred}
  \end{minipage}%
  \hfill
  \begin{minipage}[b]{.54\linewidth}
     \centering %\renewcommand{\sb}[1]{\scalebox{0.75}[1]{#1}}

%{\scriptsize 
\scalebox{0.5}{
\begin{picture}(165,95)(0,0)
%\put(0,0){\framebox(165,95){}}

%\gasset{Nw=9,Nh=9,Nmr=4.5,rdist=1, loopdiam=6}
%\gasset{Nw=10,Nh=6,Nmr=3,rdist=1, loopdiam=5}
\gasset{Nw=6,Nh=6,Nmr=3, rdist=1, loopdiam=5}
%\gasset{Nw=5,Nh=5,Nmr=2.5,rdist=1, loopdiam=6, linewidth=0.12}

%% Horizontal main line
\drawline[AHnb=1,arcradius=1](2,37)(110,37)

%% Tick at 0 and T 
\drawline[AHnb=0,arcradius=1](2,35)(2,39)
\drawline[AHnb=0,arcradius=1](72,35)(72,39)

%% Label at 0 and T
\node[Nmarks=n, Nframe=n](n1)(2,32){\scalebox{2}{$0$}}
\node[Nmarks=n, Nframe=n](n1)(72,32){\scalebox{2}{$T$}}
\node[Nmarks=n, Nframe=n](n1)(110,41){\scalebox{2}{$t$}}

%% Slope lines
%\drawline[AHnb=0,arcradius=1](32,15)(92,75)     % slope 1
\drawline[AHnb=0,arcradius=1, dash={1}0](12,22)(72,37)      % slope 1/4
\drawline[AHnb=0,arcradius=1, dash={1}0](12,29.5)(72,37)      % slope 1/8

\drawline[AHnb=0,arcradius=1, dash={1}0](72,37)(122,87)      % slope 1/1
\drawline[AHnb=0,arcradius=1, dash={1}0](72,37)(122,62)      % slope 1/2

%% Label 
\node[Nmarks=n, Nframe=n](n1)(13,54){\scalebox{2}{$\rho'$}}
\node[Nmarks=n, Nframe=n](n1)(7,41){\scalebox{2}{$\rho$}}

%\node[Nmarks=n, Nframe=n](n1)(48,67){\scalebox{2}{$r_0$}}
%\node[Nmarks=n, Nframe=n](n1)(55,60){\scalebox{2}{$v_0$}}

%% Label on the right
\put(50,79){\makebox(0,0)[l]{\scalebox{2}{$\rho$ has greater sum}}}
\put(50,72){\makebox(0,0)[l]{\scalebox{2}{of weights}}}
\put(30,22){\makebox(0,0)[l]{\scalebox{2}{$\rho$ is less constraining on}}}
\put(30,15){\makebox(0,0)[l]{\scalebox{2}{the slope of the line}}}
\put(30,8){\makebox(0,0)[l]{\scalebox{2}{$f(t) = M \cdot (t-T)$}}}

%\put(120,38){\makebox(0,0)[l]{\scalebox{2}{$\varphi_{\rho} \equiv \frac{1}{8} \leq M \leq 1$}}}
%\put(120,28){\makebox(0,0)[l]{\scalebox{2}{$\varphi_{\rho'} \equiv \frac{1}{4} \leq M \leq \frac{1}{2}$}}}

\put(125,87){\makebox(0,0)[l]{\scalebox{1.5}{slope $M = 1$}}}
\put(125,62){\makebox(0,0)[l]{\scalebox{1.5}{slope $M = \frac{1}{2}$}}}

\put(105,15){\makebox(0,0)[l]{\scalebox{2}{
$\begin{array}{rl}
 \varphi_{\rho} \equiv & \!\!\!\!\frac{1}{8} \leq M \leq 1 \\[+4pt]
 \varphi_{\rho'} \equiv & \!\!\!\!\frac{1}{4} \leq M \leq \frac{1}{2}
 \end{array}
$
}}}

%%   \rho' 

\node[Nframe=y, Nh=1,Nw=1,Nmr=.5, Nfill=y, fillgray=.5 ,linegray=.5](n1)(2,37){}
\node[Nframe=y, Nh=1,Nw=1,Nmr=.5, Nfill=y, fillgray=.5 ,linegray=.5](n2)(7,54){}
\node[Nframe=y, Nh=1,Nw=1,Nmr=.5, Nfill=y, fillgray=.5 ,linegray=.5](n3)(12,47){}
\node[Nframe=y, Nh=1,Nw=1,Nmr=.5, Nfill=y, fillgray=.5 ,linegray=.5](n4)(17,50){}
\node[Nframe=y, Nh=1,Nw=1,Nmr=.5, Nfill=y, fillgray=.5 ,linegray=.5](n5)(22,40){}
\node[Nframe=y, Nh=1,Nw=1,Nmr=.5, Nfill=y, fillgray=.5 ,linegray=.5](n6)(27,42){}
\node[Nframe=y, Nh=1,Nw=1,Nmr=.5, Nfill=y, fillgray=.5 ,linegray=.5](n7)(32,39){}
\node[Nframe=y, Nh=1,Nw=1,Nmr=.5, Nfill=y, fillgray=.5 ,linegray=.5](n8)(37,28.25){}%%
\node[Nframe=y, Nh=1,Nw=1,Nmr=.5, Nfill=y, fillgray=.5 ,linegray=.5](n9)(42,42){}
\node[Nframe=y, Nh=1,Nw=1,Nmr=.5, Nfill=y, fillgray=.5 ,linegray=.5](n10)(47,44){}
\node[Nframe=y, Nh=1,Nw=1,Nmr=.5, Nfill=y, fillgray=.5 ,linegray=.5](n11)(52,52){}
\node[Nframe=y, Nh=1,Nw=1,Nmr=.5, Nfill=y, fillgray=.5 ,linegray=.5](n12)(57,47){}
\node[Nframe=y, Nh=1,Nw=1,Nmr=.5, Nfill=y, fillgray=.5 ,linegray=.5](n13)(62,39){} 
\node[Nframe=y, Nh=1,Nw=1,Nmr=.5, Nfill=y, fillgray=.5 ,linegray=.5](n14)(67,49){}
\node[Nframe=y, Nh=1,Nw=1,Nmr=.5, Nfill=y, fillgray=.5 ,linegray=.5](n15)(72,42){}% T
\node[Nframe=y, Nh=1,Nw=1,Nmr=.5, Nfill=y, fillgray=.5 ,linegray=.5](n16)(77,46){}
\node[Nframe=y, Nh=1,Nw=1,Nmr=.5, Nfill=y, fillgray=.5 ,linegray=.5](n17)(82,42){}%%
\node[Nframe=y, Nh=1,Nw=1,Nmr=.5, Nfill=y, fillgray=.5 ,linegray=.5](n18)(87,48){}
\node[Nframe=y, Nh=1,Nw=1,Nmr=.5, Nfill=y](n19)(92,50){}

\drawedge[ELpos=50, ELside=r, curvedepth=0, AHnb=0, linegray=.5](n1,n2){}
\drawedge[ELpos=50, ELside=r, curvedepth=0, AHnb=0, linegray=.5](n2,n3){}
\drawedge[ELpos=50, ELside=r, curvedepth=0, AHnb=0, linegray=.5](n3,n4){}
\drawedge[ELpos=50, ELside=r, curvedepth=0, AHnb=0, linegray=.5](n4,n5){}
\drawedge[ELpos=50, ELside=r, curvedepth=0, AHnb=0, linegray=.5](n5,n6){}
\drawedge[ELpos=50, ELside=r, curvedepth=0, AHnb=0, linegray=.5](n6,n7){}
\drawedge[ELpos=50, ELside=r, curvedepth=0, AHnb=0, linegray=.5](n7,n8){}
\drawedge[ELpos=50, ELside=r, curvedepth=0, AHnb=0, linegray=.5](n8,n9){}
\drawedge[ELpos=50, ELside=r, curvedepth=0, AHnb=0, linegray=.5](n9,n10){}
\drawedge[ELpos=50, ELside=r, curvedepth=0, AHnb=0, linegray=.5](n10,n11){}
\drawedge[ELpos=50, ELside=r, curvedepth=0, AHnb=0, linegray=.5](n11,n12){}
\drawedge[ELpos=50, ELside=r, curvedepth=0, AHnb=0, linegray=.5](n12,n13){}
\drawedge[ELpos=50, ELside=r, curvedepth=0, AHnb=0, linegray=.5](n13,n14){}
\drawedge[ELpos=50, ELside=r, curvedepth=0, AHnb=0, linegray=.5](n14,n15){}
\drawedge[ELpos=50, ELside=r, curvedepth=0, AHnb=0, linegray=.5](n15,n16){}
\drawedge[ELpos=50, ELside=r, curvedepth=0, AHnb=0, linegray=.5](n16,n17){}
\drawedge[ELpos=50, ELside=r, curvedepth=0, AHnb=0, linegray=.5](n17,n18){}
\drawedge[ELpos=50, ELside=r, curvedepth=0, AHnb=0, linegray=.5](n18,n19){}
%\drawedge[ELpos=50, ELside=r, curvedepth=0, AHnb=0, linegray=.5](n19,n20){}
%\drawedge[ELpos=50, ELside=r, curvedepth=0, AHnb=0, linegray=.5, dash={1}0](n20,n21){}
%\drawedge[ELpos=50, ELside=r, curvedepth=0, AHnb=0, linegray=.5](n21,n22){}

%%   \rho

\node[Nframe=y, Nh=1,Nw=1,Nmr=.5, Nfill=y, fillgray=.5 ,linegray=.5](n1)(2,37){}
\node[Nframe=y, Nh=1,Nw=1,Nmr=.5, Nfill=y, fillgray=.5 ,linegray=.5](n2)(7,49){}
\node[Nframe=y, Nh=1,Nw=1,Nmr=.5, Nfill=y, fillgray=.5 ,linegray=.5](n3)(12,40){}
\node[Nframe=y, Nh=1,Nw=1,Nmr=.5, Nfill=y, fillgray=.5 ,linegray=.5](n4)(17,44){}
\node[Nframe=y, Nh=1,Nw=1,Nmr=.5, Nfill=y, fillgray=.5 ,linegray=.5](n5)(22,30.75){}%%
\node[Nframe=y, Nh=1,Nw=1,Nmr=.5, Nfill=y, fillgray=.5 ,linegray=.5](n6)(27,35){}
\node[Nframe=y, Nh=1,Nw=1,Nmr=.5, Nfill=y, fillgray=.5 ,linegray=.5](n7)(32,47){}
\node[Nframe=y, Nh=1,Nw=1,Nmr=.5, Nfill=y, fillgray=.5 ,linegray=.5](n8)(37,44){}
\node[Nframe=y, Nh=1,Nw=1,Nmr=.5, Nfill=y, fillgray=.5 ,linegray=.5](n9)(42,55){}
\node[Nframe=y, Nh=1,Nw=1,Nmr=.5, Nfill=y, fillgray=.5 ,linegray=.5](n10)(47,49){}
\node[Nframe=y, Nh=1,Nw=1,Nmr=.5, Nfill=y, fillgray=.5 ,linegray=.5](n11)(52,54){}
\node[Nframe=y, Nh=1,Nw=1,Nmr=.5, Nfill=y, fillgray=.5 ,linegray=.5](n12)(57,54){}
\node[Nframe=y, Nh=1,Nw=1,Nmr=.5, Nfill=y, fillgray=.5 ,linegray=.5](n13)(62,47){} 
\node[Nframe=y, Nh=1,Nw=1,Nmr=.5, Nfill=y, fillgray=.5 ,linegray=.5](n14)(67,55){}
\node[Nframe=y, Nh=1,Nw=1,Nmr=.5, Nfill=y, fillgray=.5 ,linegray=.5](n15)(72,46){}% T
\node[Nframe=y, Nh=1,Nw=1,Nmr=.5, Nfill=y, fillgray=.5 ,linegray=.5](n16)(77,53){}
\node[Nframe=y, Nh=1,Nw=1,Nmr=.5, Nfill=y, fillgray=.5 ,linegray=.5](n17)(82,57){}
\node[Nframe=y, Nh=1,Nw=1,Nmr=.5, Nfill=y, fillgray=.5 ,linegray=.5](n18)(87,52){}%%
\node[Nframe=y, Nh=1,Nw=1,Nmr=.5, Nfill=y](n19)(92,72){}

%\node[Nframe=y, Nh=1,Nw=1,Nmr=.5, Nfill=y](n11)(52,65){} %% t0

%\node[Nframe=y, Nh=1,Nw=1,Nmr=.5, Nfill=y](n19)(92,85){}  %% i
%\node[Nframe=y, Nh=1,Nw=1,Nmr=.5, Nfill=y](n20)(97,82){}
%\node[Nframe=n, Nh=1,Nw=1,Nmr=.5](n21)(99,86){}

%\node[Nframe=y, Nh=1,Nw=1,Nmr=.5, Nfill=y](n22)(107,95){}

\gasset{linecolor=red}
\drawedge[ELpos=50, ELside=r, curvedepth=0, AHnb=0](n1,n2){}
\drawedge[ELpos=50, ELside=r, curvedepth=0, AHnb=0](n2,n3){}
\drawedge[ELpos=50, ELside=r, curvedepth=0, AHnb=0](n3,n4){}
\drawedge[ELpos=50, ELside=r, curvedepth=0, AHnb=0](n4,n5){}
\drawedge[ELpos=50, ELside=r, curvedepth=0, AHnb=0](n5,n6){}
\drawedge[ELpos=50, ELside=r, curvedepth=0, AHnb=0](n6,n7){}
\drawedge[ELpos=50, ELside=r, curvedepth=0, AHnb=0](n7,n8){}
\drawedge[ELpos=50, ELside=r, curvedepth=0, AHnb=0](n8,n9){}
\drawedge[ELpos=50, ELside=r, curvedepth=0, AHnb=0](n9,n10){}
\drawedge[ELpos=50, ELside=r, curvedepth=0, AHnb=0](n10,n11){}
\drawedge[ELpos=50, ELside=r, curvedepth=0, AHnb=0](n11,n12){}
\drawedge[ELpos=50, ELside=r, curvedepth=0, AHnb=0](n12,n13){}
\drawedge[ELpos=50, ELside=r, curvedepth=0, AHnb=0](n13,n14){}
\drawedge[ELpos=50, ELside=r, curvedepth=0, AHnb=0](n14,n15){}
\drawedge[ELpos=50, ELside=r, curvedepth=0, AHnb=0](n15,n16){}
\drawedge[ELpos=50, ELside=r, curvedepth=0, AHnb=0](n16,n17){}
\drawedge[ELpos=50, ELside=r, curvedepth=0, AHnb=0](n17,n18){}
\drawedge[ELpos=50, ELside=r, curvedepth=0, AHnb=0](n18,n19){}

%\drawedge[ELpos=50, ELside=r, curvedepth=0, AHLength = 2, AHangle=25, AHlength = 1.71](n1,n11){$-1$}
%\drawedge[ELpos=50, ELside=r, curvedepth=0, AHnb=0, linewidth=.6](n1,n11dummy){}

%\drawedge[ELpos=50, ELside=l, curvedepth=-12](t3,n1){}
%\drawbpedge(t3,70,22,n1,110,22){}
%\drawline[AHnb=1,arcradius=1](113,17.5)(113,29)(5,29)(5,17.5)

%\drawloop[ELside=l,loopCW=y, loopdiam=6](n4){$1$}

%\drawloop[ELside=l,loopCW=y](nk){$0,1$}

%\drawedge[dash={1}0](n3bis,nkbis){$0,1$}

\end{picture}
}
     \subcaption{The path length is greater than $T$. {\huge \strut}}\label{fig:preferred2}
  \end{minipage}
  \hrule
  \caption{The path $\rho$ is preferred to $\rho'$.}\label{fig:preferredall}
\end{figure*}
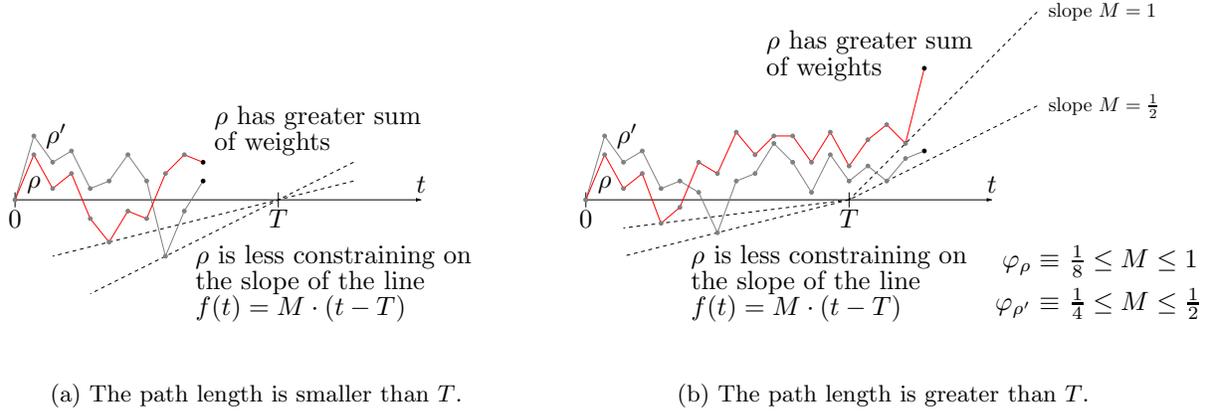

\subsection{Theorem~\ref{thm:algo}: Algorithm and Complexity Analysis}
In this section we present our algorithm and then the complexity analysis.

\subsubsection{Algorithm}
The key challenges to obtain an algorithm are as follows.
First, while for the fixed-horizon problem backward induction or powering of transition matrix leads
to an algorithm, for expected time horizon with an adversary, there is no a-priori bound on 
the number of steps, and hence the backward induction approach is not applicable. 
Second, stationary optimal plans suffice, and as shown in Theorem~\ref{thm:hard} computing 
optimal stationary plans for the fixed horizon problem is NP-hard.
We present an algorithm that iteratively constructs the {\em most promising} candidate paths
according to a partial order of the paths, and the key
is to define the partial order.

It follows from the geometric interpretation lemmas (Lemma~\ref{lem:lower-line} 
and Lemma~\ref{lem:geometric-interpretation}) that the value of a path is at 
least~$0$ if its sequence of utilities  is above some line that contains the 
point $(T,0)$.

\begin{lemma}\label{lem:exists-slope}
The value of a sequence $u$ of utilities is at least $0$ if and only if
there exists a slope $M \in \real$ such that $u_t \geq M \cdot (t-T)$ for all $t \geq 0$.
\end{lemma}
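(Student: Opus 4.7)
The plan is to prove the two directions separately, leveraging the two geometric lemmas already established.

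For the easy direction ($\Leftarrow$): suppose some slope $M$ satisfies $u_t \geq M(t-T) = M\cdot t + (-MT)$ for all $t \geq 0$. This fits exactly the hypothesis of Lemma~\ref{lem:lower-line} with $a = M$ and $b = -MT$, so the value of $u$ is at least $a \cdot T + b = MT - MT = 0$. This is essentially immediate.

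For the harder direction ($\Rightarrow$): suppose the value of $u$ is at least $0$, i.e., $f_u(T) \geq 0$ by Lemma~\ref{lem:geometric-interpretation}. I would take $M$ to be the slope $\nu$ of the line $f_u(t)$ associated with a left-minimizer $t_1$, so that $f_u(t) = u_{t_1} + (t - t_1)\cdot \nu$. By Lemma~\ref{lem:geometric-interpretation}, $u_t \geq f_u(t)$ for all $t \geq 0$, so it suffices to show $f_u(t) \geq \nu \cdot (t-T)$ for all $t$. Expanding,
\[
  f_u(t) - \nu(t-T) \;=\; u_{t_1} + (t-t_1)\nu - \nu(t-T) \;=\; u_{t_1} + (T-t_1)\nu \;=\; f_u(T) \;\geq\; 0,
\]
which uses only the assumption $f_u(T) \geq 0$. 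Hence $M = \nu$ witnesses the claim.

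There is no real obstacle here: the forward direction is a direct instantiation of Lemma~\ref{lem:lower-line}, and the backward direction just observes that the line $f_u(t)$ from the geometric interpretation, pivoted around its value at $T$, is exactly a line through $(T, f_u(T))$ of slope $\nu$ that sits below $u$; since $f_u(T) \geq 0$, the parallel line through $(T,0)$ lies weakly below $f_u$ and therefore below $u$. The only thing to be mildly careful about is that $\nu$ and $t_1$ are well-defined from the geometric interpretation section (the infimum defining $\nu$ exists in $\mathbb{R} \cup \{-\infty\}$, but if the value is at least $0$ then $\nu$ is necessarily finite, so $M = \nu$ is a legitimate real slope).
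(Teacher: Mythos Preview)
Your proof is correct and follows essentially the same approach as the paper: one direction is an immediate application of Lemma~\ref{lem:lower-line} with $a=M$, $b=-MT$, and the other direction uses Lemma~\ref{lem:geometric-interpretation} to get $u_t \geq f_u(t)$ and then shifts the line $f_u$ down by $f_u(T) \geq 0$ to obtain $u_t \geq f_u(t) - f_u(T) = \nu\,(t-T)$, so $M=\nu$ works. (Minor quibble: in your closing paragraph you swap the labels ``forward'' and ``backward'' relative to your earlier $\Leftarrow$/$\Rightarrow$ labeling, but the mathematics is unaffected.)
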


\begin{proof}
If the value of $u$ is at least $0$, then $f_u(T) \geq 0$ and
by Lemma~\ref{lem:geometric-interpretation} we have $u_t \geq f_u(t)$ for all $t \geq 0$. 
Then $u_t \geq f_u(t) - f_u(T)$ (which is a linear function of $t$) and we can take 
for $M$ the value of the coefficient of $t$ in the expression $f_u(t) - f_u(T)$.

To prove the other direction, consider the line of equation $f(t) = M \cdot (t-T)$,
and by Lemma~\ref{lem:lower-line}, the value of the sequence $u$ is at least $f(T) = 0$.
%\qed
\end{proof}

\begin{algorithm}[t]
\caption{$\BestPaths(t_0,v_0,u_0,\psi_0)$}
\label{alg:best-path}
{
 \AlgData{$t_0 \in \nat$ is an initial time point, $v_0$ is an initial vertex,
 $u_0$ is the initial sum of weights, and $\psi_0$ is the initial constraint on the slope parameter $M$.}
 \AlgResult{The table of $\succeq$-maximal values of paths from $v_0$ with initial values $t_0,u_0,\psi_0$.}
\begin{flushleft}
 \Begin{
        \hfill {\tt /* initialization */} \;
	\nl $D[t_0,v_0] \gets \{ \tuple{u_0,\psi_0} \}$ \label{alg:init-begin} \; %$\quad$ {\tt /* initialization */}\;
        \nl \For{$v \in V \setminus \{v_0\}$\negskip} 
        {
           \nl $D[t_0,v] \gets \emptyset$ \label{alg:init-end} \; 
        }
        \hfill {\tt /* iterations */} \;
        \nl \For{$i = 1, \dots, \abs{V}$\negskip \label{alg:best-path-first-for}} 
        {
          \nl \For{$v \in V$\negskip \label{alg:best-path-second-for}}
          {
             \nl $D[t_0 + i,v] \gets \emptyset$ \;
             \nl \For{{\footnotesize $v_1 \in V$ and $\tuple{u_1,\psi_1} \in D[t_0+i-1,v_1]$\!\!\negskip} \label{alg:best-path-for-D}}
             {
                \nl \If{$(v_1,v) \in E$\negskip}
                {
                   \nl $u \gets u_1 + w(v_1,v)$  \;
                   \nl $t \gets t_0 + i - 1$  \;
                   \nl $\psi \gets \psi_1 \land (r \geq M \cdot (t-T))$ \;
                   \nl {\small $D[t_0 + i,v] \gets D[t_0 + i,v] \cup \{ \tuple{u,\psi} \}$}\label{alg:best-path-add-D}\;
                }
             }
             \nl $D[t_0+i,v] \gets \ceil{D[t_0+i,v]}$ \label{alg:best-path-max-D}\;
          }
        }
	\nl \KwRet{$D$}\; 
 }
\end{flushleft}
}
\end{algorithm}

The expression $u_t - M \cdot (t-T)$ that appears in the condition of Lemma~\ref{lem:exists-slope}
can be obtained by subtracting $M$ to each weight of the graph, and shifting the
sum of the weights by the constant $T \cdot M$. Since $M$ is unknown, we can
define the following symbolic constraint on $M$ (associated with a path $\rho$) that ensures, if it is satisfiable,
that the sequence of utilities of $\rho = e_0 e_1 \dots e_k$ is above the line 
of equation $f(t) = M \cdot(t-T)$ :
$$ \varphi_{\rho} \equiv \bigwedge_{0 \leq i \leq k} (u_i \geq M \cdot (i-T)) $$

Note that $k = \abs{\rho} - 1$, and the constraint $\varphi_{\rho}$
represents an interval (possibly empty, possibly unbounded) of values for $M$.
Intuitively, a finite path is more promising (thus preferred) in order to be prolonged to 
an infinite path with value at least $0$ if the total sum of weights is large
and the constraint $\varphi_{\rho}$ is weak (see \figurename~\ref{fig:preferred} and \figurename~\ref{fig:preferred2}). 
To each finite path~$\rho$, we associate a pair $\tuple{u, \psi}$ consisting
of the sum $u$ of the weights in~$\rho$, and the constraint $\psi = \varphi_{\rho}$.

Given two pairs $\tuple{u, \psi}$, $\tuple{u', \psi'}$ (associated with 
paths $\rho$ and $\rho'$ respectively), 
we write $\tuple{u, \psi} \succeq \tuple{u', \psi'}$ 
if $u \geq u'$ and $\psi'$ implies $\psi$, and we say that 
$\rho$ is \emph{preferred} to $\rho'$ (this is a partial order).
Given a set $S$ of such pairs, 
denote by $\ceil{S} = \{z_1  \in S \mid \forall z_2 \in S:
z_2 \succeq z_1 \to z_1 \succeq z_2 \}$ the set of $\succeq$-maximal elements of $S$.
Note that the elements of $\ceil{S}$ are pairwise $\succeq$-incomparable.

Intuitively, if $\rho$ and $\rho'$ end in the same vertex, and $\rho$ is \emph{preferred} to $\rho'$,
then it is easier to extend $\rho$ than $\rho'$ to obtain an (infinite) path with expected value at least $0$.
Formally, for all infinite paths $\pi$ with $\starts(\pi) = \ends(\rho) = \ends(\rho')$ 
we have $\val(\rho \cdot \pi,T) \geq \val(\rho' \cdot \pi,T)$. We use this result
in the following form.

\begin{lemma}\label{lem:constraint-prefered} 
Let $\rho_1$, $\rho_{\!A}^{}$ be two paths of the same length with the same end state,
i.e., $\ends(\rho_1) = \ends(\rho_{\!A}^{})$.
If $\rho_1$ is preferred to $\rho_{\!A}^{}$, 
then for all paths $\rho_{C}^{}$ with $\starts(\rho_{C}^{}) = \ends(\rho_{\!A}^{})$, 
the path $\rho_1 \cdot \rho_{C}^{}$ is preferred to the path $\rho_{\!A}^{} \cdot \rho_{C}^{}$.
\end{lemma}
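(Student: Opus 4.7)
The plan is to unfold the definition of \emph{preferred} applied to the two concatenated paths and verify the two conditions (sum inequality and constraint implication) directly, leveraging that concatenation is essentially additive on both the utility sum and the conjuncts of the constraint.

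First, I would fix notation. Let $k = \abs{\rho_1} = \abs{\rho_{\!A}^{}}$ (same length by hypothesis), and let $\tuple{u_1,\psi_1}$, $\tuple{u_A,\psi_A}$ be the pairs associated with $\rho_1$ and $\rho_{\!A}^{}$ respectively, so that $u_1 \geq u_A$ and $\psi_A \Rightarrow \psi_1$. Write $\rho_{C}^{} = f_0 f_1 \dots f_{m-1}$ and let $s_j = \sum_{i<j} w(f_i)$ denote the partial sums of $\rho_{C}^{}$ for $0 \leq j \leq m$. The sum of weights of $\rho_1 \cdot \rho_{C}^{}$ is then $u_1 + s_m$, while that of $\rho_{\!A}^{} \cdot \rho_{C}^{}$ is $u_A + s_m$, so the sum condition of $\succeq$ follows immediately from $u_1 \geq u_A$.

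Second, I would write out the full constraint for each concatenated path by partitioning the indices $0 \leq i \leq k+m$ into those $\leq k$ (whose conjuncts reconstruct exactly $\psi_1$ or $\psi_A$) and those $> k$ (the new conjuncts contributed by $\rho_{C}^{}$):
\[
\varphi_{\rho_1 \cdot \rho_{C}^{}} \;\equiv\; \psi_1 \,\wedge\, \bigwedge_{j=1}^{m} \bigl(u_1 + s_j \geq M \cdot (k+j - T)\bigr),
\]
and analogously for $\rho_{\!A}^{} \cdot \rho_{C}^{}$, with $u_A$ in place of $u_1$ and $\psi_A$ in place of $\psi_1$.

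Third, I would verify the implication $\varphi_{\rho_{\!A}^{} \cdot \rho_{C}^{}} \Rightarrow \varphi_{\rho_1 \cdot \rho_{C}^{}}$ conjunct-by-conjunct. For the first part the implication $\psi_A \Rightarrow \psi_1$ is given. For each new conjunct at time $k+j$, observe that under $\varphi_{\rho_{\!A}^{} \cdot \rho_{C}^{}}$ we have $u_A + s_j \geq M \cdot (k+j - T)$, and using $u_1 \geq u_A$ we get $u_1 + s_j \geq u_A + s_j \geq M \cdot (k+j - T)$, as required. No case analysis on the sign of $k+j-T$ is needed. The only thing to be careful about, and the main bookkeeping step, is observing the clean decomposition $\varphi_{\rho \cdot \rho'} \equiv \varphi_\rho \wedge (\text{new conjuncts from }\rho')$; once that is in hand, the result reduces to the additive monotonicity built into the definition of $\succeq$.
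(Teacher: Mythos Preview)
Your proposal is correct and follows essentially the same approach as the paper's proof sketch: both establish the sum condition by additivity and the constraint implication by splitting $\varphi_{\rho_1\cdot\rho_C}$ into the old conjuncts (handled by $\psi_A\Rightarrow\psi_1$) and the new conjuncts (handled by $u_1\geq u_A$ with identical right-hand sides). There is a harmless off-by-one in your index bookkeeping (the conjunction should range over $0\leq i\leq k+m-1$ with the new conjuncts at indices $k,\dots,k+m-1$, giving right-hand side $M\cdot(k+j-1-T)$), but since the same offset appears in both constraints this does not affect the argument.
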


\begin{proof}[Proof sketch]
Let $\rho_{1C} = \rho_1 \cdot \rho_{C}^{}$ an $\rho_{AC} = \rho_{\!A}^{} \cdot \rho_{C}^{}$.
Denote by $u_1$, $u_A$, $u_{1C}$, and $u_{AC}$
the sum of the weights of the paths $\rho_1$, $\rho_{\!A}^{}$, $\rho_1 \cdot \rho_{C}^{}$,
and $\rho_{\!A}^{} \cdot \rho_{C}^{}$ respectively.

Since $u_1 \geq u_A$ and $\varphi_{\rho_{\!A}^{}} \to \varphi_{\rho_1}$, it is easy
to see that $u_{1C} \geq u_{AC}$, and that for every length $\abs{\rho_1} \leq k \leq \abs{\rho_1} + \abs{\rho_{C}^{}}$,
the sum of the weights of the prefix of length $k$ of $\rho_1 \cdot \rho_{C}^{}$
at least as large as the sum of the weights of the prefix of length $k$ of $\rho_{\!A}^{} \cdot \rho_{C}^{}$.
It follows that $\varphi_{\rho_{AC}} \to \varphi_{\rho_{1C}}$ as well,
hence $\rho_1 \cdot \rho_{C}^{}$ is preferred to $\rho_{\!A}^{} \cdot \rho_{C}^{}$.
%\qed
\end{proof}

\begin{figure*}[!tb]
  \begin{center}
    \hrule
      %\renewcommand{\sb}[1]{\scalebox{0.75}[1]{#1}}

%{\scriptsize 
\scalebox{0.5}{
\begin{picture}(220,54)(0,40)
%\put(0,0){\framebox(112,30){}}

%\gasset{Nw=9,Nh=9,Nmr=4.5,rdist=1, loopdiam=6}
%\gasset{Nw=10,Nh=6,Nmr=3,rdist=1, loopdiam=5}
\gasset{Nw=6,Nh=6,Nmr=3, rdist=1, loopdiam=5}
%\gasset{Nw=5,Nh=5,Nmr=2.5,rdist=1, loopdiam=6, linewidth=0.12}

%% Horizontal main line
\drawline[AHnb=1,arcradius=1](2,55)(110,55)

%% Tick at 0 and t0 and i
\drawline[AHnb=0,arcradius=1](2,53)(2,57)
\drawline[AHnb=0,arcradius=1](52,53)(52,57)
\drawline[AHnb=0,arcradius=1](92,53)(92,57)

%% Label at 0 and t0 and i
\node[Nmarks=n, Nframe=n](n1)(2,50){\scalebox{2}{$0$}}
\node[Nmarks=n, Nframe=n](n1)(52,50){\scalebox{2}{$t_0$}}
\node[Nmarks=n, Nframe=n](n1)(92,50){\scalebox{2}{$t_0 + i$}}

%% Label 
\node[Nmarks=n, Nframe=n](n1)(13,72){\scalebox{2}{$\rho_{\sharp}$}}
\node[Nmarks=n, Nframe=n](n1)(48,67){\scalebox{2}{$u_0$}}
\node[Nmarks=n, Nframe=n](n1)(55,60){\scalebox{2}{$v_0$}}

\node[Nmarks=n, Nframe=n](n1)(69,77){\scalebox{2}{$\rho_1$}}
\node[Nmarks=n, Nframe=n](n1)(89,89){\scalebox{2}{$u$}}
\node[Nmarks=n, Nframe=n](n1)(96,80){\scalebox{2}{$v_1$}}
\node[Nmarks=n, Nframe=n](n1)(105,64){\scalebox{2}{$\psi \equiv \varphi_{\rho_{\sharp} \cdot \rho_1}$}}

%% Label on the right
\put(130,75){\makebox(0,0)[l]{\scalebox{2}{$\tuple{u,\psi} \in D[t_0+i,v_1]$}}}
\put(130,65){\makebox(0,0)[l]{\scalebox{2}{where $D = \BestPaths(t_0,v_0,u_0,\psi_0)$ }}}

\node[Nframe=y, Nh=1,Nw=1,Nmr=.5, Nfill=y, fillgray=.5 ,linegray=.5](n1)(2,55){}
\node[Nframe=y, Nh=1,Nw=1,Nmr=.5, Nfill=y, fillgray=.5 ,linegray=.5](n2)(7,72){}
\node[Nframe=y, Nh=1,Nw=1,Nmr=.5, Nfill=y, fillgray=.5 ,linegray=.5](n3)(12,65){}
\node[Nframe=y, Nh=1,Nw=1,Nmr=.5, Nfill=y, fillgray=.5 ,linegray=.5](n4)(17,68){}
\node[Nframe=y, Nh=1,Nw=1,Nmr=.5, Nfill=y, fillgray=.5 ,linegray=.5](n5)(22,58){}
\node[Nframe=y, Nh=1,Nw=1,Nmr=.5, Nfill=y, fillgray=.5 ,linegray=.5](n6)(27,60){}
\node[Nframe=y, Nh=1,Nw=1,Nmr=.5, Nfill=y, fillgray=.5 ,linegray=.5](n7)(32,67){}%%
\node[Nframe=y, Nh=1,Nw=1,Nmr=.5, Nfill=y, fillgray=.5 ,linegray=.5](n8)(37,57){}
\node[Nframe=y, Nh=1,Nw=1,Nmr=.5, Nfill=y, fillgray=.5 ,linegray=.5](n9)(42,62){}
\node[Nframe=y, Nh=1,Nw=1,Nmr=.5, Nfill=y, fillgray=.5 ,linegray=.5](n10)(47,57){}
\node[Nframe=y, Nh=1,Nw=1,Nmr=.5, Nfill=y](n11)(52,65){} %% t0
\node[Nframe=y, Nh=1,Nw=1,Nmr=.5, Nfill=y, fillgray=.5 ,linegray=.5](n12)(57,72){}
\node[Nframe=y, Nh=1,Nw=1,Nmr=.5, Nfill=y, fillgray=.5 ,linegray=.5](n13)(62,65){} %c
\node[Nframe=y, Nh=1,Nw=1,Nmr=.5, Nfill=y, fillgray=.5 ,linegray=.5](n14)(67,73){}
\node[Nframe=y, Nh=1,Nw=1,Nmr=.5, Nfill=y, fillgray=.5 ,linegray=.5](n15)(72,64){}
\node[Nframe=y, Nh=1,Nw=1,Nmr=.5, Nfill=y, fillgray=.5 ,linegray=.5](n16)(77,71){}
\node[Nframe=y, Nh=1,Nw=1,Nmr=.5, Nfill=y, fillgray=.5 ,linegray=.5](n17)(82,75){}
\node[Nframe=y, Nh=1,Nw=1,Nmr=.5, Nfill=y, fillgray=.5 ,linegray=.5](n18)(87,72){}
\node[Nframe=y, Nh=1,Nw=1,Nmr=.5, Nfill=y](n19)(92,85){}  %% i
%\node[Nframe=y, Nh=1,Nw=1,Nmr=.5, Nfill=y](n20)(97,82){}
%\node[Nframe=n, Nh=1,Nw=1,Nmr=.5](n21)(99,86){}

%\node[Nframe=y, Nh=1,Nw=1,Nmr=.5, Nfill=y](n22)(107,95){}

\drawedge[ELpos=50, ELside=r, curvedepth=0, AHnb=0, linegray=.5](n1,n2){}
\drawedge[ELpos=50, ELside=r, curvedepth=0, AHnb=0, linegray=.5](n2,n3){}
\drawedge[ELpos=50, ELside=r, curvedepth=0, AHnb=0, linegray=.5](n3,n4){}
\drawedge[ELpos=50, ELside=r, curvedepth=0, AHnb=0, linegray=.5](n4,n5){}
\drawedge[ELpos=50, ELside=r, curvedepth=0, AHnb=0, linegray=.5](n5,n6){}
\drawedge[ELpos=50, ELside=r, curvedepth=0, AHnb=0, linegray=.5](n6,n7){}
\drawedge[ELpos=50, ELside=r, curvedepth=0, AHnb=0, linegray=.5](n7,n8){}
\drawedge[ELpos=50, ELside=r, curvedepth=0, AHnb=0, linegray=.5](n8,n9){}
\drawedge[ELpos=50, ELside=r, curvedepth=0, AHnb=0, linegray=.5](n9,n10){}
\drawedge[ELpos=50, ELside=r, curvedepth=0, AHnb=0, linegray=.5](n10,n11){}
\drawedge[ELpos=50, ELside=r, curvedepth=0, AHnb=0, linegray=.5](n11,n12){}
\drawedge[ELpos=50, ELside=r, curvedepth=0, AHnb=0, linegray=.5](n12,n13){}
\drawedge[ELpos=50, ELside=r, curvedepth=0, AHnb=0, linegray=.5](n13,n14){}
\drawedge[ELpos=50, ELside=r, curvedepth=0, AHnb=0, linegray=.5](n14,n15){}
\drawedge[ELpos=50, ELside=r, curvedepth=0, AHnb=0, linegray=.5](n15,n16){}
\drawedge[ELpos=50, ELside=r, curvedepth=0, AHnb=0, linegray=.5](n16,n17){}
\drawedge[ELpos=50, ELside=r, curvedepth=0, AHnb=0, linegray=.5](n17,n18){}
\drawedge[ELpos=50, ELside=r, curvedepth=0, AHnb=0, linegray=.5](n18,n19){}
%\drawedge[ELpos=50, ELside=r, curvedepth=0, AHnb=0, linegray=.5](n19,n20){}
%\drawedge[ELpos=50, ELside=r, curvedepth=0, AHnb=0, linegray=.5, dash={1}0](n20,n21){}
%\drawedge[ELpos=50, ELside=r, curvedepth=0, AHnb=0, linegray=.5](n21,n22){}

%% Copy/Paste of the nodes to cover the gray edges

%\node[Nmarks=n](n1)(30,60){$\ver_1$}
%\node[Nmarks=n, Nmr=0](n11)(15,45){$\ver_2$}
%\nodelabel[ExtNL=y, NLangle=270, NLdist=1](n111){{\small val=1}}

%\drawedge[ELpos=50, ELside=r, curvedepth=0, AHLength = 2, AHangle=25, AHlength = 1.71](n1,n11){$-1$}
%\drawedge[ELpos=50, ELside=r, curvedepth=0, AHnb=0, linewidth=.6](n1,n11dummy){}

%\drawedge[ELpos=50, ELside=l, curvedepth=-12](t3,n1){}
%\drawbpedge(t3,70,22,n1,110,22){}
%\drawline[AHnb=1,arcradius=1](113,17.5)(113,29)(5,29)(5,17.5)

%\drawloop[ELside=l,loopCW=y, loopdiam=6](n4){$1$}

%\drawloop[ELside=l,loopCW=y](nk){$0,1$}

%\drawedge[dash={1}0](n3bis,nkbis){$0,1$}

\end{picture}
}
    \hrule
      \caption{The result of the computation of $\BestPaths(t_0,v_0,u_0,\psi_0)$. \label{fig:alg-bestpaths}}
  \end{center}
\end{figure*}
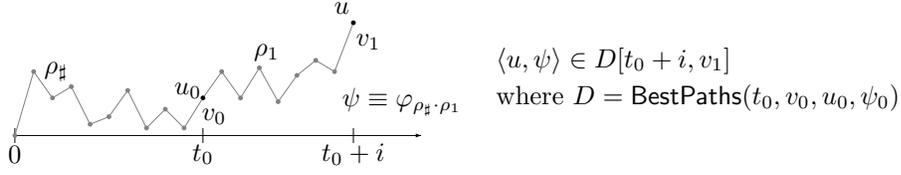

Our algorithm uses the procedure $\BestPaths(t_0,v_0,u_0,\psi_0)$ (shown as Algorithm~\ref{alg:best-path}) that computes 
the $\succeq$-maximal pairs $\tuple{u, \psi}$ corresponding to the paths $\rho_1$ 
of length $1,2, \dots, \abs{V}$ 
that start at time $t_0$ in vertex $v_0$ (see \figurename~\ref{fig:alg-bestpaths}), 
and that prolong a path $\rho_{\sharp}$ with sum of weight $u_0$ and constraint 
$\psi_0$ on $M$ (where $u$ is the sum of weights along $\rho_{\sharp} \cdot \rho_1$, 
and $\psi \equiv \varphi_{\rho_{\sharp} \cdot \rho_1}$). 
We give a precise statement of this result in Lemma~\ref{lem:bestpaths}.

\begin{lemma}[Correctness of $\BestPaths$]\label{lem:bestpaths}
Let $\rho_{\sharp}$ be a finite path of length $t_0$, that ends in state $\ends(\rho_{\sharp}) = v_0$
with sum of weight $u_0$ and associated constraint $\psi_0$ on $M$.
Let $D = \BestPaths(t_0,v_0,u_0,\psi_0)$. Then, 
\begin{itemize}
\item for all $0 \leq i \leq \abs{V}$,
for all $v_1 \in V$, for all pairs $\tuple{u, \psi} \in D[t_0+i,v_1]$, there
exists a path $\rho_1$ of length $i$ with $\starts(\rho_1) = v_0$ and $\ends(\rho_1) = v_1$,
such that 
\begin{itemize}
\item $u$ is the sum of weights of the path $\rho_{\sharp} \cdot \rho_1$, and 
\item $\psi \equiv \varphi_{\rho_{\sharp} \cdot \rho_1}$ is the constraint on $M$
associated with the path $\rho_{\sharp} \cdot \rho_1$;
\end{itemize}

\item
for all paths $\rho_1$ of length $i \leq \abs{V}$ such that $\starts(\rho_1) = v_0$
and $\ends(\rho_1) = v_1$,
there exists a pair $\tuple{u', \psi'} \in D[t_0+i,v_1]$ such that 
$\tuple{u', \psi'} \succeq \tuple{u, \psi}$ where 
\begin{itemize}
\item $u$ is the sum of weights of the path $\rho_{\sharp} \cdot \rho_1$, and 
\item $\psi \equiv \varphi_{\rho_{\sharp} \cdot \rho_1}$ is the constraint on $M$
associated with the path $\rho_{\sharp} \cdot \rho_1$.
\end{itemize}
\end{itemize}
\end{lemma}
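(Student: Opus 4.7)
The plan is to proceed by induction on $i$, the length of the extension $\rho_1$, establishing both bullets of the lemma simultaneously: soundness (every stored pair comes from a genuine extension) and completeness (every extension is dominated by a stored pair). The base case $i=0$ is immediate from the initialization: $D[t_0,v_0]$ contains exactly $\tuple{u_0,\psi_0}$, which corresponds to the empty extension $\rho_1=\varepsilon$, and $D[t_0,v]$ is empty for $v\neq v_0$, matching the fact that there is no length-$0$ path from $v_0$ to any other vertex.

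For the inductive step, assume the statement for $i-1$. Soundness is the straightforward direction: each pair placed into $D[t_0+i,v]$ is built from some $\tuple{u_1,\psi_1}\in D[t_0+i-1,v_1]$ with $(v_1,v)\in E$, so by the inductive hypothesis we already have a witnessing path $\rho_1'$ of length $i-1$ from $v_0$ to $v_1$; appending the edge $(v_1,v)$ gives a length-$i$ path whose cumulative weight and associated constraint are exactly what the algorithm stores. Since the $\ceil{\cdot}$ operation only discards pairs that are dominated by others already in the set, soundness is preserved by this pruning step.

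For completeness, given an arbitrary path $\rho_1$ of length $i\leq\abs{V}$ from $v_0$ to $v_1$, split it as $\rho_1=\rho_1'\cdot (v_1',v_1)$. The inductive hypothesis applied to $\rho_1'$ yields a pair $\tuple{u'',\psi''}\in D[t_0+i-1,v_1']$ that dominates the pair of $\rho_\sharp\cdot\rho_1'$. The algorithm then extends $\tuple{u'',\psi''}$ by the edge $(v_1',v_1)$ and inserts the resulting pair into the (pre-$\ceil{\cdot}$) version of $D[t_0+i,v_1]$. I would then invoke a monotonicity property in the spirit of Lemma~\ref{lem:constraint-prefered}: prolonging two pairs with the \emph{same} edge preserves $\succeq$, because adding a common weight to the sums preserves the $\geq$ comparison, and the fresh conjunct added to the constraint is $u\geq M(t-T)$ at a common time $t$, hence weaker on the side with the larger sum. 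This shows the pre-pruning table contains a pair dominating the one for $\rho_\sharp\cdot\rho_1$. Finally, I would observe that pruning by $\ceil{\cdot}$ never breaks completeness: in any finite set of pairs, every element is dominated by some $\succeq$-maximal element, so the pruned table still contains a dominating pair.

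\textbf{Main obstacle.} The delicate point is the monotonicity claim used in the inductive step: I must verify that appending the same edge to two $\succeq$-comparable pairs preserves the preference order even though the constraint is updated with a new conjunct that depends on the running sum and the current time. The key observation is that if $u''\geq u$ (with the same time $t$), then the conjunct $u''+w\geq M(t-T)$ is a logically weaker linear constraint on $M$ than $u+w\geq M(t-T)$ at each branch of the sign of $t-T$; combined with $\psi''$ already implying the old version of $\psi$, this yields that the updated constraint on the dominating side is still implied by the updated constraint on the dominated side, closing the induction.
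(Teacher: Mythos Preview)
Your proposal is correct and follows essentially the same route as the paper: induction on $i$, with the base case handled by the initialization, soundness carried by the inductive hypothesis plus the fact that $\ceil{\cdot}$ only removes pairs, and completeness obtained by splitting off the last edge and invoking a one-step monotonicity of $\succeq$ under common extensions. The paper phrases this last step as an appeal to Lemma~\ref{lem:constraint-prefered}, whereas you prove the needed single-edge instance inline; your argument is fine, and in fact the case split on the sign of $t-T$ is unnecessary since $u''+w\geq u+w$ already gives $(u+w\geq M(t-T))\to(u''+w\geq M(t-T))$ for any value of $M(t-T)$.
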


%%\begin{longversion}
\begin{proof}
For the first item, the proof is by induction on $i$. The case $i=0$ holds since
$D[t_0,v_1]$ is nonempty only for $v_1 = v_0$ (lines~\ref{alg:init-begin}-\ref{alg:init-end}
of Algorithm~\ref{alg:best-path}), and we can take for $\rho_1$ the empty
path since then $D[t_0,v_0] = \{ \tuple{u_0, \psi_0} \}$ contains the pair associated
with $\rho_{\sharp} = \rho_{\sharp} \cdot \rho_1$. 

For the inductive case, consider length $i \geq 1$ and assume that the result holds for length $i-1$.
Then for all pairs $\tuple{u_1, \psi_1} \in D[t_0+i-1,v_1]$ where $v_1 \in V$ 
(see also line~\ref{alg:best-path-for-D} of Algorithm~\ref{alg:best-path}),
there exists a path $\rho_1$ of length $i-1$ such that $\tuple{u_1, \psi_1}$ is the pair associated 
with $\rho_{\sharp} \cdot \rho_1$. It is easy to see that the pair $\tuple{u, \psi}$ 
added to $D[t_0 + i,v]$ at line~\ref{alg:best-path-add-D} of Algorithm~\ref{alg:best-path}
is associated with the path 
$\rho_{\sharp} \cdot \rho_1 \cdot (v_1,v)$ where
$u = u_1 + w(v_1,v)$ and $\psi \equiv \psi_1 \land (r \geq M \cdot (t-T))$
with $t = t_0 + i - 1 = \abs{\rho_{\sharp} \cdot \rho_1 \cdot (v_1,v)} - 1$. 
Since the assignment at line~\ref{alg:best-path-max-D} of Algorithm~\ref{alg:best-path}
can only remove pairs from $D[t_0 + i,v]$, the result follows.

For the second item, the result follows from similar arguments as above, 
a proof by induction on $i$ using Lemma~\ref{lem:constraint-prefered}, and 
the fact that the algorithm explores all successors $v$ of each vertex $v_1$ 
that ends a path associated with a pair $\tuple{u_1, \psi_1} \in D[t_0+i-1,v_1]$.
%\qed
\end{proof}
%%\end{longversion}

As we know that simple lassos are sufficient for optimal value (Lemma~\ref{lem:simple-lasso-suffice}), 
our algorithmic solution is to explore finite paths from the initial vertex,
until a loop is formed. Thus it is sufficient to explore paths of length 
at most $\abs{V}$. However, given a simple lasso $\rho_{\!A}^{} \cdot \rho_C^{\omega}$, 
it is not sufficient that the finite path $\rho_{\!A}^{} \cdot  \rho_{C}^{}$ 
lies above a line $M \cdot (t-T)$ (where $M$ satisfies the constraint $\psi_{\!AC}^{}$ associated with 
$\rho_{\!A}^{} \cdot  \rho_{C}^{}$)
to ensure that the value of the lasso $\rho_{\!A}^{} \cdot \rho_C^{\omega}$
is at least~$0$. The reason is that by repeating the cycle $\rho_{C}^{}$
several times, the path may eventually cross the line $M \cdot (t-T)$. 
We show (in Lemma~\ref{lem:constraint-cycle}) that this cannot happen if the average weight $M_C$ of the cycle 
is greater than the slope of the line (i.e., $M_C \geq M$).

\begin{algorithm}[t]
\caption{$\ExistsPositivePath(v_0)$}
\begin{flushleft}
\label{alg:exists-positive-path}
 \AlgData{$v_0$ is an initial vertex.}
 \AlgResult{$\true$ iff there exists a path from $v_0$ with expected utility at least $0$.}
 \Begin{
	\nl $A \gets \BestPaths(0,v_0,0,\true)$ \label{alg:exists-positive-path-first-call}\;
        \nl \For{$i = 0, \dots, \abs{V}$\negskip \label{alg:exists-positive-path-for-i}} 
        {
          \nl \For{$\hat{v} \in V$ and $\tuple{u_1,\psi_1} \in A[i,\hat{v}]$\negskip \label{alg:exists-positive-path-for-pair1}}    %\label{alg:exists-positive-path-for-hatv}
          %{
             %\nl \For{$\tuple{u_1,\psi_1} \in A[i,\hat{v}]$\negskip }
             {
                \nl $C \gets \BestPaths(i,\hat{v},u_1,\psi_1)$ \label{alg:exists-positive-path-second-call}  \;
                \nl \For{$j = 1, \dots, \abs{V}-i$\negskip \label{alg:exists-positive-path-for-j}} 
                {
                   \nl \For{$\tuple{u_2,\psi_2} \in C[i+j,\hat{v}]$\negskip \label{alg:exists-positive-path-for-pair2}}
                   {
                      \nl \lIf{$\psi_2 \land \frac{u_2 - u_1}{j} \geq M \text{ is satisfiable} $}{\KwRet{$\true$}} \label{alg:exists-positive-path-return-true} \;
                   }
                }
             }
          %}
        }
        \nl \KwRet{$\false$}\; 
 }
\end{flushleft}
\end{algorithm}

\begin{lemma}\label{lem:constraint-cycle}
Given a lasso $\rho_{\!A}^{} \cdot \rho_C^{\omega}$, let $\psi_{\!AC}^{}$ be the symbolic constraint
on $M$ associated with the finite path $\rho_{\!A}^{} \cdot  \rho_{C}^{}$, and let $M_C$
be the average weight of the cycle $\rho_{C}^{}$.
The lasso $\rho_{\!A}^{} \cdot \rho_C^{\omega}$ has value at least $0$ if and only if
the formula $\psi_{\!AC}^{} \land (M_C \geq M)$ is satisfiable.
\end{lemma}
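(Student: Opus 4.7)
The plan is to prove the two directions of the equivalence separately, relying on Lemma~\ref{lem:exists-slope} as the bridge between the value of a path and the existence of a supporting line through $(T,0)$. Let $u$ denote the utility sequence of $\rho_{\!A}^{} \cdot \rho_C^{\omega}$, and set $L = \abs{\rho_{\!A}^{} \cdot \rho_{C}^{}}$. By construction, $\psi_{\!AC}^{}$ is exactly the conjunction $\bigwedge_{0 \leq i \leq L-1} (u_i \geq M \cdot (i-T))$, so it captures precisely the supporting-line condition for time points up to $L-1$.

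For the \emph{if} direction, assume some $M^{\ast}$ satisfies $\psi_{\!AC}^{} \land (M_C \geq M)$. I would show that $u_t \geq M^{\ast} \cdot (t-T)$ for every $t \geq 0$, which by Lemma~\ref{lem:exists-slope} gives value at least $0$. For $t \leq L-1$ this is exactly $\psi_{\!AC}^{}$. For $t \geq L$, write $t = \abs{\rho_{\!A}^{}} + k \cdot \abs{\rho_{C}^{}} + r$ with $0 \leq r < \abs{\rho_{C}^{}}$ and $k \geq 1$, so that $u_t = u_{t - k \cdot \abs{\rho_{C}^{}}} + k \cdot M_C \cdot \abs{\rho_{C}^{}}$ (since each traversal of $\rho_{C}^{}$ adds $S_C = M_C \cdot \abs{\rho_{C}^{}}$). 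Since $t - k\abs{\rho_{C}^{}} \leq L-1$, the inductive estimate combined with $M_C \geq M^{\ast}$ gives
\[
u_t \;\geq\; M^{\ast}\bigl(t - k\abs{\rho_{C}^{}} - T\bigr) + k \cdot M_C \cdot \abs{\rho_{C}^{}} \;\geq\; M^{\ast}(t-T) + k \abs{\rho_{C}^{}}(M_C - M^{\ast}) \;\geq\; M^{\ast}(t-T),
\]
which is the required inequality.

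For the \emph{only if} direction, assume the lasso has value at least $0$. By Lemma~\ref{lem:exists-slope} there is some $M^{\ast}$ with $u_t \geq M^{\ast}(t-T)$ for all $t \geq 0$. Restricting to $t \in \{0,\dots,L-1\}$ immediately yields $\psi_{\!AC}^{}(M^{\ast})$. It remains to argue $M_C \geq M^{\ast}$. Suppose towards contradiction that $M_C < M^{\ast}$. Consider the times $t_k = \abs{\rho_{\!A}^{}} + k\abs{\rho_{C}^{}}$: by the same cycle accounting as above, $u_{t_k} = u_{\abs{\rho_{\!A}^{}}} + k \cdot M_C \cdot \abs{\rho_{C}^{}}$, while the line value is $M^{\ast}(t_k - T) = M^{\ast}(\abs{\rho_{\!A}^{}} - T) + k \cdot M^{\ast} \cdot \abs{\rho_{C}^{}}$. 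The gap $u_{t_k} - M^{\ast}(t_k - T)$ equals a constant plus $k \cdot \abs{\rho_{C}^{}} \cdot (M_C - M^{\ast})$, and since $M_C - M^{\ast} < 0$, this tends to $-\infty$ with $k$, contradicting $u_{t_k} \geq M^{\ast}(t_k - T)$. Hence $M_C \geq M^{\ast}$, so $M^{\ast}$ witnesses satisfiability of $\psi_{\!AC}^{} \land (M_C \geq M)$.

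The two directions together give the lemma. The reasoning is essentially the same cycle-accounting identity used in Lemma~\ref{lem:lasso-exists}, and the only delicate point is making sure the supporting line coming from Lemma~\ref{lem:exists-slope} is the \emph{same} $M^{\ast}$ used to verify both $\psi_{\!AC}^{}$ and the constraint $M_C \geq M$, which is why the argument is naturally phrased as satisfiability of the conjunction rather than as two separate existential statements.
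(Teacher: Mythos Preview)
Your proof is correct and follows essentially the same approach as the paper: both directions hinge on Lemma~\ref{lem:exists-slope}, with the forward direction using the cycle-periodicity identity $u_t = u_{t - k\abs{\rho_C}} + k S_C$ to propagate the line bound, and the backward direction arguing that $M_C < M^\ast$ would force $u_t$ below the line for large $t$. The paper's own proof is terser (it simply says ``it is easy to see that $M_C \geq M$, otherwise there would exist $t$ such that $u_t < M\cdot(t-T)$'' and ``by a similar argument'' for the converse), but the underlying reasoning is exactly what you have written out.
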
 

\begin{proof}
First, if the lasso $\rho_{\!A}^{} \cdot \rho_C^{\omega}$  has value at least $0$,
then by Lemma~\ref{lem:exists-slope}, there exists a slope $M \in \real$ 
such that $u_t \geq M \cdot (t-T)$ for all $t \geq 0$ (where $u_t$ is the sum
of weights at time $t$ in $\rho_{\!A}^{} \cdot \rho_C^{\omega}$). For such value of $M$,
the formula $\psi_{\!AC}^{}$ holds (by definition), and it is easy to see that $M_C \geq M$
(otherwise, there would exist $t \geq 0$ such that $u_t < M \cdot (t-T)$).
Therefore $\psi_{\!AC}^{} \land (M_C \geq M)$ is satisfiable.

Second, if the formula $\psi_{\!AC}^{} \land (M_C \geq M)$ is satisfiable, then let $M$ be a
satisfying value, and by Lemma~\ref{lem:exists-slope} and a similar argument
as above, the lasso $\rho_{\!A}^{} \cdot \rho_C^{\omega}$ has value at least $0$.
%\qed
\end{proof}

The algorithm $\ExistsPositivePath(v_0)$ 
%can be viewed as an adaptation of the Bellman-Ford algorithm where we 
explores the paths from $v_0$, and keeps the $\succeq$-preferred paths, that is those
with the largest total weight and weakest constraint on $M$. 
There may be several $\succeq$-incomparable paths of a given length $i$
that reach a given vertex $\hat{v}$, therefore we need to compute 
a \emph{set} $A[i,\hat{v}]$ of $\succeq$-incomparable pairs (line~\ref{alg:exists-positive-path-first-call} 
of Algorithm~\ref{alg:exists-positive-path}). 

Given a pair $\tuple{u_1,\psi_1} \in A[i,\hat{v}]$, the algorithm $\ExistsPositivePath$
further explores (for-loop at line~\ref{alg:exists-positive-path-for-pair1} of Algorithm~\ref{alg:exists-positive-path})
the paths from $\hat{v}$, until a cycle $\rho_{C}^{}$ of length $j$ is formed around $\hat{v}$,
with average weight $M_C = \frac{u_2 - u_1}{j}$ and associated pair $\tuple{u_2,\psi_2} \in C[i+j,\hat{v}]$ 
(line~\ref{alg:exists-positive-path-return-true} of Algorithm~\ref{alg:exists-positive-path})
such that $\psi_2 \land (M_C \geq M)$ is satisfiable. 
We claim that there exists such a cycle if and only if there exists a lasso with value at least $0$.
The claim is established in the following lemma.

\begin{lemma}[Correctness of $\ExistsPositivePath$]\label{lem:ExistsPositivePath}
There exists an infinite path from $v_0$ with value at least $0$
if and only if $\ExistsPositivePath(v_0)$ returns $\true$.
%there is a triple $\tuple{\rho, r, \varphi_{\rho}} \in D(k,v)$
%for some $0 \leq k \leq \abs{V}$ and $v \in V$ such that $\rho$ is not acyclic
%(thus $\rho$ ends with a simple cycle), and the average weight $M_C$ of the cycle $C$
%in $\rho$ satisfies the constraint $\varphi_{\rho}$.
\end{lemma}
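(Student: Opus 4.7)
My plan is to prove the two directions separately. In both directions the core idea is to translate between a lasso witness $\rho_A \cdot \rho_C^{\omega}$ in the graph and a triple $(i,j,\tuple{u_2,\psi_2})$ at which the algorithm fires on line~\ref{alg:exists-positive-path-return-true}, using Lemma~\ref{lem:bestpaths} (correctness of $\BestPaths$), Lemma~\ref{lem:constraint-cycle} (linking satisfiability of $\psi_{\!AC}^{} \land (M_C \geq M)$ to lassos of value at least $0$), and Lemma~\ref{lem:simple-lasso-suffice} (simple lassos suffice, so $\abs{\rho_A}+\abs{\rho_C}\le\abs{V}$, matching the iteration bounds of the algorithm).

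The ``if'' direction is the easier one. I would assume that $\ExistsPositivePath(v_0)$ returns $\true$ with witnesses $\hat v \in V$, $\tuple{u_1,\psi_1}\in A[i,\hat v]$ and $\tuple{u_2,\psi_2}\in C[i+j,\hat v]$. Applying Lemma~\ref{lem:bestpaths} to the first call $A=\BestPaths(0,v_0,0,\true)$ gives a finite path $\rho_A$ of length $i$ from $v_0$ to $\hat v$ whose associated pair is $\tuple{u_1,\psi_1}$; applying Lemma~\ref{lem:bestpaths} again to the second call $C=\BestPaths(i,\hat v,u_1,\psi_1)$ yields a continuation $\rho_C$ of length $j$ that is a cycle at $\hat v$, such that the pair associated with $\rho_A\cdot\rho_C$ is $\tuple{u_2,\psi_2}$. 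By construction the average cycle weight is $M_C=(u_2-u_1)/j$, so the guard $\psi_2 \land (M_C \geq M)$ satisfied by the algorithm is exactly the formula in Lemma~\ref{lem:constraint-cycle}, which certifies that the lasso $\rho_A\cdot\rho_C^{\omega}$ has value at least~$0$.

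The ``only if'' direction is the main obstacle. I would start from an infinite path from $v_0$ with value at least $0$, apply Lemma~\ref{lem:simple-lasso-suffice} to obtain a simple lasso $\rho_A\cdot\rho_C^{\omega}$ with $\abs{\rho_A}+\abs{\rho_C}\le\abs{V}$, and then try to match it to an iteration $(i,j,\hat v)$ of the two nested for-loops on lines~\ref{alg:exists-positive-path-for-i} and~\ref{alg:exists-positive-path-for-j}. The subtle point is that the algorithm does not necessarily keep the pair associated with $\rho_A$ itself in $A[i,\hat v]$: it keeps only a $\succeq$-maximal substitute $\tuple{u_1,\psi_1}$, corresponding to some alternative path $\rho_A'$ that is $\succeq$-preferred to $\rho_A$. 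I must ensure that this substitution does not destroy the satisfiability certificate given by Lemma~\ref{lem:constraint-cycle} for $\rho_A\cdot\rho_C^{\omega}$.

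This is where Lemma~\ref{lem:constraint-prefered} becomes crucial. It tells me that extending $\rho_A'$ by the same cycle $\rho_C$ yields a path $\rho_A'\cdot\rho_C$ that is $\succeq$-preferred to $\rho_A\cdot\rho_C$. Applying Lemma~\ref{lem:bestpaths} to the second $\BestPaths$ call then produces some $\tuple{u_2,\psi_2}\in C[i+j,\hat v]$ preferred to the pair of $\rho_A'\cdot\rho_C$. Chaining the $\succeq$-inequalities gives $u_2 \geq u_1+S_C$ and $\psi_{AC}\to\psi_2$, so $(u_2-u_1)/j \geq M_C$ while $\psi_2$ is at least as permissive as $\psi_{AC}$. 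Any value $M^*$ witnessing satisfiability of $\psi_{AC}\land(M_C\geq M)$ therefore also witnesses satisfiability of $\psi_2\land((u_2-u_1)/j\geq M)$, so the check on line~\ref{alg:exists-positive-path-return-true} succeeds and the algorithm returns $\true$. The whole argument hinges on the monotonicity of $\succeq$ under concatenation with a fixed suffix, which is exactly what Lemma~\ref{lem:constraint-prefered} provides.
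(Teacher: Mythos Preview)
Your proposal is correct and follows essentially the same approach as the paper's proof: both directions use Lemma~\ref{lem:bestpaths}, Lemma~\ref{lem:constraint-cycle}, Lemma~\ref{lem:constraint-prefered}, and Lemma~\ref{lem:simple-lasso-suffice} in the same way, and you have correctly identified the key subtlety in the ``only if'' direction---that the $\succeq$-maximal substitute $\tuple{u_1,\psi_1}$ corresponds to some alternative path $\rho_A'$, and that Lemma~\ref{lem:constraint-prefered} is exactly what lets you transport the satisfiability witness through the substitution.
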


\begin{longversion}
\begin{proof}
{\bf (First part)} 

For the first direction of the proof, if there exists an infinite path with value at least~$0$, 
then by Lemma~\ref{lem:simple-lasso-suffice}
there exists a lasso $\rho = \rho_{\!A}^{} \cdot \rho_C^{\omega}$ with value at least~$0$. 
%Let $\rho = e_0 e_1 \dots$, by Lemma~\ref{lem:exists-slope}
%there exists a slope $M \in \real$ such that $u_t \geq M \cdot (t-T)$ for all $t \geq 0$.
%Therefore it is easy to see that $M_C \geq M$ (otherwise, there would exist $t \geq 0$
%such that $u_t < M \cdot (t-T)$).

Consider the call $A \gets \BestPaths(t_0,v_0,u_0,\psi_0)$ in $\ExistsPositivePath$ 
(line~\ref{alg:exists-positive-path-first-call} of Algorithm~\ref{alg:exists-positive-path})
where $t_0 = u_0 = 0$ and $\psi_0 \equiv \true$.
Let $\hat{v} = \ends(\rho_{\!A}^{})$ and let $i$ be the length of $\rho_{\!A}^{}$ (note that $i < \abs{V}$
because $\rho_{\!A}^{}$ is acyclic).
By the correctness result of $\BestPaths$ (Lemma~\ref{lem:bestpaths} (item~$2$), where $\rho_{\sharp}$ is the empty path),
there is a pair $\tuple{u_1,\psi_1} \in A[i,\hat{v}]$ such that 
$\tuple{u_1,\psi_1} \succeq \tuple{u_A,\psi_A}$ where $\tuple{u_A,\psi_A}$
is the pair associated with $\rho_{\!A}^{}$, thus $u_1 \geq u_A$ and $\psi_A \to \psi_1$
holds. Then by Lemma~\ref{lem:bestpaths} (item~$1$),
there is a path $\rho_1$ of length $i$ from $v_0$ to $\hat{v}$, and
$u_1$ is the sum of weights of $\rho_1$, 
and $\psi_1 \equiv \varphi_{\rho_1}$ is the constraint on $M$
associated with $\rho_1$ (i.e., $\rho_1$ is preferred to $\rho_{\!A}^{}$).

Now consider the call $C \gets \BestPaths(i, \hat{v}, u_1,\psi_1)$ in $\ExistsPositivePath$ 
(line~\ref{alg:exists-positive-path-second-call} of Algorithm~\ref{alg:exists-positive-path}).
Let $\rho_{\sharp} = \rho_1$ in Lemma~\ref{lem:bestpaths} and note that 
the assumptions of that lemma are satisfied, namely $\tuple{u_1,\psi_1}$ is the pair
associated with $\rho_1$, and $\hat{v} = \ends(\rho_1)$.

Since $\rho_{\!A}^{} \cdot \rho_C^{\omega}$ is a lasso, we have 
$\starts(\rho_{C}^{}) = \ends(\rho_{C}^{}) = \ends(\rho_{\!A}^{}) = \hat{v}$ 
and let $j$ be the length of $\rho_{C}^{}$ (note that $i+j \leq \abs{V}$).
By Lemma~\ref{lem:bestpaths} (item~$2$),
there is a pair $\tuple{u_2,\psi_2} \in C[i+j,\hat{v}]$ such that 
$\tuple{u_2,\psi_2} \succeq \tuple{u_{1C},\psi_{\!1C}^{}}$ where $\tuple{u_{1C},\psi_{\!1C}^{}}$
is the pair associated with $\rho_1 \cdot \rho_{C}^{}$, thus $u_2 \geq u_{1C}$ and $\psi_{\!1C}^{} \to \psi_2$
holds, and by Lemma~\ref{lem:bestpaths} (item~$1$), there is a path $\rho_2$ of length $j$ 
such that $\starts(\rho_2) = \ends(\rho_2) = \hat{v}$ and
$u_2$ is the sum of weights of $\rho_1 \cdot \rho_2$, 
and $\psi_2 \equiv \varphi_{\rho_1 \cdot \rho_2}$ is the constraint on $M$
associated with $\rho_1 \cdot \rho_2$.

Now we show that $\psi_2 \land \frac{u_2 - u_1}{j} \geq M$ is satisfiable, and
thus $\ExistsPositivePath(v_0)$ returns $\true$ (Line~\ref{alg:exists-positive-path-return-true} of Algorithm~\ref{alg:exists-positive-path}).
First, by Lemma~\ref{lem:constraint-cycle} the formula $\psi_{\!AC}^{} \land (M_C \geq M)$
is satisfiable, and by Lemma~\ref{lem:constraint-prefered} we have $\psi_{\!AC}^{} \to \psi_{\!1C}^{}$.
We showed above that 
$\psi_{\!1C}^{} \to \psi_2$, thus $\psi_2 \land (M_C \geq M)$ is satisfiable.
Now, since the length of the cycle $\rho_{C}^{}$ (and of $\rho_2$) is $j-i$ (i.e., the length of $\rho_{\!A}^{} \cdot \rho_{C}^{}$
minus the length of $\rho_{\!A}^{}$), we have  $M_C = \frac{S_C}{j}$. Moreover
we showed above that $u_2 \geq u_{1C} = u_1 + S_C$, thus $M_C = \frac{S_C}{j}\leq \frac{u_2 - u_1}{j}$,
and since $\psi_2 \land (M_C \geq M)$ is satisfiable
it follows that $\psi_2 \land \frac{u_2 - u_1}{j} \geq M$ is satisfiable as well.

{\bf (Second part)} 

For the second direction of the proof, if $\ExistsPositivePath(v_0)$ returns $\true${},
then there exists $i,j, \hat{v}, \tuple{u_1,\psi_1}, \tuple{u_2,\psi_2}$ 
(corresponding to the for-loops in 
lines~\ref{alg:exists-positive-path-for-i},~\ref{alg:exists-positive-path-for-pair1},~\ref{alg:exists-positive-path-for-j},~\ref{alg:exists-positive-path-for-pair2} of Algorithm~\ref{alg:exists-positive-path}) such that:   %\ref{alg:exists-positive-path-for-hatv}

\begin{itemize}
\item $0 \leq i \leq \abs{V}$ and $1 \leq j \leq \abs{V}-i$,
\item $\hat{v} \in V$,
%\item $\tuple{u_1,\psi_1} \in A[i,\hat{v}]$ \\ where $A = \BestPaths(0,v_0,0,\true)$,
%\item $\tuple{u_2,\psi_2} \in C[i+j,\hat{v}]$ \\ where $C = \BestPaths(i,\hat{v},u_1,\psi_1)$, and
\item $\tuple{u_1,\psi_1} \in A[i,\hat{v}]$ and $\tuple{u_2,\psi_2} \in C[i+j,\hat{v}]$ 
 where $A = \BestPaths(0,v_0,0,\true)$, and  $C = \BestPaths(i,\hat{v},u_1,\psi_1)$, 
\item $\psi_2 \land \frac{u_2 - u_1}{j} \geq M$ is satisfiable.
\end{itemize}
\noindent Therefore, by Lemma~\ref{lem:bestpaths} (item~$1$), there exist paths $\rho_{\!A}^{}$ and $\rho_{C}^{}$ such that:
\begin{itemize}
\item $\rho_{\!A}^{}$ is a path of length $i$ from $v_0$ to $\hat{v}$,
such that $u_1$ is the sum of weights of the path $\rho_{\!A}^{}$, 
and $\psi_1 \equiv \varphi_{\rho_{\!A}^{}}$;

\item $\rho_{C}^{}$ is a path of length $j$ with $\starts(\rho_{C}^{}) = \ends(\rho_{C}^{}) = \hat{v}$
(thus $\rho_{C}^{}$ is a cycle), such that 
$u_2$ is the sum of weights of the path $\rho_{\!A}^{} \cdot \rho_{C}^{}$, 
and $\psi_2 \equiv \varphi_{\rho_{\!A}^{} \cdot \rho_{C}^{}}$ is the constraint on $M$
associated with the path $\rho_{\!A}^{} \cdot \rho_{C}^{}$.
\end{itemize}

Therefore, $u_2 - u_1$ is the sum of the weights along $\rho_{C}^{}$, and thus $M_C = \frac{u_2 - u_1}{j}$.
Since the formula $\psi_2 \land \frac{u_2 - u_1}{j} \geq M$ is satisfiable,
it follows that $\varphi_{\rho_{\!A}^{} \cdot \rho_{C}^{}} \land (M_C \geq M)$ is satisfiable,
and by Lemma~\ref{lem:constraint-cycle}, the lasso $\rho_{\!A}^{} \cdot \rho_C^{\omega}$ has value at least $0$.  
%\qed
\end{proof}
\end{longversion}

\begin{longversion}
\medskip
%\paragraph{Optimal value}
\noindent{\em Optimal value.}
We can compute the optimal value using the procedure $\ExistsPositivePath$ as follows.
From Lemma~\ref{lem:geometric-interpretation}, the optimal value is either of the form 
$\frac{u_{t_1} \cdot (t_2 - T) + u_{t_2} \cdot (T-t_1)}{t_2 - t_1}$, or
of the form $u_{t_1} + (T - t_1) \cdot \nu$ where the following bounds hold
($\nu = \inf_{t_2 \geq T} \,\, \frac{u_{t_2} - u_{t_1}}{t_2 - t_1}$):
\begin{itemize}
\item $0 \leq t_1 \leq t_2 \leq \abs{V}$
\item $0 \leq t_2 - t_1 \leq \abs{V}$
\item $0 \leq T - t_1 \leq \abs{V}$
\item $0 \leq t_2 - T \leq \abs{V}$
\item $-W \cdot \abs{V} \leq u_{t_1}, u_{t_2} \leq W \cdot \abs{V}$
\item $\nu$ is a rational number $\frac{p}{q}$ where $-W \cdot \abs{V} \leq p \leq W \cdot \abs{V}$ and $1 \leq q \leq \abs{V}$
\end{itemize}

\noindent Therefore, in both cases we get the following result.

\begin{lemma}\label{lem:optimal-value}
The optimal value belongs to the set
$${\sf ValueSpace}  = \left\{\frac{p}{q} \mid -2W \cdot \abs{V}^2 \leq p \leq 2W \cdot \abs{V}^2 \text{ and } 1 \leq q \leq \abs{V} \right\}.$$
%\begin{multline*}
%{\sf ValueSpace}  = \Big\{\frac{p}{q} \mid -2W \cdot \abs{V}^2 \leq p \leq 2W \cdot \abs{V}^2  
%\\
%\text{ and } 1 \leq q \leq \abs{V} \Big\}.
%\end{multline*}
\end{lemma}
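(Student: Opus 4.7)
The plan is to exploit the fact that optimal plans are simple lassos (Lemma~\ref{lem:simple-lasso-suffice}) to deduce that the corresponding optimal value has a rational representation with bounded numerator and denominator. First, I would take a simple lasso $\rho = A\,C^\omega$ that attains $\val(G,T)$; since every proper prefix of $AC$ is acyclic, $|A|+|C| \leq |V|$. By Lemma~\ref{lem:geometric-interpretation}, the optimum equals $f_u(T)$ where $f_u(t) = u_{t_1} + (t-t_1)\cdot\nu$ is a line whose slope $\nu$ coincides with the mean cycle weight $M_C = S_C/|C|$: the utilities along the cycle grow linearly with slope $M_C$, so the ratio $(u_{t_2}-u_{t_1})/(t_2-t_1)$ converges to $M_C$ as $t_2\to\infty$ along the cycle, and Lemma~\ref{lem:lower-line} (applied to the cycle) forbids any smaller value of $\nu$. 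In particular, $\nu = p_1/q_1$ with $q_1 = |C| \leq |V|$ and $|p_1| = |S_C| \leq W\cdot|V|$.

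Next, I would choose the representative point $(t_1,u_{t_1})$ on $f_u$ as close to $T$ as possible. Two cases arise, corresponding to Figures~\ref{fig:geometry1} and~\ref{fig:geometry2}. In the attained case, the optimum has the form $\frac{u_{t_1}(t_2-T)+u_{t_2}(T-t_1)}{t_2-t_1}$, and one uses the periodicity of the cycle part of the lasso (where $u_{t+|C|} = u_t + S_C$ and $S_C = |C|\cdot\nu$, so any shift by $|C|$ stays on $f_u$) to relocate $t_1$ and $t_2$ inside $AC$ with $T - t_1,\, t_2 - T \leq |C| \leq |V|$ and $t_2 - t_1 \leq |V|$. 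Substituting $|u_{t_1}|,|u_{t_2}| \leq W\cdot|V|$ then bounds the numerator by $2W\cdot|V|^2$ and the denominator by $|V|$. In the unattained case, the optimum is $u_{t_1} + (T-t_1)\cdot\nu = \bigl(q_1 u_{t_1} + (T-t_1)\,p_1\bigr)/q_1$, and after the analogous shift of $t_1$ into $AC$ one bounds each of $q_1 u_{t_1}$ and $(T-t_1)\,p_1$ by $W\cdot|V|^2$, again yielding a representation $p/q$ within the claimed ranges.

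The main obstacle is to justify the shifting of $t_1$ (and, where relevant, of $t_2$) to a position within distance $|V|$ of $T$ without changing the value of $f_u(T)$. The enabling observation is that any two points in the cycle part of the lasso that differ by a multiple of $|C|$ both lie on $f_u$, precisely because $\nu = M_C$. The delicate subcase is when every left-minimizer lies strictly inside the acyclic prefix $A$: here the shift of $t_1$ itself is not available, but one shows that in this regime there is nonetheless a cycle-aligned representative $(t_1',u_{t_1'})$ in $AC$ on $f_u$, obtained by extending the line $f_u$ through the first cycle position whose utility meets it, and the required bookkeeping of numerators and denominators then goes through using $|u_{\cdot}| \leq W\cdot|V|$, $|S_C| \leq W\cdot|V|$, and $|C| \leq |V|$.
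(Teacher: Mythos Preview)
Your proposal follows essentially the same route as the paper. The paper's own argument is extremely terse: it simply asserts, just before the lemma, that the optimal value has one of the two forms coming from Lemma~\ref{lem:geometric-interpretation} and then lists the bounds $0\le T-t_1\le |V|$, $0\le t_2-T\le |V|$, $0\le t_2-t_1\le |V|$, $|u_{t_i}|\le W|V|$, and $\nu=p/q$ with $|p|\le W|V|$, $q\le |V|$, from which the description of {\sf ValueSpace} follows arithmetically. You reproduce exactly this two–case split and these bounds, but you actually supply the missing justification for them, namely the periodicity $u_{t+|C|}=u_t+S_C$ on the cycle part of the simple lasso, which lets one relocate $t_1$ (and $t_2$) to within $|C|\le |V|$ of $T$ without moving off the line $f_u$.

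One small point of care: your blanket claim that $\nu=M_C$ is stated a bit early. The clean way to see it (and to make your shifting argument watertight in the attained case as well) is that $\nu\le M_C$ always holds because the ratio tends to $M_C$, while $\nu\ge M_C$ follows once you have any point $t\ge|A|$ with $u_t=f_u(t)$, since then $u_{t+|C|}=u_t+S_C\ge f_u(t+|C|)=u_t+|C|\nu$; in the attained case such a $t$ is the right minimizer $t_2\ge T\ge|A|$. With this in hand, the ``delicate subcase'' where the left-minimizer sits in the acyclic prefix is handled exactly as you outline: any cycle-part point on $f_u$ (e.g.\ $t_2$ shifted back by multiples of $|C|$) can serve as the representative close to $T$. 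This is precisely the content the paper leaves implicit in its list of bounds.
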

%Optimal value is in the set $\{ \frac{p}{q} \mid -Wn^2 \leq p \leq Wn^2 \land 1 \leq q \leq n \}$.

Given a value $\frac{p}{q}$, we can decide if there exists a path with expected value
at least $\frac{p}{q}$ by subtracting $\frac{p}{q \cdot T}$ from all the weights the graphs,
and asking if there exists a path with expected value at least $0$ in the modified graph.
Indeed, if we define $w'(e) = w(e) + \eta$ for all edges $e \in E$, 
then for all paths~$\rho$, if $u$ is the sequence of utilities along $\rho$ according to $w$, 
and $u'$ is the sequence of utilities along $\rho$ according to $w'$, then
$$\sum_i p_i \cdot u'_{i}  = \sum_i p_i \cdot (u_{i} + \eta \cdot i) = \eta \cdot \sum_i p_i \cdot i + \sum_i p_i \cdot u_{i} = T\cdot \eta + \sum_i p_i \cdot u_{i},$$
\begin{comment}
\begin{align*}
\sum_i p_i \cdot u'_{i}  & = \sum_i p_i \cdot (u_{i} + \eta \cdot i)  \\
                         & = \eta \cdot \sum_i p_i \cdot i + \sum_i p_i \cdot u_{i}   \\
                         & = T\cdot \eta + \sum_i p_i \cdot u_{i},  \\
\end{align*}
\end{comment}
%\begin{multline*}
%\sum_i p_i \cdot u'_{i}  = \sum_i p_i \cdot (u_{i} + \eta \cdot i)  
%\\
%                        = \eta \cdot \sum_i p_i \cdot i + \sum_i p_i \cdot u_{i} = T\cdot \eta + \sum_i p_i \cdot u_{i}, 
%\end{multline*}

thus the value of the path is shifted by $T\cdot \eta$.
Then it follows from Lemma~\ref{lem:optimal-value} that the optimal 
value can be computed by a binary search using $O(\abs{{\sf ValueSpace}}) = O(\log(W \cdot \abs{V}))$ calls
to $\ExistsPositivePath$.

\medskip
%\paragraph{Optimal path}
\noindent{\em Optimal path.}
An optimal path can be constructed by a slight modification of the algorithm.
In $\BestPaths$, we can maintain a path associated to each pair in $D$
as follows: the empty path is associated to the pair $\tuple{u_0,\psi_0}$
added at line~\ref{alg:init-begin} of Algorithm~\ref{alg:best-path}, and
given the path $\rho_1$ associated with the pair $\tuple{u_1,\psi_1}$ 
(line~\ref{alg:best-path-for-D} of Algorithm~\ref{alg:best-path}),
we associate the path $\rho_1 \cdot (v_1,v)$ with the pair $\tuple{u,\psi}$
added to $D$ at line~\ref{alg:best-path-add-D} of Algorithm~\ref{alg:best-path}.
It is easy to see that for every pair $\tuple{u,\psi}$ in $D$, 
the associated path can be used as the path $\rho_1$ in Lemma~\ref{lem:bestpaths} (item~$1$).
Therefore, when $\ExistsPositivePath(v_0)$ returns $\true$ (line~\ref{alg:exists-positive-path-return-true}
of Algorithm~\ref{alg:exists-positive-path}), we can output the path $\rho_1 \cdot \rho_2^\omega$
where $\rho_i$ is the path associated with the pair $\tuple{u_i,\psi_i}$ ($i= 1,2$).

\end{longversion}

\begin{shortversion}
\medskip
%\paragraph{Optimal value and plan.} 
\noindent{\em Optimal value and plan.}
The optimal value can be computed by a binary search using $O(\log(W \cdot \abs{V}^3))$ calls
to $\ExistsPositivePath$.
An optimal path can be constructed by a slight modification of the algorithm.
In $\BestPaths$, we can maintain a path associated to each pair in $D$
as follows: the empty path is associated to the pair $\tuple{u_0,\varphi_0}$
added at line~\ref{alg:init-begin} of Algorithm~\ref{alg:best-path}, and
given the path $\rho_1$ associated with the pair $\tuple{u_1,\varphi_1}$ 
(line~\ref{alg:best-path-for-D} of Algorithm~\ref{alg:best-path}),
we associate the path $\rho_1 \cdot (v_1,v)$ with the pair $\tuple{u,\varphi}$
added to $D$ at line~\ref{alg:best-path-add-D} of Algorithm~\ref{alg:best-path}.
It is easy to see that for every pair $\tuple{u,\varphi}$ in $D$, 
the associated path can be used as the path $\rho_1$ in Lemma~\ref{lem:bestpaths} (item~$1$).
Therefore, when $\ExistsPositivePath(v_0)$ returns $\true$ (line~\ref{alg:exists-positive-path-return-true}
of Algorithm~\ref{alg:exists-positive-path}), we can output the path $\rho_1 \cdot \rho_2^\omega$
where $\rho_i$ is the path associated with the pair $\tuple{u_i,\varphi_i}$ ($i= 1,2$).
\end{shortversion}

\subsubsection{Complexity analysis}
We present the running-time analysis of $\ExistsPositivePath$ (Algorithm~\ref{alg:exists-positive-path}).
The key challenge is to bound the number of $\succeq$-incomparable pairs. 
The number of such pairs corresponds to the number of simple paths in a graph, and hence can be
exponential in general. 
Our main argument is to establish a polynomial bound on the number of $\succeq$-incomparable pairs.

To analyze the complexity of the algorithm, we need to bound the
size of the array $D$ computed by $\BestPaths$ (Algorithm~\ref{alg:best-path}).
We show that there cannot be too many different pairs in a given 
entry $D[t_0+i,v_1]$. By Lemma~\ref{lem:bestpaths}, to each pair $\tuple{u,\psi} \in D[t_0+i,v_1]$
we can associate a path $\rho$ of length $i$ with $\starts(\rho) = v_0$ and $\ends(\rho) = v_1$,
such that (our analysis holds for all paths $\rho_{\sharp}$ in Lemma~\ref{lem:bestpaths},
and as $\rho_{\sharp}$ plays no role in the argument, we proceed with empty $\rho_{\sharp}$
for simplicity of the exposition\footnote{The proof can be carried out 
analogously by considering $\rho_{\sharp} \cdot \rho$ instead of $\rho$ with 
heavier notation.}):
\begin{itemize}
\item $u$ is the sum of weights of the path $\rho$, and 
\item $\psi \equiv \varphi_{\rho}$ is the constraint on $M$
associated with the path $\rho$.
\end{itemize}

It is important to note that the constraint $\psi$ is determined by (at most) 
two points $t_L, t_R$ in $\rho$ (see also \figurename~\ref{fig:preferred} and \figurename~\ref{fig:preferred2}),
one before~$T$ and one after~$T$,
namely 
$$ \psi \equiv \big(u_{t_L} \geq M \cdot (t_L-T) \big) \land \big(u_{t_R} \geq M \cdot (t_R-T) \big) $$
where $t_L = \argmax_{0 \leq i \leq T}(\frac{u_i}{i-T})$
and $t_R = \argmin_{T \leq i \leq \abs{\rho}}(\frac{u_i}{i-T})$.

Note that the first constraint in the above expression is a lower bound on $M$
since $t_L \leq T$,
and the second constraint (which may not exist, if $\abs{\rho} < T$)
is an upper bound on $M$. For simplicity of exposition, we assume that $\abs{\rho} \geq T$.
The case $\abs{\rho} < T$ is handled analogously ($t_R$ is undefined in that case).

Define the \emph{down-point} of $\rho = e_0 e_1 \dots e_{\abs{\rho}-1}$ as $\downpoint(\rho) = \tuple{t_L,v_L,t_R,v_R}$
where $t_L$ and $t_R$ are defined above, and $v_L = \ends(e_0 e_1 \dots e_{t_L})$,
and $v_R = \ends(e_0 e_1 \dots e_{t_R})$ (for $\abs{\rho} < T$, the down-point
of $\rho$ is $\downpoint(\rho) = \tuple{t_L,v_L}$).

Decompose $\rho$ into $\rho_L = e_0 e_1 \dots e_{t_L}$, $\rho_M = e_{t_L + 1} e_{t_L + 2} \dots e_{t_R}$,
and $\rho_R = e_{t_R + 1} e_{t_R + 2} \dots e_{\abs{\rho}-1}$.
We claim that the paths corresponding to two different pairs in $D[t_0+i,v_1]$
have different down-points, which will give us a polynomial bound on the size of $D[t_0+i,v_1]$.
Intuitively, and towards contradiction, if two down-points are the same in two
different paths, then we can select the best pieces among $(\rho_L,\rho_M,\rho_R)$
from the two paths and construct a path that is preferred,
and thus whose pair is in $D[t_0+i,v_1]$ and subsumes some pair in $D[t_0+i,v_1]$,
which is a contradiction since the elements of $D[t_0+i,v_1]$
are $\succeq$-maximal.

\begin{lemma}\label{lem:downpoints}
Let $D = \BestPaths(t_0,v_0,u_0,\psi_0)$ and $1 \leq i\leq \abs{V}$.
For all pairs $\tuple{u,\psi}, \tuple{u',\psi'} \in D[t_0+i,v_1]$, let $\rho, \rho'$ be their
respective associated path; if $\tuple{u,\psi} \neq \tuple{u',\psi'}$,
then the down-points of $\rho$ and $\rho'$ are different % $\tuple{t_L,v_L,t_R,v_R} \neq \tuple{t'_L,v'_L,t'_R,v'_R}$.
($\downpoint(\rho) \neq \downpoint(\rho')$).
\end{lemma}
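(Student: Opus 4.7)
I plan a contradiction argument. Suppose two distinct pairs $\tuple{u,\psi}$ and $\tuple{u',\psi'}$ in $D[t_0+i,v_1]$ have associated paths $\rho,\rho'$ with the common down-point $\tuple{t_L,v_L,t_R,v_R}$. Since both paths pass through $v_L$ at time $t_L$ and through $v_R$ at time $t_R$, I may splice them freely: write $\rho = \rho_L \cdot \rho_M \cdot \rho_R$ and $\rho' = \rho'_L \cdot \rho'_M \cdot \rho'_R$, denote the segment sums by $s_L,s_M,s_R$ and $s'_L,s'_M,s'_R$, and recall that $\psi$ is the interval $\bigl[\tfrac{s_L}{t_L-T},\tfrac{s_L+s_M}{t_R-T}\bigr]$ of admissible slopes $M$ (and likewise $\psi'$). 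I will produce, by picking pieces from the two paths, a spliced path $\hat\rho$ of length $i$ from $v_0$ to $v_1$ whose pair strictly $\succ$-dominates one of $\tuple{u,\psi},\tuple{u',\psi'}$; by Lemma~\ref{lem:bestpaths} there would then exist a pair in $D[t_0+i,v_1]$ strictly dominating that pair, contradicting $\succeq$-maximality.

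Assume WLOG $s_L \geq s'_L$. If also $s_L+s_M \geq s'_L+s'_M$, so that $\psi' \to \psi$, then when $u \geq u'$ one has $\tuple{u,\psi} \succ \tuple{u',\psi'}$ directly; otherwise $u < u'$, and I take $\hat\rho := \rho_L \cdot \rho_M \cdot \rho'_R$. The prefix up to $t_R$ is unchanged from $\rho$, so $\hat u_j = u_j$ for $j \leq t_R$, and the lower bound of $\hat\psi$ equals $\tfrac{s_L}{t_L-T}$; for $j > t_R$ one has $\hat u_j = u'_j + \Delta$ with $\Delta := (s_L+s_M)-(s'_L+s'_M) \geq 0$, whence $\tfrac{\hat u_j}{j-T} \geq \tfrac{s'_L+s'_M}{t_R-T} + \tfrac{\Delta}{j-T} \geq \tfrac{s'_L+s'_M}{t_R-T}$. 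Hence $\hat\psi \supseteq \psi'$ and $\hat u = u' + \Delta \geq u'$, with the strict inequality $\tuple{\hat u,\hat\psi} \succ \tuple{u',\psi'}$ following from $\Delta > 0$ or (when $\Delta = 0$ and $s_L > s'_L$) from strict containment of lower bounds; the residual subcase $\Delta = 0, s_L=s'_L$ forces $\psi=\psi'$ together with $u<u'$, which gives $\tuple{u',\psi'} \succ \tuple{u,\psi}$ directly.

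The remaining case $s_L+s_M<s'_L+s'_M$ is handled by $\hat\rho := \rho_L \cdot \rho'_M \cdot \rho_R$, giving $\hat u = u + (s'_M-s_M) > u$. The key ratio manipulation is on $(t_L,T)$: there $\hat u_j = u'_j + (s_L-s'_L)$, and because $j-T$ and $t_L-T$ are both negative with $j>t_L$, one has $\tfrac{s_L-s'_L}{j-T} \leq \tfrac{s_L-s'_L}{t_L-T}$; combined with $\tfrac{u'_j}{j-T} \leq \tfrac{s'_L}{t_L-T}$ this yields $\tfrac{\hat u_j}{j-T} \leq \tfrac{s_L}{t_L-T}$, so the lower bound of $\hat\psi$ is at most that of $\psi$. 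A parallel (easier) computation on $(T,t_R]$ and $(t_R,i-1]$ shows the upper bound of $\hat\psi$ is at least $\tfrac{s_L+s_M}{t_R-T}$, so $\hat\psi \supseteq \psi$ and $\tuple{\hat u,\hat\psi} \succ \tuple{u,\psi}$. The main obstacle throughout is these ratio manipulations: because $j-T$ changes sign at $T$, translating between pointwise bounds of the form $\hat u_j \geq (\text{line})(j)$ and the interval representation of $\hat\psi$ requires careful case splitting on each side of $T$, and this is precisely where the assumption $s_L \geq s'_L$ is used crucially.
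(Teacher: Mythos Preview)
Your proposal is correct and rests on the same core idea as the paper's proof: splice the two paths at the shared down-point vertices $(t_L,v_L)$ and $(t_R,v_R)$ to manufacture a path whose pair strictly $\succeq$-dominates one of $\tuple{u,\psi},\tuple{u',\psi'}$, contradicting $\succeq$-maximality of $D[t_0+i,v_1]$ via Lemma~\ref{lem:bestpaths}.

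The paper organizes the same argument more economically. Instead of your case split on the signs of $s_L-s'_L$ and $(s_L+s_M)-(s'_L+s'_M)$, it proves the two equalities $u_{t_L}=u'_{t_L}$ and $u_{t_R}=u'_{t_R}$ separately, each by a single splice: if $u_{t_L}>u'_{t_L}$ then $\bar\rho:=\rho_L\cdot\rho'_M\cdot\rho'_R$ has $\bar u_j=u'_j+(s_L-s'_L)>u'_j$ for all $j\geq t_L$, so $\bar u>u'$ and $\varphi_{\rho'}\to\varphi_{\bar\rho}$ pointwise, giving strict domination of $\tuple{u',\psi'}$; the argument for $u_{t_R}$ is symmetric. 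From these two equalities $\psi=\psi'$ is immediate (the constraint is determined by the values at $t_L,t_R$), and then $u=u'$ follows from maximality. This sidesteps your ratio manipulations entirely: the shift $\bar u_j=u'_j+\text{const}$ already implies the constraint weakens, without needing to locate the new extremizers or split on the sign of $j-T$. Your approach buys nothing extra; it is the same proof with the bookkeeping made explicit.
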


\begin{longversion}
\begin{proof}
We prove the contrapositive, for $\abs{\rho} \geq T$ (the case $\abs{\rho} < T$ is simpler, and proved analogously). 
Assume that $\tuple{t_L,v_L,t_R,v_R} = \tuple{t'_L,v'_L,t'_R,v'_R}$ (the down-points are equal),
and we show that then $\tuple{u,\psi} = \tuple{u',\psi'}$. 

First, since $t_L = t'_L$ and $v_L = v'_L$, we claim that the sum of weights 
at time $t_L$ is the same in $\rho$ and in $\rho'$, that is $u_{t_L} = u'_{t_L}$,
and therefore, $\varphi_{\rho_{L}} \equiv \varphi_{\rho'_{L}}$ (remember that the 
constraint $\psi$ associated with $\rho$ and $\rho'$ is determined by $t_L = t'_L$). 
The proof of this claim is by contradiction. Assume that $u_{t_L} > u'_{t_L}$ (the argument
for the case $u_{t_L} < u'_{t_L}$ is analogous). Consider the path $\overline{\rho} = \rho_{L} \cdot \rho'_{M} \cdot \rho'_{R}$,
and note that $\overline{\rho}$ is indeed a path\footnote{Note that if $\rho$ 
and $\rho'$ have a common prefix (such as $\rho_{\sharp}$), then $\overline{\rho}$ also
has the same prefix.}, 
as $\ends(\rho_{L}) = v_L = v'_L = \starts(\rho'_{M})$.
Comparing $\overline{\rho}$ and $\rho'$, since $u_{t_L} > u'_{t_L}$ it is easy to see that $\bar{u} > u'$ where 
$\bar{u}$ is the sum of weights of $\overline{\rho}$, and by the same argument
we have $\psi' \to \psi_{\overline{\rho}}$. It follows that $\overline{\rho}$
is preferred to $\rho'$, and by Lemma~\ref{lem:bestpaths} the set $D[t_0+i,v_1]$ 
contains a pair $\tuple{u^* , \psi^*} \succeq \tuple{\bar{u}, \varphi_{\overline{\rho}}} \succeq \tuple{u', \psi'}$.
Since $D[t_0+i,v_1]$ is a set of $\succeq$-maximal elements (line~\ref{alg:best-path-max-D} 
of Algorithm~\ref{alg:best-path}), it follows that $\tuple{u', \psi'} \not\in D[t_0+i,v_1]$,
in contradiction with the assumption of the lemma.

Second, by an analogous argument, since $t_R = t'_R$ and $v_R = v'_R$, the sum of weights 
at time $t_R$ is the same in $\rho$ and in $\rho'$, that is $u_{t_R} = u'_{t_R}$,
and therefore, $\varphi_{\rho_{R}} \equiv \varphi_{\rho'_{R}}$. Finally 
$u = u'$ and $\psi \equiv \psi'$, which concludes the proof.
%\qed
\end{proof}
\end{longversion}

It follows from Lemma~\ref{lem:downpoints} that the size of all sets $D[t_0 + i,v_1]$
for $1 \leq i\leq \abs{V}$ and $v_1 \in V$ is at most $\abs{V}^4$, the maximum number of
different down-points.

We now show that the worst-case complexity of $\BestPaths$ and $\ExistsPositivePath$
is polynomial, and thus the optimal expected value problem is solvable in polynomial
time. 

The worst-case complexity of $\BestPaths$ is $O(\abs{V}^{10})$, as there are
two nested for-loops over $V$ (line~\ref{alg:best-path-first-for} and line~\ref{alg:best-path-second-for}
in Algorithm~\ref{alg:best-path}), in which the dominating
operation is the computation of the $\succeq$-maximal elements of $D[t_0 + i,v]$
(line~\ref{alg:best-path-max-D}), which is quadratic in the size of $D[t_0 + i,v]$,
thus in $O(\abs{V}^8)$.

The worst-case complexity of $\ExistsPositivePath$ is $O(\abs{V} \cdot \abs{V} \cdot \abs{V}^4 \cdot \abs{V}^{10}) = O(\abs{V}^{16})$,
as a product of the size of the three outermost for-loops, and
the dominating call to $\BestPaths$ (line~\ref{alg:exists-positive-path-second-call}) in $O(\abs{V}^{10})$.
Therefore we obtain Theorem~\ref{thm:algo}.

\section{Conclusion}
In this work we consider the expected finite-horizon problem. 
Our most interesting results are for the case of adversarial distribution of stopping 
times, for which we establish stationary plans are sufficient, and present polynomial-time 
algorithms.
In terms of algorithmic complexity, our main goal was to establish polynomial-time algorithms,
and we expect that better algorithms and refined complexity analysis can be obtained.

\bibliographystyle{plain}
\bibliography{biblio}

\end{document}